\algrenewcommand\algorithmicindent{0.5em}%
\newcommand{\tp}{Tokenization Paradigm }
\newcommand{\myformer}{WQ4TS }
\newcommand{\Rmnum}[1]{\expandafter\@slowromancap\romannumeral #1@}
\newtheorem{definition}{Definition}
\newtheorem{theorem}{Theorem}[section]
\newtheorem{lemma}[theorem]{Lemma}
\title{A Wave is Worth 100 Words: Investigating Cross-Domain Transferability in Time Series}
\author{%
  Xiangkai Ma\thanks{Equal Contribution}, Xiaobin Hong\footnotemark[1], Wenzhong Li\textsuperscript{\Letter}, Sanglu Lu \\
  State Key Laboratory for Novel Software Technology, Nanjing University, Nanjing, China \\
  \texttt{\small \{xiangkai.ma,xiaobinhong\}@smail.nju.edu.cn}\\ 
  \texttt{\small\{sanglu,lwz\}@nju.edu.cn}
}
\begin{document}

\maketitle

\begin{abstract}
Time series analysis is a fundamental data mining task that has made encouraging progress in many real-world scenarios. Supervised training methods based on empirical risk minimization have proven their effectiveness on specific tasks and datasets. However, the acquisition of well-annotated data is costly and a large amount of unlabeled series data is under-utilized. Due to distributional shifts across various domains and different patterns of interest across multiple tasks. The problem of cross-domain multi-task migration of time series remains a significant challenge.
To address these problems, this paper proposes a novel cross-domain pretraining method based on Wave Quantization (termed as WQ4TS), which can be combined with any advanced time series model and applied to multiple downstream tasks.
Specifically, we transfer the time series data from different domains into a common spectral latent space, and enable the model to learn the temporal pattern knowledge of different domains directly from the common space and utilize it for the inference of downstream tasks, thereby mitigating the challenge of heterogeneous cross-domains migration.
The establishment of spectral latent space brings at least three benefits, cross-domain migration capability thus adapting to zero- and few-shot scenarios without relying on priori knowledge of the dataset, general compatible cross-domain migration framework without changing the existing model structure, and robust modeling capability thus achieving SOTA results in multiple downstream tasks.
To demonstrate the effectiveness of the proposed approach, we conduct extensive experiments including three important tasks: forecasting, imputation, and classification. And three common real-world data scenarios are simulated: full-data, few-shot, and zero-shot. The proposed \myformer achieves the best performance on \textbf{87.5\%} of all tasks, and the average improvement of the metrics on all the tasks is up to \textbf{34.7\%}.
\end{abstract}
\section{Introduction} \label{section:introduction}

Time series analysis has broad real-world applications, including imputation of missing data \cite{Karmitsa2022MissingVI}, power consumption detection for production equipment \cite{Wang2022FewShotFA}, and weather forecasting \cite{Schultz2021CanDL}. Traditional end-to-end deep learning models have made significant progress in time series analysis~\cite{Zhou2023OneFA,Wang2022LearningLS,zhou2022film,seyfi2022generating,li2022generative,jeon2022gtgan}, with many popular deep neural network architectures being applied to time series modeling, such as Linear-based~\cite{Zeng2022AreTE,yi2023frequencydomain}, CNN-based~\cite{wu2023timesnet,wang2023micn}, RNN-based~\cite{Shi2015ConvolutionalLN}, Transformer-based~\cite{Nie2023ATS,zhou2022fedformer,haoyietal-informer-2021,Liu2022NonstationaryTR}, and GNN-based models~\cite{Wu2020ConnectingTD}. 
In addition, work on signal processing before the backbone based on the numerical characterization of the series has driven the development of time series forecasting, such as Seasonal-Trend Decomposition \cite{,zhou2022fedformer,wu2021autoformer,wang2023micn}, which improves the efficiency of backbone in representation extraction by capturing the complex patterns from original series in advance. 
Recently, we have witnessed the remarkable success of pre-trained foundation models \cite{Radford2018ImprovingLU,Radford2019LanguageMA,Brown2020LanguageMA,Touvron2023LLaMAOA,Touvron2023Llama2O} in Natural Language Processing (NLP), Computer Vision (CV), and Multimedia (MM). Large-scale Language Models (LLM) in particular have demonstrated impressive performance in various areas, thus many works try to build a large time series model on top of LLMs~\cite{Zhou2023OneFA,Cao2023TEMPOPG,Li2023FrozenLM,Xue2022PromptCastAN,Chang2023LLM4TSAP,Sun2023TESTTP,Jin2023TimeLLMTS,Liu2023UniTimeAL}.  

However, series with different domain information tend to be heterogeneous and their performance deteriorates rapidly when domain shifts occurs, making it difficult to deploy either specialized models trained for specific tasks and datasets \cite{Cai2022TimeSD,He2023DomainAF,Ragab2022ADATIMEAB,Jin2022DomainAF,Xu2022CDTransCT}, in complex and variable real-world scenarios. When the test set distribution and the training data are not the same, the models often fail to exhibit satisfactory inferential capability. Consider the following case, in the absence of sufficient training data, we would like the models to be pre-trained on heterogeneous datasets first, followed by a small amount of data fine-tuning in the target domain, or even inference directly on the target domain task. To overcome these challenges, there has been work looking to utilize limited data in the target domain to enable the model to adapt to the new target domain, which is referred to as the cross-domain adaptation technique.

For real-world time series analysis tasks, an inference model with strong domain adaptation capabilities is crucial. Recently, more attention has been paid to Time Series Cross Domain Migration \cite{Jin2022DomainAF,Ragab2022ADATIMEAB,He2023DomainAF,Cai2022TimeSD,Ragab2021SelfSupervisedAD}. however, existing approaches either require additional domain expertise or a unique design of the model structure to accommodate the challenges of domain migration, which makes it difficult to directly apply the existing domain adaptation research to SOTA models designed for specific tasks, which makes it challenging to deploy domain adaptive models in the real world. deployment in the real world remains challenging. Moreover, considering the impressive achievements of previous research for specific time series analysis tasks, we would like to design a general and compatible cross-domain migration framework that enables existing models to overcome the challenge of data distribution drift without changing the model structure and without relying on priori knowledge of the dataset. This is the starting point of this paper, i.e., \textbf{"a generic and compatible cross-domain migration framework"}.

Some recent studies \cite{Finder2022WaveletFM,Fang2023WhenSM,Zhang2023DiscoveringFB,Guo2023SelfSupervisedSB} applying spectral analysis techniques to time series have prompted us to think about the fact that time series in spectral space can be uniformly characterized by a set of filters with continuous frequencies, as opposed to the high degree of perturbation in the time domain space that prevails in the pattern of sequence changes, which encourages us to establish a shared spectral space and embed the time series data from different domain domains into this shared space, thus mitigating the time series domain drift phenomenon between data, and enabling existing models to overcome the challenge of data distribution migration drift without changing the model structure and without relying on priori knowledge of the dataset.

\begin{table*}[htpb]
  \caption{
  To verify the scalability of the existing models on different sampling rates (rate=10), we design two groups of supervised experiments which training on the \emph{Original} dataset (ETTh1) and the \emph{Resample} dataset (ETTh1), respectively, and show the MSE Loss calculated on the same test set. 
  }\label{tab:introduction_downsample}
  \centering
  \resizebox{0.6\columnwidth}{!}{
  \begin{small}
  \renewcommand{\multirowsetup}{\centering}
  \setlength{\tabcolsep}{1.pt}
  \renewcommand\arraystretch{1.}
  \begin{tabular}{cccccc}
    \toprule
    
    \multicolumn{1}{c}{\multirow{1}{*}{\scalebox{0.8}{Model}}} & 
    \multicolumn{1}{c}{\rotatebox{0}{\scalebox{0.8}{OneFitsAll}}} & 
    \multicolumn{1}{c}{\rotatebox{0}{\scalebox{0.8}{DLinear}}} & 
    \multicolumn{1}{c}{\rotatebox{0}{\scalebox{0.8}{PatchTST}}} & 
    \multicolumn{1}{c}{\rotatebox{0}{\scalebox{0.8}{FEDformer}}} &
    \multicolumn{1}{c}{\rotatebox{0}{\scalebox{0.8}{TimesNet}}} \\
        
    \hline
    \multirow{1}{*}{\rotatebox{0}{\scalebox{0.8}{Original Supervised}}} & 
    {\scalebox{0.78}{0.352}} & 
    {\scalebox{0.78}{0.357}} & 
    {\scalebox{0.78}{0.351}} & 
    {\scalebox{0.78}{0.448}} & 
    {\scalebox{0.78}{0.400}} \\
    
    \hline
    \multirow{1}{*}{\rotatebox{0}{\scalebox{0.8}{Resample Supervised}}} & 
    {\scalebox{0.78}{0.757}} & 
    {\scalebox{0.78}{0.722}} & 
    {\scalebox{0.78}{0.758}} & 
    {\scalebox{0.78}{0.739}} &
    {\scalebox{0.78}{1.149}} \\

    \hline
    \multirow{1}{*}{\rotatebox{0}{\scalebox{0.8}{MSE $\uparrow$ (\%)}}} & 
    {\scalebox{0.78}{115.1}} & 
    {\scalebox{0.78}{102.2}} & 
    {\scalebox{0.78}{116.0}} & 
    {\scalebox{0.78}{65.0}} & 
    {\scalebox{0.78}{187.3}} \\

    \bottomrule
  \end{tabular}
  \end{small}
}
\end{table*}

In time-domain space, the prevalent distributional drift in time series makes cross-domain migration more challenging, and existing studies tend to mitigate the heterogeneity between time series data from different domains by establishing trend-seasonal decomposition \cite{wu2021autoformer,zhou2022fedformer,wang2023micn} or multiscale decomposition \cite{Shabani2022ScaleformerIM}. However, the challenges of cross-domain migration are mainly caused by the differences between data domains:
(1) Firstly, time series data with diverse domain background knowledge tend to exhibit different characteristics, e.g., electricity consumption generally exhibits long-term trends, ECG signals have stable horizontal baselines, and oscillate frequently, and establishing multi-domain connectivity and complex sequence patterns among different datasets is difficult.
(2) Secondly, even in the same domain background, datasets from different sources may exhibit heterogeneous in their time scales, sampling intervals, periodic patterns, etc., which results in the same data instances can also face the challenge of cross-domain migration due to a priori features such as sampling rate, etc., e.g., Table \ref{tab:introduction_downsample} shows the experimental results.
(3) Thirdly, unlike natural language processed by human brain abstraction as a set of ordered values continuously sampled from the real world, the time series itself is a primitive and low-level data form that does not have any universally applicable inherent pattern, encoding the fluctuating changes into rich semantic representation is non-trivial.

\begin{figure*}[t]
\begin{center}
	\centerline{\includegraphics[width=1.0\columnwidth]{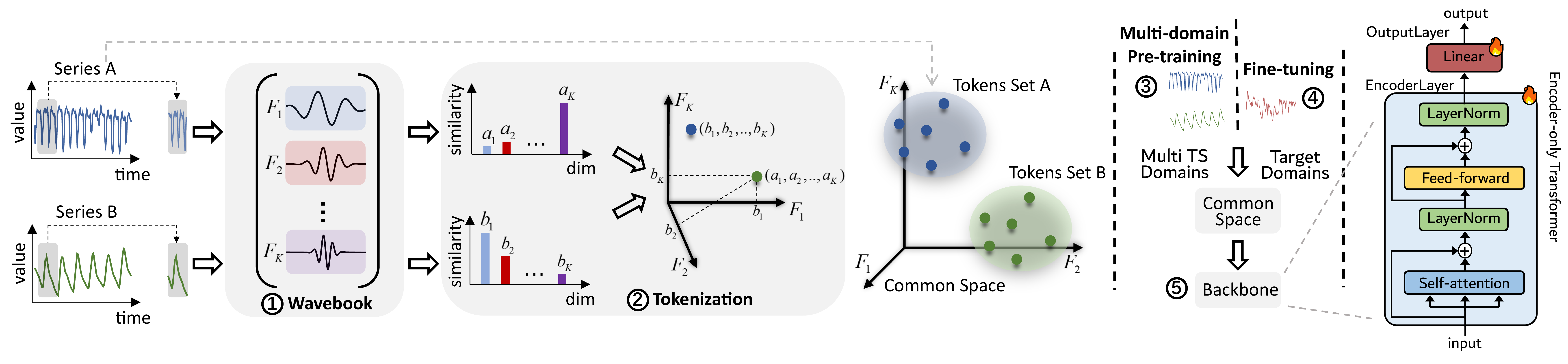}}
	\caption{
Illustration of the proposed \myformer architecture. 
\textcircled{1} The proposed \textbf{Wave Quantize Module} was utilized to establish the common spectral latent space.
\textcircled{2} Based on proposed \textbf{Tokenization Strategy}, the raw series is bijectively projected to the common spectral latent space. The distance between multiple TS domains is effectively reduced, which activates the generalization and migration capabilities of the model. Meanwhile, the generated tokens as fluctuation pattern similarities contain sufficient semantic information
\textcircled{3} Subsequently, the TS pattern knowledge in common spectral latent space will be utilized for multiple downstream tasks, by \textbf{Cross-domain Pre-training}.
\textcircled{4} Due to the Effective feature programming of the embedding technique, we design full parameters \textbf{Fine-tuning for Domain migration}.
\textcircled{5} We design the encoder-only transformer structure as the backbone, which contains a stack of EncoderLayers and OutputLayer.
}\label{fig:intro_2}
\end{center}
\end{figure*}

To address these challenges, we propose a novel framework to build a pre-trained model based on multi-domain time series data, which is illustrated in Fig.~\ref{fig:intro_2}. 
Firstly, we construct the \emph{wavebook} that consists of a set of orthogonal basis functions to form a common latent space from multiple domains.
Based on the wavebook, a \emph{tokenization} mechanism bijectively maps the input time series from diverse domains into tokens in the common spectral latent space following the ``wave as token'' principle, which mitigates the challenge of heterogeneous multiple domains. 
Subsequently, the comprehensive time series pattern knowledge of different domains will be learned by a backbone (i.e., a vanilla Transformer encoder) from the shared common embedding space with a \emph{multi-domain pre-training} process.
The latent representation was formed by calculating the inner product between the original series and each basis function of the wavebook, which implies semantic information of fluctuation pattern similarity of tokens. we can prove that the proposed wavebook and tokenization strategy is interpretable and each token has semantic information, as detailed in the section \ref{section:methods}.
For different TS domains, the above computations have no specific requirements for hyperparameters, thus addressing challenge of heterogeneous.
Therefore, each local segment corresponds to the vector of length $\lambda$, where $\lambda$ is the size of the wavebook. Since the token contains $\lambda$ item fluctuation pattern similarity as semantic information, thus addresses challenge of information pattern.

Since all tokens come from the unified common space, which alleviates the negative migration phenomenon and thus ensures that the model has a considerable advantage in the cross-domain migration phase.
In addition, the proposed \myformer reduces the representation distance between the target and source domains by projecting the raw series from the target domain into the common space, and the model also has the advantage of ``in-modality'', which encouraged us to design full parameters fine-tuning for domain adaptation.

The key contributions of our work are as follows.
\begin{itemize}
  \item This paper presents the concept of the Wave Quantize in time series for the first time, and analyzes the reasons for the slow progress of unsupervised cross-domain migration research in time series. Finally, it analyzes the differences between TS and NLP in terms of data paradigms, which serve as foundational work on establishing general compatible cross-domain migration framework and effective tokenization strategies.
  \item In this paper, we propose a novel \textbf{Wave Quantize} Strategy that advances research related to \textbf{\emph{cross-domain migration}} and common spectral latent space without changing the existing model structure and without relying on priori knowledge of the dataset.
  \item To verify that wave quantize strategy can mitigate negative migration in time series, the experiments consisted of single-domain training and multi-domain pre-training, and showed encouraging results: multi-domain pre-training would be far superior to single-domain pre-training.
  \item We performed $3$ tasks (forecasting, imputation, and classification) under $3$ data-limited scenarios (full data, few-shot, and zero-shot) are experimented to validate the effectiveness of the proposed approach. Specifically, for the forecasting and imputation tasks, extensive experiments on seven mainstream datasets; for the classification task, $35$ most representative datasets were chosen from the UCR dataset.
  In forecasting and imputation tasks, compared to the existing SOTA, the proposed \myformer achieves the best performance on \textbf{87.5\%} of all tasks, moreover, \myformer demonstrates \textbf{25.8\%} and \textbf{44.1\%} performance improvement in few-shot and zero-shot settings, respectively. In the few-shot classification task based on cross-domain migration, the proposed \myformer improves the average accuracy by \textbf{24.9\%} in all seven scenarios.
  These experiments demonstrate the potential of \myformer to be the general compatible cross-domain migration framework in time series.
\end{itemize}

The subsequent structure of this paper is as follows: Section \ref{section:related_work} summarizes the related work about the tokenization strategy, spectrum analysis and cross-domain migration on TS. Section \ref{section:methods} details the proposed wavebook and ``wave as token'' strategy. Section \ref{section:methods_5} shows the model architecture. Section \ref{section:experiments} shows the performance of forecasting, imputation, and classification tasks under three data-limited scenarios: full-data, few-shot, and zero-shot. Finally, experimental results and future research are discussed in Section \ref{section:conclusion}. 




\section{Related Work} \label{section:related_work}
We introduce the related works in terms of LLM for TS, attempts for the unified pre-trained Model, tokenization strategy in TS, and spectrum analysis in TS. 

\subsection{Cross Domain Migration in TS}
Deep neural networks trained on one domain can be poor at generalizing to another domain due to the issue of domain shift, that is, domain shift problem \cite{Wang2020BridgingPA,Zhao2020ARO,Zhang2020UnsupervisedMD,Oza2021UnsupervisedDA}. Domain adaptation (DA) methods attempt to mitigate the harmful effect of domain shift by aligning features extracted across source and target domains \cite{hong2020matchinggan,Wang2020cida}. Existing approaches mainly focus on classification tasks, where a classifier learns a mapping from a learned domain-invariant latent space to a fixed label space using source data. Consequently, the classifier depends only on common features across domains, and can be applied to the target domain \cite{Wilson2018ASO}. Unlike DA, This paper is devoted to design a novel framework to establish cross-domain connectivity between different TS datasets, enabling the knowledge learned by the backbone network from different complex sequence patterns to be migrated to multiple downstream tasks, which is known as Unsupervised Cross-domain Migration (UCM).

Recently, lots of research efforts \cite{Oza2021UnsupervisedDA,Vibashan2021MeGACDAMG} are devoted on unsupervised domain adaptation. This task aims to transfer knowledge learned from a labeled source domain to a different unlabeled target domain, and most approaches focus on aligning distributions of source and target domain and learning domain-invariant feature representations. There are mainly two levels for UDA methods: domain-level \cite{Bousmalis2017Unsupervised,Tzeng2017AdversarialDD} and category-level \cite{Kang2019ContrastiveAN,Du2021CrossDomainGD,Li2021CrossDomainAC}. Domain-level UDA mitigates the distribution divergence between the source and target domain by pulling them into the same distribution at different scale levels. Besides, some works \cite{Du2021CrossDomainGD,Li2021CrossDomainAC} focus on the fine-grained category-level label distribution alignment through an adversarial manner between the feature extractor and two domain-specific classifiers. Finally, One prevailing paradigm aims at minimizing statistical distribution measures to mitigate the distribution shift problem between the source and target domains \cite{Chen2019HoMMHM,Wang2020SelfSupervisedPA,Wu2022MultipleGA}.

In light of successes in related fields, domain adaptation techniques have been introduced to time series tasks \cite{Jin2022DomainAF,Ragab2022ADATIMEAB,He2023DomainAF,Cai2022TimeSD,Ragab2021SelfSupervisedAD}.
To generate accurate input pairs, CDTrans \cite{Xu2022CDTransCT} designs a two-way center-aware labeling algorithm to produce pseudo labels for target samples. Along with the pseudo labels, a weight-sharing triple-branch transformer framework is proposed to apply self-attention and cross-attention for source/target feature learning and source-target domain alignment, respectively.
Recent approach \cite{Jin2022DomainAF} proposes a shared-attention model with domain-adaptive capabilities that predicts future sequences using local representations over different time periods by extracting domain-invariant features and modeling domain-relevant attributes in conjunction with domain-specific features in order to appropriately approximate the data distribution of the respective domain.
AdaTime \cite{Ragab2022ADATIMEAB} develops a bench marking evaluation suite to systematically and fairly evaluate different domain adaptation methods on time series data. Specifically, we standardize the backbone neural network architectures and bench marking datasets, while also exploring more realistic model selection approaches that can work with no labeled data or just a few labeled samples. 
RainCoat \cite{He2023DomainAF} as a model for analyzing both closed-set and generalized-set data for complex time series, addresses feature and label shifts by considering both temporal and frequency features, aligning them across domains, and correcting for misalignments to facilitate the detection of private labels.
Existing model \cite{Cai2022TimeSD} designs intra- and inter-variable sparse attention mechanisms to extract correlation-structured time-series data considering time lags and utilize correlation-structured alignment to guide the transfer of knowledge from the source domain to the target domain.
SLARDA \cite{Ragab2021SelfSupervisedAD} designs a self-supervised learning module that utilizes forecasting as an auxiliary task to improve the transferability of the source features, and proposes a novel autoregressive domain adaptation technique that incorporates temporal dependency of both source and target features during domain alignment.
However, current DA frameworks are often only oriented to a single downstream task, such as classification DA \cite{Xu2022CDTransCT,He2023DomainAF} and prediction DA \cite{Jin2022DomainAF}. A general DA framework for a wide range of downstream tasks is encouraged to generally improve the cross-domain transfer ability of a wide range of SOTA models. Besides, unlike existing methods that require designing shared/shared feature types for different domains and choosing appropriate architectures for time series forecasting models, our approach provides the first end-to-end Unsupervised Crossdomain Migration (UCM) generalized framework for multiple downstream tasks.

\subsection{Tokenization strategy in TS}
Motivated by the successful application of transformers in NLP \cite{Vaswani2017AttentionIA} and pre-trained foundation models \cite{Radford2018ImprovingLU,Brown2020LanguageMA,Touvron2023LLaMAOA,Touvron2023Llama2O}, transformer-based models are viewed as equally promising in time series. 
Due to the characteristics of time series, initial approaches \cite{godfried2020flowdb,Kitaev2020ReformerTE,haoyietal-informer-2021,liu2021pyraformer,wu2021autoformer,woo2022etsformer,zhou2022fedformer} generally followed the \textbf{\emph{Point as Token}} design, that is each sampled time point acts as a separate token. 
There are two limitations of this strategy: firstly, computing global dependencies between any time points leads to high computational complexity; secondly, there is a serious information redundancy in the global dependencies captured by the attention mechanism, considering that the time series is continuously varying, which leads to two neighboring tokens having nearly the same numerical distribution. Therefore, approaches at the initial stage of time series analysis have focused on reducing computational complexity by mitigating information redundancy.

For instance, Reformer \cite{kitaev2020reformer} designed the locally sensitive hashing self-attention to reduce computational complexity. Informer \cite{haoyietal-informer-2021} proposes the ProbSparse self-attention mechanism to efficiently replace the canonical self-attention. Pyraformer \cite{liu2021pyraformer} introduces the pyramidal attention module which reduces computational complexity by constraining the maximum length of the signal traversing paths. Autoformer \cite{wu2021autoformer} designs the Auto-Correlation mechanism based on the series periodicity, which conducts the dependencies discovery and representation aggregation at the sub-series level. ETSformer \cite{woo2022etsformer} proposes the novel exponential smoothing attention and frequency attention to replace the self-attention mechanism in vanilla Transformers, thus improving both accuracy and efficiency. FEDformer \cite{zhou2022fedformer} avoids the high overhead of computation on the time domain by calculating the dependencies between individual bands in the frequency domain, while Fourier Transform has been used to ensure that individual bands have a global view.

Subsequently, the representative PatchTST \cite{Nie2023ATS} proposed a \textbf{\emph{Patch as Token}} strategy and channel-independent design to demonstrate the effectiveness of the transformer in time series. 
Based on this, the recent iTransformer \cite{liu2023itransformer} proposes the \textbf{\emph{Series as Token}} strategy, that is the whole series is regarded as the token, and the attention mechanism is used to capture the dependencies between the sequences of different channels, which is surprisingly intuitive and effective in data domains with a very large number of channels and complex dependency relationships between them. 
The above studies have shown that the reasonable tokenization strategy can drastically improve the performance of Transformer-based models compared to the complex and tedious model structure improvement. 
However, the existing tokenization strategy cannot be well applied to cross-domain migration. Since TS datasets from different domains and sampling settings exhibit diverse periodic patterns, it is difficult to identify a unified sub-series span that matches all TS data domains. existing tokenization strategies are unable to establish a unified framework to adapt to the potential TS data domains, and the pre-training phase is unable to adequately learn cross-domain representational information, which limits the generalization ability and scalability of the model. 
Therefore, The ideal tokenization should be insensitive to the mathematical characteristics of different data domains, which motivates us to propose \textbf{\emph{Wave as Token}} as the general compatible cross-domain migration strategy.

\subsection{LLM for TS}
The adaptation of pre-trained LLMS for time series analysis has attracted attention, exploiting their superior capabilities in sequence representation learning.
OneFitsAll \cite{Zhou2023OneFA} first attempted to apply the pre-trained GPT2 \cite{Radford2019LanguageMA} to the time series downstream tasks, and achieved comparable performance to the state-of-the-art methods by freezing the pre-trained self-attention and feed forward structures to maximize the retention of pre-training information, and only fine-tuning the layer norm. 
In contrast, the recent TimeLLM \cite{Jin2023TimeLLMTS} completely freezes the model parameters of the pre-trained Llama \cite{Touvron2023LLaMAOA}, aligning both time series and natural language modalities by introducing textual prototypes. The task settings and dataset priori information are fed into the pre-trained Llama as hard-prompt by the pre-trained Embedder, thus fully activating the inference capability of the foundation model on the time series forecasting task. 
TEMPO \cite{Cao2023TEMPOPG} designs a prompt pool based on seasonal-trend decomposition to generate specific prompts for each sub-component, and incorporates LoRA \cite{hu2022lora} to achieve efficient fine-tuning of LLM. 
TEST \cite{Sun2023TESTTP} establishes a TS embedding method applicable to LLMs by using orthogonal text embedding vectors as prototypes to constrain the TS embedding space, thus activating the feature extraction capability of LLMs in the time series data domain. 
LLM4TS \cite{Chang2023LLM4TSAP} proposes a two-stage fine-tuning strategy, which firstly enables supervised fine-tuning of LLMs in the time series modality, and subsequently suggests downstream fine-tuning of LLMs in specific tasks, which unleash the flexibility of pre-trained LLMs.

However, existing approaches have not yet realized and attempted to address the challenge of negative migration in cross-domain migration, where UniTime \cite{Liu2023UniTimeAL} has proposed domain instructions to ensure the model recognizes the differences between multiple data domains, yet the problem of negative migration \cite{Hu2019StrategiesFP} still inevitably arises. 
These difficulties motivate us to propose a novel tokenization strategy to establish potential connections between multiple data domains, thus enabling cross-domain migration capability with the original transformer architecture.

\subsection{Spectrum Analysis in TS}
Over the past two decades, spectrum analysis techniques based on Fast Fourier Transform (FFT) and Wavelet Transform (WT) has been widely used in diverse model structures for time series analysis \cite{Finder2022WaveletFM,Fang2023WhenSM,Zhang2023DiscoveringFB,Guo2023SelfSupervisedSB} to improve the performance of learning directly from the time domain. Specifically, transformer-based models use spectral analysis to reduce the complexity of self-attention mechanisms, e.g., Autoformer \cite{wu2021autoformer} captures the periodic pattern information from original series based on FFT and establishes the Auto-Correlation mechanism at the sub-series level for learning dependencies and representation aggregation. FEDformer \cite{zhou2022fedformer} generates a set of mixed frequency components by Fourier analysis and designs a frequency enhanced attention mechanism, which exploited the sparse representation of the spectrogram and achieved linear complexity. Besides, the MLP-based FreTS \cite{yi2023frequencydomain} captures a complete global view of the original signal by operating on the spectral components obtained by frequency-transformation, and overcomes the information bottleneck of MLP on time series by ensuring that MLP focuses on the key part frequency components. In addition, the FourierGNN \cite{yi2023fouriergnn} proposes a novel architecture that uniformly captures inter-series (spatial) dynamics and intra-series (temporal) dependencies by performing matrix multiplication in Fourier space. Finally, researchers in representation learning have noted the significance of establishing consistency constraints in the temporal-frequency space. For example, TF-C \cite{zhang2022self} designs a self-supervised pre-training strategy based on time-frequency consistency, that is temporal- and frequency- representations learned from the same TS samples should be closer in the temporal-frequency space than different TS samples. In addition, CoST \cite{woo2022cost} establishes a more effective connection by mapping the embedded features in the time domain space to the frequency domain.
Based on the advantages of descriptive spectrum analysis, this paper designs the Wave Quantize tokenization strategy.


\section{Methodology} \label{section:methods}
Establishing the generic and compatible cross-domain migration framework between different time series domains has the following challenges: (1) time series data with diverse domain background knowledge tend to exhibit different characteristics; (2) Datasets from different sources may exhibit variations, even in the same domain background; (3) The same data instances can also face the challenge of cross-domain migration due to a priori features such as sampling rate.

The proposed \textbf{\emph{Wave Quantize}} Module solves challenge-1 through the designed common spectral latent space. In addition, the designed strategy ensures that arbitrary series data can be embedded to equal-length groups of tokens (number of tokens equal to series length), thus solving challenge-2. This ensures that no hyperparameter setting tricks are utilized in the model to adapt to potential data domains, even if these domains are completely different from each other in terms of structural features (e.g., channel number, sequence length) are completely different. Finally, we solve challenge-3 by ensuring that each token contains TS pattern information within a localized window through the finite-length basis functions and bridges the difference in the distribution of pattern information from different domains within the $\lambda$-dimensional space formed by basis functions, where $\lambda$ is the size of the proposed wavebook.

Specifically, the basic idea is illustrated in Fig.~\ref{fig:intro_2}. Motivated by VQVAE \cite{Oord2017NeuralDR}, we design a fine-grained tokenization strategy to standardize the original series before the backbone, project input series from diverse domains into a common latent space, and generate a set of tokens. Subsequently, the TS pattern knowledge of different domains will be learned directly by backbone from the common spectral latent space, and used for multiple downstream tasks. Specifically, a set of orthogonal basis functions is designed to form the latent space, where each basis function $\begin{Bmatrix}A_i|1\le i\le \lambda\end{Bmatrix}$ is a finite-length waveform with attenuation, thus ensuring that the energies of the basis functions are confined to a local window. We define this set of orthogonal basis functions as the \emph{"Wavebook"} in Fig.~\ref{fig:intro_2}\textcircled{1}. We perform the inner product operation between the original series and basis function by continuously sliding the window with the step equal to $1$, thus calculating the fluctuation pattern similarity between the basis function and each segment of the input series in the local window. Repeating for all $\lambda$ basis functions so that each segment of the local window corresponds to a feature vector of length $\lambda$. In addition, considering the continuity of the basis functions, when the window length is set as small as possible, each time point is embedded to a vector of length $\lambda$, which contains the pattern information in $\lambda$ dimensions for the local window in which the time point is located, as shown in Fig.~\ref{fig:intro_2}\textcircled{2}.
Moreover, when a fixed generating function is chosen and a set of basis functions with different frequencies is obtained by resampling the generating function continuously. Then, the vector will represent the energy distribution of the corresponding time point on $\lambda$ frequency bands. This makes the proposed tokenization strategy interpretable and each token has semantic information.

\subsection{Framework Overview}  \label{section:methods_2}
We propose a framework called \myformer to build a cross-domain migration time series general model, which is illustrated in Fig.~\ref{fig:intro_2}.
It consists of four parts: wave quantize module, time series tokenization, cross-domain pre-training, and fine-tuning for domain migration.
Wave quantize module construction extracts a set of orthogonal basis functions from multiple TS domains to form the common spectral latent space.
Based on the wavebook, a tokenization mechanism projects the input time series from diverse domains into tokens in the common spectral latent space to mitigate their heterogeneity. 
Subsequently, in the cross-domain pre-training phase, the comprehensive time series pattern knowledge of different domains is extracted from the tokens to train a backbone Transformer encoder to form a time series unified framework.

To guarantee that the model can simultaneously learn latent representations from diverse TS data domains with different statistical features and temporal pattern, we adopt two promising designs: 
1) Encoder-only design. Since TS data often exhibit the complexity of multiple patterns superimposed \cite{wang2023micn,Cao2023TEMPOPG}, we adopt an encoder-only design with excellent generalization capabilities for representation learning, which has been proven competent by the SOTA Transformer-based models~\cite{Nie2023ATS,liu2023itransformer}.
2) channel independent. The multivariate time series sample with $n$ channels was regarded as $n$ separate univariate series,  which is utilized as the exemplary case to simplify the methodology and was shown effective in the literature~\cite{Zhou2023OneFA,Jin2023TimeLLMTS}.
Besides, the specific description of the important symbols involved in the method is shown in Table~\ref{tab:description}.

The overview of the proposed \myformer architecture is illustrated in Fig.~\ref{fig:intro_2}. The time series from multiple TS domains is represented as $X=(x_{1},...,x_{l})\in\mathbb{R}^{l}$ with $l$ time steps, and \myformer is utlized to predict future series $Y=(x_{l+1},...,x_{l+c})\in\mathbb{R}^{c}$ with $c$ time steps. The common space $V^{\lambda}$ will be formed by the proposed $\lambda$-dimensional \textbf{Wave Quantize Module}, which comprises a set of orthogonal basis functions. Subsequently, in \textbf{Time Series Tokenization}, according to proposition \ref{prop:1}, any sub-series from the original series can be transformed into a group of coordinates in $V^{\lambda}$, and the bijection relation is satisfied between the sub-series and the coordinates. Furthermore, proposition \ref{prop:2} gives the sufficient-necessary condition and construction method for orthogonal wavelet bases. In \textbf{Cross-domain Pre-training} phase, the sub-series at the timestep-$j$ are converted to $token_{j}=(p_{1,j},p_{2,j},...,p_{\lambda,j})\in\mathbb{R}^{\lambda}$, from diverse TS domains. Subsequently, simple full-parameter \textbf{Fine-tuning for Domain migration} is designed since the common spectral latent space reduces the distance between the source and target domains.

\begin{table*}[htbp]
  \caption{The specific description of the symbols involved in the method. (Due to space constraints, only the symbols appearing in Eqs.\eqref{eq:0}-\eqref{eq:3} are shown in this table, considering that symbols in the section \ref{section:methods_3} are not necessary for understanding the framework process.)}\label{tab:description}
  \vskip 0.05in
  \centering
  \resizebox{0.8\columnwidth}{!}{
  \begin{threeparttable}
  \begin{small}
  \renewcommand{\multirowsetup}{\centering}
  \setlength{\tabcolsep}{3.8pt}
  \begin{tabular}{c|c}
  \toprule
  \toprule
    Symbol formula & Definition \\
    \toprule


    $X\in\mathbb{R}^{l}$ & the history series, where $l$ as the lookback length \\
    \midrule
    $Y,\bar{Y}\in\mathbb{R}^{c}$ & the future and prediction series, where $c$ is the forecast length \\
    \midrule
    
    $m$ & the precision of the discrete sequence describing the information in $A$ \\
    \midrule
    $A\in\mathbb{R}^{2^m}$ & the amplitude sequence, which discretely inscribe the orthogonal wavelet \\
    \midrule
    $f_c$ & the central frequency of the orthogonal wavelet \\
    \midrule
    $\lambda$ & the size of the wavebook \\
    \midrule
    $S_i\in\mathbb{R},i\in[1,..,\lambda]$ & the set of scale factors utilized to generate the wavebook \\
    \midrule
    $W_i\in\mathbb{R}^{m\cdot s_i}$ & the downsampling coordinate \\
    \midrule
    $A_i=A[W_i]\in\mathbb{R}^{m\cdot s_i}$ & the basis function in wavebook \\
    \midrule
    
    $V^{\lambda}$ & the shared embedding space, where $\lambda$ is the size of wavebook \\
    \midrule
    $p_{i,j}$ & the pattern similarity between the segment of $X$ at timestep$-j$ and $A_i$ \\
    \midrule
    $\lambda,l$ & the dimension and number of embedded token vector \\
    \midrule
    $L$ & the number of stacked of EncoderLayers in Encoder \\
    \midrule
    $token_j\in\mathbb{R}^\lambda $ & the projection coordinates in the $\lambda$-dimensional common space $V^{\lambda}$ \\
    \midrule
    $T_{pos}\in\mathbb{R}^{\lambda\times l}$ & the learnable position encoding \\
    \midrule
    $T^0,T^{L}\in\mathbb{R}^{\lambda\times l}$ & time series tokens that utilized as the input and output to the Encoder \\
    \midrule
    
    $d_k$ & the dimension of latent space \\
    \midrule
    $T_{pos}\in\mathbb{R}^{\lambda\times l}$ & the learnable position encoding tensor \\
    \midrule
    $W^Q,W^K\in\mathbb{R}^{\lambda\times d_k},W^V\in\mathbb{R}^{\lambda\times\lambda}$ & the trainable Linear Layer of self-attention \\
    \midrule
    $Q_L,K_L\in\mathbb{R}^{l\times d_k},V_L\in\mathbb{R}^{l\times\lambda}$ & Query, Key and Value latent-variable \\
    
    \bottomrule
    \end{tabular}
    \end{small}
  \end{threeparttable}
  }
\end{table*}

\subsection{Wave Quantize Module} \label{section:methods_3}
This section presents the theoretical foundations of the wave quantize module, which is the fine-grained tokenization strategy that makes the establishment of the general compatible cross-domain migration framework possible. 
Among it, proposition \ref{prop:1} theoretically proves the bijective projection relation established by the wavebook, and proposition \ref{prop:2} proposes the sufficient-necessary condition for the construction of the wavebook. 
According to these propositions, we construct the wavebook and eventually form the common spectral latent space. For all propositions described in the current section, the thorough proof procedure and the background of the wavebook will be provided.

We introduce the following notations: $F(t)$ is the specific basis function, $H(\omega)$ and $G(\omega)$ refer as the low-pass filter and band-pass filter of $F(t)$, respectively.

\begin{definition} \label{def:1} 
We define the wavebook as a set consisting of several basis functions, as $\{F_{j,k}(t)=2^{j/2}F(2^jt-k),(j,k)\in\mathbb{Z}^2\}$, which constitutes a set of standard orthonormal basis (O.N.B) and forms the finite series space $L^2(\mathbb{R})$ if and only if $F(t)$ is the orthogonal wavelet.
\end{definition}

\begin{definition} \label{prop:1}
Based on the proposed wavebook, there must exist the unique coefficients sequence $\{c_{j,k};(j,k)\in\mathbb{Z}^{2}\}\in l^{2}\left(\mathbb{Z}\right)$. 
For $\forall f(t)\in L^2(\mathbb{R})$, we have $f(t)=\sum_{j\in\mathbb{Z}}\sum_{k\in\mathbb{Z}}c_{j,k}\cdot F_{j,k}(t)$, where $f(t)$ denotes the univariate time series and $\{c_{j,k}\}$ contains the coordinates of $f(t)$ in space $\mathbb{Z}^2$. 
Specifically, the bijection is satisfied between variables $L^2(\mathbb{R})$ and $l^2(\mathbb{Z})$ in two spaces $f(t)$ and $\{c_{j,k}\}$.
\end{definition}

\begin{proof}[\textbf{Proof of Proposition \ref{prop:1}}]
Based on the Definition \ref{def:1}, we have $f(t)=\sum_{j\in\mathbb{Z}}\sum_{k\in\mathbb{Z}}c_{j,k}\cdot F_{j,k}(t)$, where $\left\{F_{j,k}(t), (j,k)\in\mathbb{Z}^2\right\}$ constitutes a standard orthonormal basis for $L^2(\mathbb{R})$, and $\{c_{j,k}\}$ represents coefficients. Besides, since $\{F_{j,k}(t),(j,k)\in\mathbb{Z}^{2}\}$ constitutes a set of orthogonal bases satisfying $\forall(j_1,k_1)\neq(j_2,k_2)$, there has $\left<F_{j_1,k_1},F_{j_2,k_2}\right>=0$.
Further, we have
\begin{equation}
    \begin{split}
    \left\langle f(t),F_{m,n}(t)\right\rangle&=\left\langle\sum_{j\in\mathbb{Z}}\sum_{k\in\mathbb{Z}}c_{j,k}\cdot F_{j,k}(t),F_{m,n}(t)\right\rangle, \\
    &=c_{m,n}\cdot\left\langle F_{m,n}(t),F_{m,n}(t)\right\rangle, \\
    &=c_{m,n}\cdot\left|F_{m,n}(t)\right|^2.
    \end{split}
\end{equation}
Thus, the coefficients sequence can be uniquely determined by $c_{m,n}=\left\langle f(t),F_{m,n}(t)\right\rangle/\left|F_{m,n}(t)\right|^2$, and the bijection relation is satisfied between the variables $f(t)$ and $\{h_n;n\in\mathbb{Z}\}$, which completes the proof of the Proposition \ref{prop:1}.
\end{proof}

\begin{definition} \label{prop:2} 
Let $\left.M(\omega)=\left(\begin{matrix}H(\omega)&G(\omega)\\H(\omega+\pi)&G(\omega+\pi)\end{matrix}\right.\right)$,
and the matrix $M^{H}(\omega)$ is the conjugate transpose matrix of $M(\omega)$.
The sufficient-necessary condition for $F(t)$ as the orthogonal wavelet is that $M(\omega)$ is the Unitary Matrix: $M^H(\omega)M(\omega)=M(\omega)M^H(\omega)=I,a.e.\omega\in\mathbb{R}$.
\end{definition}

\begin{proof}[\textbf{Proof of Proposition \ref{prop:2}}]
Firstly, we introduce and briefly prove the Lemma on the sufficiently-necessary condition for orthonormal system: Defining the function $F(t)\in L^2(\mathbb{R})$, then the set $\{F_{0,n}=2^{0/2}F(2^0t-n)=F(t-n)\}$ forms the orthonormal system of $L^2(\mathbb{R})$, that is $\left\langle F(t-n),F(t-l)\right\rangle=\delta(n-l)$ is the sufficiently-necessary for $\sum_{k\in\mathbb{Z}}\left|\hat{{F}}(\omega+2k\pi)\right|^2=1$, where $\delta(n-l)$ represents $\frac1{2\pi}\int_0^{2\pi}e^{-i(n-l)\omega}d\omega$. Lemma is proved due to
\begin{small}
\begin{equation}
    \begin{split}
    \left\langle F(t-n),F(t-l)\right\rangle&=\frac1{2\pi}\int_{\mathbb{R}}\hat{{F}}(\omega)e^{-in\omega}\cdot\overline{\left(\hat{{F}}(\omega)e^{-il\omega}\right)}d\omega, \\
    &=\frac1{2\pi}\int_0^{2\pi}\sum_{k\in\mathbb{Z}}\left|\hat{{F}}(\omega+2k\pi)\right|^2 e^{-i(n-l)\omega}d\omega.
    \end{split}
\end{equation}
\end{small}

In the Shannon Sampling Theorem, for any signal $f(t)$ defined on $L^2(\mathbb{R})$, if the frequency domain form $\hat{f}(\omega)$ of that signal has a truncation frequency $B$, then that signal can be reconstructed by equally spaced discrete sampling. This sampling interval can be at most $\pi/B$. 
If the function $f(t)$ satisfies the following conditions {$\forall f(t)\in L^2(\mathbb{R}), \hat{{f}}(\omega)=\int_{-\infty}^{+\infty}f(t)e^{-i\omega t}dt,\left|\omega\right|>B$}, we have:
\begin{equation}
    f(t)=\sum_{n\in\mathbb{Z}}f(n\Delta)\frac{sin(\pi/\Delta)(t-n\Delta)}{(\pi/\Delta)(t-n\Delta)}, 0<\Delta\leq\frac\pi B.
\end{equation}

Considering first the case $B=\pi$, and the scale function is defined as $\phi(t)=\frac{\sin(\pi t)}{\pi t}$, we have $f(t)=\sum_{n\in\mathbb{Z}}f(n)\phi(t-n)$.
Further, defining the space $V_j=\{f(t);\hat{{f}}(\omega)=0,|\omega|>2^j\pi\}$ with truncation frequency $B=2^j\pi$ and taking the sampling interval $\Delta=2^{-j}$, we have $f(t)=\sum_{n\in\mathbb{Z}}2^{-j/2}f(2^{-j}n)\phi_{j,n}(t)$, where the scale function $\phi_{j,n}$ as: 
\begin{equation}
    \phi_{j,n}(t)=2^{j/2}\phi(2^jt-n)=\frac{sin\pi(2^jt-n)}{\pi(2^jt-n)}.
\end{equation}
Based on the wavelet function $F_{j,k}$, we define the close-span space $W_j=closespan\{F_{j,k}(t)\textit{=}2^{j/2}F(2^jt-k),(j,k)\in\mathbb{Z}^2\}$, and the close-span space has the following characteristic: 1) Spatial orthogonality $W_j\perp W_{j+1}$; 2) Spatial approximability $L^2(\mathbb{R})=\bigoplus_{j=-\infty}^{+\infty}W_j$.
if $F(t-k)$ is the set of the orthonormal basis of $W_{0}$, then for any $j$ there has the orthonormal basis $\{2^{j/2}F(2^jt-k),(j,k)\in\mathbb{Z}^2\}$ of $W_{j}$. 
Moreover, it is easy to verify: $W_j\perp V_j,V_{j+1}=W_j\oplus V_j$, the construction of orthogonal wavelets is equivalent to finding a set of standard orthogonal bases for $W_{0}$.

Since the scale function $\phi(t)\subseteq V_1$, and there exists a set of orthonormal basis $\{\sqrt{2}\phi(2t-n);n\in\mathbb{Z}\}$ for $V_{1}$, there must exist a unique sequence of coefficients $\{h_n;n\in\mathbb{Z}\}$ such that $\phi(t)=\sqrt{2}\sum_{n\in\mathbb{Z}}h_n\phi(2t-n)$, which is regarded as the scale equation, and since $\phi(2t-n)$ is mutually orthogonal to each other for different $n$, the coefficients are calculated as follows $h_n=\left\langle\phi(t),\sqrt{2}\phi(2t-n)\right\rangle=\sqrt{2}\int_R\phi(t)\overline{\phi}(2t-n)dt$.

In addition, the scale equation are converted to frequency domain form by Fourier transforms
\begin{equation}\label{eqn:H}
    \hat{{\phi}}(\omega)=H(\omega/2)\hat{{\phi}}(\omega/2),~~~H(\omega)=\frac{1}{\sqrt{2}}\sum_{n\in\mathbb{Z}}h_ne^{-in\omega},
\end{equation}
where $H(\omega)$ is referred to as the low-pass filter and hence $h_{n}$ is also referred to as the low-pass filter coefficients.

For the wavelet function $F(t)\subseteq V_1$, there exists $\{g_{n};n\in\mathbb{Z}\}$ such that $F(t)=\sqrt2\sum_{n\in\mathbb{Z}}g_n\phi(2t-n)$, which is regarded as the wavelet equation, and since $\phi(2t-n)$ is orthogonal for different $n$, the coefficients are calculated as $g_n=\left\langle F(t),\sqrt{2}\phi(2t-n)\right\rangle=\sqrt{2}\int_R F(t)\overline{\phi}(2t-n)dt$.

In addition, the wavelet equations can be obtained in frequency domain form by Fourier transformation
\begin{equation}\label{eqn:G}
    \hat{{F}(\omega)}=G(\omega/2)\hat{{\phi}}(\omega/2),~~~G(\omega)=\frac{1}{\sqrt{2}}\sum_{n\in\mathbb{Z}}g_ne^{-in\omega},
\end{equation}
where $G(\omega)$ is referred to as the bandpass filter and $g_n$ is also referred to as the impulse response coefficient.

We define the functions $H(\omega)$ and $G(\omega)$ refer as the low-pass filter and band-pass filter based on $F(t)\in L^2(\mathbb{R})$, which determined by \eqref{eqn:H} and \eqref{eqn:G}, respectively, and introduce the matrix $M(\omega)$.

The function group $\{F_{0,k}=2^{0/2}F(2^0t-k)=F(t-k),k\in\mathbb{Z}\}$ forms the orthonormal basis of $W_{0}$, that is, the sufficient-necessary condition for $F(t)$ as the orthogonal wavelet is that $M(\omega)$ is the Unitary Matrix: $M^H(\omega)M(\omega)=M(\omega)M^H(\omega)=I,a.e.\omega\in\mathbb{R}$, where $M^{H}(\omega)$ is defined to be the conjugate transpose matrix of $M(\omega)$.

By the definition of the Unitary Matrix, $M(\omega)$ is the Unitary Matrix equivalent to
\begin{equation}\label{unimatrix:1}
    \begin{split}
    \left|H(\omega)\right|^2+\left|H(\omega+\pi)\right|^2=1,a.e.\omega\in\mathbb{R}, \\
    \begin{vmatrix}G(\omega)\end{vmatrix}^2+\begin{vmatrix}G(\omega+\pi)\end{vmatrix}^2=1,a.e.\omega\in\mathbb{R}, \\
    H(\omega)\overline{G}(\omega)+H(\omega+\pi)\overline{G}(\omega+\pi)=0,a.e.\omega\in\mathbb{R}.
    \end{split}
\end{equation}

We define the spaces $V_{0}$ and $W_{0}$ as follows
\begin{equation}
    \begin{split}
    V_0=closespan\left\{\phi(t-n),n\in\mathbb{Z}\right\}, \\
    W_0=closespan\left\{F(t-n),n\in\mathbb{Z}\right\}.
    \end{split}
\end{equation}

By the definitions of $H(\omega)$ and $G(\omega)$, Equation \ref{unimatrix:1} is equal to
\begin{equation}
    \begin{split}
    \sum_{k\in\mathbb{Z}}\left|\hat{{\phi}}(\omega+2k\pi)\right|^2=1,a.e.\omega\in\mathbb{R}, \\
    \sum_{k\in\mathbb{Z}}\left|\hat{{F}}(\omega+2k\pi)\right|^2=1,a.e.\omega\in\mathbb{R}, \\
    V_0\perp W_0,a.e.\omega\in\mathbb{R}.
    \end{split}
\end{equation}

Because of Lemma, the first two conditions are equivalent to
\begin{equation}
    \begin{split}
    \left\langle\phi(t-n),\phi(t-l)\right\rangle=\delta(n-l), \\
    \left\langle F(t-n),F(t-l)\right\rangle=\delta(n-l).
    \end{split}
\end{equation}

In summary, the sufficient-necessary condition for $M(\omega)$ as the Unitary Matrix is that $\phi(t-n)$ and $F(t-n)$ form the standard orthogonal system of $L^2(\mathbb{R})$, respectively, and the two spaces formed by $\phi(t-n)$ and $F(t-n)$ are orthogonal. Considering the properties of $V_{j}$ and $W_{j}$, $\{F_{j,k}(t)=2^{j/2}F(2^jt-k),(j,k)\in\mathbb{Z}^2\}$ constitutes the orthonormal basis for $L^2(\mathbb{R})$, and hence $F(t)$ is an orthogonal wavelet.

According to Proposition \ref{prop:2}, by designing a specific functional relationship between $H(\omega)$ and $G(\omega)$, we can guarantee that $M(\omega)$ is the Unitary Matrix, and thus that $F(t)$ is the orthogonal wavelet. For instance, when $G(\omega)=e^{-i\omega}\overline{H}(\omega+\pi)$, it is easy to verify that $M(\omega)$ is the Unitary Matrix, when the frequency domain form of the wavelet function $F(t)$ is that $\hat{F}(\omega)=e^{-i\omega/2}\overline{H}(\pi+\omega/2)\cdot\hat{\phi}(\omega/2)$, where the corresponding impulse response coefficient is $g_n=(-1)^{n-1}\overline{h}_{1-n},n\in\mathbb{Z}$. Therefore, the time domain form of the wavelet function is $F(t)=\sqrt{2}\sum_{n\in\mathbb{Z}}{(-1)^{n-1}\overline{h}_{1-n}\phi(2t-n)}$.
\end{proof}

\subsection{Time Series Tokenization} \label{section:methods_4}
Tokenizing multiple time series domains and establishing their cross-domain connectivities is a challenging task.
We propose a time series tokenization approach to map the input time series from diverse domains into tokens in the common spectral latent space following the ``wave as token'' principle, which is described as follows. 

Based on proposition \ref{prop:2}, the amplitude sequence $A\in\mathbb{R}^{2^{m}}$ is utilized to discretely inscribe the orthogonal wavelet.
Specifically, $A$ defines the amplitude values of $2^{m}$ points that contain information about the waveforms of the specified orthogonal wavelet function, where $m$ serves as the precision of the discrete sequence describing the information in $A$, and the spacing between any two neighboring points is denoted as $step$ which equals to $m/(2^m-1)$

We denote the central frequency of the orthogonal wavelet as $f_{c}$ and the size of the wavebook as $\lambda$. The set of scale factors is calculated by:
\begin{equation}\label{eq:0}
    S=\left\{S_i=(2\cdot f_c\cdot\lambda)/i,i\in[1,2,\ldots,\lambda]\right\},
\end{equation}
where $(m\cdot S_i)\in\mathbb{Z}$. For each scale factor $S_i$, the downsampling coordinate $W_i$ and basis function $A_i\in\mathbb{R}^{m\cdot S_i}$ in wavebook can be sampled by:
\begin{equation}\label{eq:1}
  \begin{split}                
  W_i&=\left\{w_{i,j}=\frac{(2^m-1)\cdot j}{m\cdot s_i},j\in\left[1,...,m\cdot s_i\right]\right\}, \\
  A_i&=A[W_i]=A[w_{i,1},...,w_{i,m\cdot s_i}]\in\mathbb{R}^{m\cdot s_i},    \\
  \end{split}
\end{equation}
where each $A_{i}$ contains waveform information of the specified basis function, which is obtained by the scaling transform on the orthogonal wavelet with the scale factor $S_{i}$. For convenience, we subsequently refer to $A$ as the orthogonal wavelet and $A_i$ as the basis function that makes up the wavebook.

For the time series $X=\left(x_{1},...,x_{l}\right)$ of length $l$ and the basis function $A_{i}=(a_{i,1},...,a_{i,n})$, where $n=m\cdot s_i$ and we define the following operation:
\begin{equation}\label{eq:2}
  \begin{split}                
  Convolve(X,A_i)&=\left(c_{i,1},c_{i,2},...,c_{i,l+n-1}\right), \\
  Difference(X,A_i)&=(d_{i,1},d_{i,2},...,d_{i,l+n-2}), \\
  Recode_{A_i}(X)&=(p_{i,1},p_{i,2},...,p_{i,l}), \\
  \end{split}
\end{equation}
where, $p_{i,j}$, $d_{i,j}$ and $c_{i,j}$ are calculated sequentially based on a given set of $x_k$ and $a_{i,j}$ as follows:
\begin{equation}\label{eq:2.1}
  \begin{split}                
  c_{i,j}&=\sum_{k=1}^jx_k\cdot a_{i,j+1-k},~~~\Delta a_{i,j}=a_{i,j+1}-a_{i,j},\\
  d_{i,j}&=-\sqrt{s_i}\cdot(c_{i,j+1}-c_{i,j})=-\sqrt{s_i}\cdot\sum_{k=1}^{j+1}x_k\Delta a_{i,j+1-k}, \\
  p_{i,j}&=d_{i,j+\frac n2-1}=-\sqrt{s_i}\cdot\sum_{k=1}^{j+\frac n2}x_k\Delta a_{i,j+\frac n2-k}. \\
  \end{split}
\end{equation}

Considering the property defined by the basis function $A_{i}$, that is the fluctuation of $A_{i}$ is concentrated in a finite region (assumed to be the intermediate region).
In Eq. \ref{eq:2}, when $k\to j$, there is $\begin{vmatrix}a_{i,j+(n/2)-k}\end{vmatrix},\begin{vmatrix}a_{i,j+(n/2)-k-1}\end{vmatrix}\rightarrow\begin{vmatrix}a_{i,(n/2)}\end{vmatrix}$, and $\begin{vmatrix}a_{i,1}\end{vmatrix},\begin{vmatrix}a_{i,n}\end{vmatrix}\to0$. Therefore, $p_{i,j}$ is regarded as the fluctuation pattern similarity between the segment of time series $X$ at timestep-$j$ and the basis function $A_{i}$. 
Iterating over all basis functions $\{A_{1},A_{2},...,A_{\lambda}\}$, the set of pattern similarity $token_j=(p_{1,j},p_{2,j},...,p_{\lambda,j})$ can be calculated for any timestep among time series.

Further, the set of orthogonal basis functions can be utilized to form the $\lambda$-dimensional embedding space $V^{\lambda}$. 
With the $token$ described above, the neighborhood segment at timestep-$j$ can be projected as the separate point in $V^{\lambda}$, and $token_{j}\text{=}(p_{1,j},p_{2,j},...,p_{\lambda,j})\in\mathbb{R}^{\lambda}$ is regarded as the projection coordinates of the segment in the $\lambda$-dimensional common space $V^{\lambda}$, where the length of the segment is adapted by the attenuation of the fluctuation distribution of the basis functions. 
Besides, each timestep of the original series $X$ is traversed, and all the projected coordinates are concatenated together as $T^0\text{=}(token_1,token_2,...,token_l)\text{+}T_{pos}$, where $T^0,T_{pos}\in\mathbb{R}^{\lambda \times l}$ and the learnable position encoding $T_{pos}$ indicates the temporal order between sub-series.

\subsection{Cross-domain Pre-training} \label{section:methods_6}
We describe the fine-grained cross-domain pre-training strategy as follows. In each epoch, variables from multiple domains are randomly shuffled with the batch to ensure that the unified general cross-domain model can simultaneously exhibit satisfactory generalization ability across diverse time series domains. 
Specifically, the backbone encoder accepts the tokens from diverse time series domains as input and learns the dependencies between segments via the attention mechanism. Besides, each token vector serves as the fluctuation pattern similarity, thus containing sufficient semantic information. 
Notably, since all tokens come from the unified common embedding space, it alleviates the heterogeneity of TS multi-domains. These advantages ensure that the proposed model is advantageous in multi-domain pre-training phase.

To fully stimulate the inference capability of \myformer in downstream tasks, we design forecasting and classification as multi-tasks for different downstream tasks.
Besides, in the cross-domain pre-training phase, a set of adaptive dynamic weights was designed to balance the gradients from different time series domains, considering differences in generalization across time series domains and the dataset size.
Specifically, we define the aggregate loss function as $Loss=\sum_{i\in\mathbb{Z}}(\alpha_i\cdot Loss_i)$, where $Loss_i$ is defined as the loss value of specific $domain_i$, and multi-task weightings $\{\alpha_i;i\in\mathbb{Z}\}$ are automatically calculated by the learnable strategy \cite{kendall2018multi} which considering the homoscedastic uncertainty of each task.

\subsection{Cross-domain Migration} \label{section:methods_7}
To validate the domain migration capability, \myformer is pre-trained in multiple TS source domains, it contains two natural advantages: (1) the knowledge learned by the model in the pre-training phase has the same modality information as the target domain; (2) the proposed wave quantize module uniformly represents the source and target domains in the common space.
These advantages encourage us to adopt the simple fine-tuning strategy of \textbf{allowing all parameters to fine-tune in the target domain}. The TS samples in the source and target domain are directly projected to the common spectral latent space formed by the wave quantize module. Then it achieves alignment between target and source domains at the representation level and stimulates the cross-domain migration capability of the downstream backbone.
Specifically, \myformer is fine-tuned on partial samples of the target domain, finally predicted on the target domain, in the few-shot classification task, as shown in the table \ref{tab:brief_classification_few};
This implementation ensures that all performance improvements come from the proposed \textbf{Wave Quantize} module, and there is still a huge room for improvement in \myformer, based on the existing Parameter-Efficient Fine-Tuning approaches \cite{peft}.

\section{Architecture of \myformer} \label{section:methods_5}
The architecture of \myformer consists of a \textbf{Tokenization} operation (Section.~\ref{section:methods_4}), an encoder-only Transformer, and an \textbf{OutputLayer}, as illustrated in Fig.~\ref{fig:intro_2}. Specifically, the OutputLayer adaptively selects the model architecture according to different downstream tasks. For example, for forecasting and impuation, the OutputLayer only contains a single linear layer, for classification task, OutputLayer consists of linear layer and Softmax.
The backbone network is a Transformer encoder consisting of stack of $L$ \textbf{EncoderLayers}. The overall architecture is as follows:
\begin{equation}\label{eq:3}
  \begin{split}                
  T^0&=Tokenization(X), \\
  T^{k+1}&=EncoderLayer(T^{k}), k=0,...,L-1, \\
  \overline{Y}&=OutputLayer(T^L).
  \end{split}
\end{equation}

\paragraph{EncoderLayer}
Based on section \ref{section:methods_4}, the time series tokens $T^{k}\in\mathbb{R}^{\lambda \times l}$ are utilized as the input to the multi-head attention mechanism, where $T^{k}$ contains $l$ $\lambda$-dimensional embedded token vectors. 

Specifically, the dependencies between the generated tokens in the common spectral latent space $V^{\lambda}$ can be calculated by the equation $\left(\hat{T}^{k}\right)^\top\text{=}Softmax\left({Q_L\cdot\left(K_L\right)^\top}/{\sqrt{d_k}}\right)\cdotp V_L$, where $Q_L,K_L,V_L=\left(T^{k}\right)^\top\cdot\left[\begin{matrix}W^Q,W^K,W^V\end{matrix}\right]$ are served as query, key, and value latent-variable in attention. 
 
Concretely, $W^Q,W^K\in\mathbb{R}^{{\lambda}\times d_k}$ and $W^V\in\mathbb{R}^{{\lambda}\times {\lambda}}$ denote the trainable linear layer, which projects the $token_j$ into the $d_k$-dimensional latent space. Finally, $\hat{T}^{k}\in\mathbb{R}^{\lambda \times l}$ is the output of the attention mechanism. Besides, each \textbf{EncoderLayer} is also composed of the feed-forward network and layer normalization with residual connections as shown in Fig.~\ref{fig:intro_2}, and generates the latent representation $T^{k+1}\in\mathbb{R}^{\lambda \times l}$ as the input for the next \textbf{EncoderLayer}.

\paragraph{OutputLayer}
The output of the last EncoderLayer is essentially the set of tokens. To make it possible to match the predicted time series format, tokens are first processed by flattening to obtain a 1D series, which is subsequently projected to a specified prediction length via the linear layer. 
Specifically, the \textbf{OutputLayer} contains two components, \textbf{Flatten}: $\mathbb{R}^{\lambda\times l}\mapsto\mathbb{R}^{\lambda l}$ and \textbf{Linear}: $\mathbb{R}^{\lambda l}\mapsto\mathbb{R}^c$, where the output of the OutputLayer is represented as $\overline{Y}\in\mathbb{R}^c$. The detailed process of the master training stage is in Algorithm \ref{alg:stage1}.

\begin{algorithm}[t]
    
    \LinesNumbered
    \KwIn{Lookback series $X=(x_{1},...,x_{l})\in\mathbb{R}^{l}$.}
    \KwOut{Forecasting series $Y=(x_{l+1},...,x_{l+c})\in\mathbb{R}^{c}$.}
    \caption{Master Training Stage of \myformer}\label{alg:stage1}{
    Based on the sufficient-necessary condition introduced in Proposition \ref{prop:2}, the orthogonal wavelet F(t) is first designed to adapte the characteristic of the data domain; \\
    Subsequently, the amplitude sequence $A\in\mathbb{R}^{2^{m}}$ is utilized to discretely inscribe the orthogonal wavelet; \\
    Finally, a set $A_i$ will be sampled from $A$, where each $A_i$ contains waveform information of the specified basis function, which is obtained by the scaling transform on the orthogonal wavelet with the scale factor $S_i$, as shown in \eqref{eq:0}-\eqref{eq:1}; \\
    }
    Randomly initialize the parameter set $\theta_1$;\\
    \For{iteration = 1,2,3, ...}{
    Based on the design of the set $\{A_{1},...,A_{\lambda}\}$, each timestep of the original series $X$ is traversed by Proposition \ref{prop:1}, and all the projected coordinates are concatenated together as $T^0\text{=}(token_1,...,token_l)\text{+}T_{pos}$, where $token_j$ contains the fluctuation pattern similarity between time series $X$ and basis function $\{A_{i}\}$, by \eqref{eq:2}-\eqref{eq:2.1}; \\
    \myformer uses the Encoder obtained by stacking $L$ EncoderLayers as the model backbone. Subsequently the time series tokens $T^0 \in\mathbb{R}^{\lambda \times l}$ are utilized as the input to the multi-head attention mechanism of EncoderLayer, where $\lambda$ and $l$ indicate the dimension and number of embedded token vectors, as follows \eqref{eq:3}; \\
    The output $T^{L}\in\mathbb{R}^{\lambda \times l}$ of the last EncoderLayer is essentially a set of tokens that are first processed by flattening to obtain a 1D series, and then projected to a specified prediction length via the Linear Layer. Ultimately, the output of the output layer is represented as $\overline{Y}\in\mathbb{R}^c$; \\
    Get the forecasting loss $\mathcal{L}_{mse}$ between prediction $\overline{Y}$ and ground truth $Y$, subsequently update parameters $\theta_1$ according to the gradient of $\mathcal{L}_{mse}$; \\
    }
    \Return{$\theta_1$}
\end{algorithm}

\section{Experiments} \label{section:experiments}
In this section, we first introduce the benchmark and baseline which will be utilized in the subsequent experiments, followed by three tasks in each of the three subsections, where each task consists of three different settings: 
1) \textbf{Full-data}: models are trained and predicted on the target domain; 
2) \textbf{Few-shot}: In forecasting and imputation tasks, models are trained and predicted on the target domain, where the dataset is only partially available in the training phase; In the classification task, models are first pre-trained on the source domain, subsequently, are fine-tuned on partial samples of target domain, finally predicted on the target domain; These designs were utilizaed to demonstrate the data efficiency and cross-domain adaptability of the proposed model; 
3) \textbf{Zero-shot}: models are predicted in the target domain directly after pre-training in the single or multiple source domain; 

\begin{table}[htbp]
  \vspace{-5pt}
  \caption{Experiment configuration of \myformer. 
  }\label{tab:model_config}
  \vspace{-10pt}
  \vskip 0.05in
  \centering
  \resizebox{0.8\columnwidth}{!}{
  \begin{threeparttable}
  \begin{small}
  \renewcommand{\multirowsetup}{\centering}
  \setlength{\tabcolsep}{4.3pt}
  \begin{tabular}{c|c|c|c|c|c|c|c|c}
  
    \toprule
    \multirow{2}{*}{Tasks} & \multicolumn{4}{c}{Model Hyper-parameter} & \multicolumn{4}{c}{Training Process} \\
    
    \cmidrule(lr){2-5}\cmidrule(lr){6-9}
    & Layers & $d_{\text{min}}$ $^\dag$ & $d_{\text{max}}$ $^\dag$ & $\lambda^{\circ}$ & LR$^\ast$ & Loss & Batch Size & Epochs\\
    \toprule
    
    Forecasting & 10 & 32 & 512 & 100 & $10^{-4}$ & MSE & 32 & 10 \\
    \midrule
    
    Imputation & 10 & 64 & 128 & 100 & $10^{-4}$ & MSE & 32 & 10 \\
    \midrule

    Classification & 5 & 64 & 128 & 100 & $10^{-3}$ & MSE & 64 & 30 \\
    \bottomrule
    
    \end{tabular}
    \begin{tablenotes}
        \footnotesize
        \item[] $\dag$ $d_{\text{model}}=\min\{\max\{2^{\lceil\log C\rceil}, d_{\text{min}}\},d_{\text{max}}\}$, where $C$ is input series dimension.
        \item[] $\ast$ LR means the initial learning rate.
        \item[] $\circ$ $\lambda$ means the dimension of shared embedding space, that is the size of the wavebook.
  \end{tablenotes}
    \end{small}
  \end{threeparttable}
  }
\vspace{-10pt}
\end{table}

\subsection{Configurations}
We provide the experiment configuration in Table \ref{tab:model_config}. 
All experiments are repeated three times, implemented in PyTorch and conducted on a single Tesla V100 SXM2 32GB GPU. 
Our method is trained with the L2 Loss, using the ADAM optimizer with an initial learning rate of $10^{-4}$, and Batch size is set in $16 \rightarrow 64$. The training process is early stopped after three epochs (patience=3) if there is no loss degradation on the valid set.
The mean square error (MSE) and mean absolute error (MAE) are used as metrics in forecasting and imputation tasks. Besides, the accuracy (Acc), precision (Pre), recall (Rec), and f1-score (F1) are used as metrics in the classification task.
By default, the proposed \textbf{\myformer} contains $5 \rightarrow 10$ \textbf{EncoderLayers}. All the baselines that we reproduced are implemented based on configurations of the original paper or their official code. For a fair comparison, we design the same input embedding and final prediction layer for all base models. 

\subsection{Benchmarks} \label{sec:benchmark}
To evaluate the performance of the proposed method, we extensively experiment with the mainstream time series analysis tasks including long-term forecasting, imputation (i.e., predicting the missing data in a time series), and classification. 
The long-term forecasting, imputation and classification are evaluated with several popular real-world datasets, including: 
\textbf{ETT (ETTh1, ETTh2, ETTm1, and ETTm2)}\footnote{https://github.com/zhouhaoyi/Informer2020} \cite{haoyietal-informer-2021} contains six power load features and oil temperature used for monitoring electricity transformers. ETT involves four subsets. ETTm1 and ETTm2 are recorded at 15-minute intervals, while ETTh1 and ETTh2 are recorded hourly. 
\textbf{Exchange}\footnote{https://github.com/laiguokun/multivariate-time-series-data} \cite{Lai2017ModelingLA} records daily exchange rates of eight different countries ranging from 1990 to 2016.
\textbf{Weather}\footnote{https://www.bgc-jena.mpg.de/wetter/} contains 21 meteorological indicators, such as temperature, humidity, and precipitation, which are recorded every 10 minutes in the year 2020. 
\textbf{Electricity}\footnote{https://archive.ics.uci.edu/dataset/321/electricity} comprises hourly power consumption of 321 clients from 2012 to 2014. 
\textbf{Traffic}\footnote{http://pems.dot.ca.gov} reports the number of vehicles loaded on all 862 roads at each moment in time. 
\textbf{Sunspot}\footnote{https://www.sidc.be/SILSO/newdataset} records observations of sunspots for long-term monitoring, consisting of $73924$ timesteps.
\textbf{River Flow}\footnote{http://www.jenvstat.org/v04/i11} reports th daily river flow, consisting of $23741$ timesteps.
\textbf{Solar Power}\footnote{https://zenodo.org/records/4656032} contains a single long daily time series representing the wind power production in MW recorded every 4 seconds starting from 2019.
\textbf{UCR archive}\footnote{https://www.cs.ucr.edu/\url{~}eamonn/time\_series\_data\_2018} as the well-known time series classification repository, where representative \textbf{35 datasets} are selected to validate the performance of the proposed approach.

\subsection{Baselines} 
We compare the proposed \myformer model with the well-acknowledged and advanced models, which include the CNN-based Models: \textbf{TimesNet} \cite{wu2023timesnet} and \textbf{MICN} \cite{wang2023micn}; the MLP-based model: \textbf{DLinear} \cite{Zeng2022AreTE}; the Transformer-based models: \textbf{Informer} \cite{haoyietal-informer-2021}, \textbf{ETSformer}~\cite{woo2022etsformer}, \textbf{Stationary}~\cite{Liu2022NonstationaryTR}, \textbf{Autoformer} \cite{wu2021autoformer}, \textbf{FEDformer}~\cite{zhou2022fedformer}, and \textbf{PatchTST}~\cite{Nie2023ATS}; 
and a LLM-empowered model: \textbf{OneFitsAll}~\cite{Zhou2023OneFA}.
In the \textbf{forecasting} task, to indicate the generalization capability on different prediction scales, we fixed the lookback length as 336, and the prediction lengths including $\{96, 192, 336, 720\}$. 
In the \textbf{imputation} task, to compare the performance under different proportions of missing data, we randomly mask the time points with a ratio of $\{12.5\%, 25\%, 37.5\%, 50\%\}$, and the lookback length is fixed as 96. 
Besides, the \textbf{classification} task is shown only for accuracy and F1 score, while the full-data task on all 35 datasets from UCR, and the few-shot task on multiple scenarios.


\subsection{Long term forecasting} 
\begin{table*}[htbp]
  \caption{Comparison of the averaged performance from diverse prediction lengths ($\{96,192,336,720\}$) on \textbf{full-data forecasting} task, where ETTh1,h2,m1,m2 are from the same dataset.}\label{tab:brief_forecasting_full}
  \centering
  \resizebox{1.0\columnwidth}{!}{
  \begin{threeparttable}
  \begin{small}
  \renewcommand{\multirowsetup}{\centering}
  \setlength{\tabcolsep}{1pt}
  \begin{tabular}{ccc||cccccccccccccccccc}
    \toprule
    \hline
    
    \multicolumn{1}{c}{\multirow{1}{*}{Models}} & 
    \multicolumn{2}{c}{\rotatebox{0}{\scalebox{0.8}{\textbf{\myformer}}}} &
    \multicolumn{2}{c}{\rotatebox{0}{\scalebox{0.8}{OneFitsAll}}} & 
    \multicolumn{2}{c}{\rotatebox{0}{\scalebox{0.8}{DLinear}}} &
    \multicolumn{2}{c}{\rotatebox{0}{\scalebox{0.8}{PatchTST}}} &
    \multicolumn{2}{c}{\rotatebox{0}{\scalebox{0.8}{TimesNet}}} &
    \multicolumn{2}{c}{\rotatebox{0}{\scalebox{0.8}{FEDformer}}} & 
    \multicolumn{2}{c}{\rotatebox{0}{\scalebox{0.8}{Autoformer}}} & 
    \multicolumn{2}{c}{\rotatebox{0}{\scalebox{0.8}{Stationary}}} & 
    \multicolumn{2}{c}{\rotatebox{0}{\scalebox{0.8}{ETSformer}}} &  
    \multicolumn{2}{c}{\rotatebox{0}{\scalebox{0.8}{Informer}}} \\
    
    \multicolumn{1}{c}{} & 
    \multicolumn{2}{c}{\scalebox{0.8}{(\textbf{Ours})}} & 
    \multicolumn{2}{c}{\scalebox{0.8}{\citeyear{Nie2023ATS}}} &
    \multicolumn{2}{c}{\scalebox{0.8}{\citeyear{wu2023timesnet}}} &
    \multicolumn{2}{c}{\scalebox{0.8}{\citeyear{wang2023micn}}} &
    \multicolumn{2}{c}{\scalebox{0.8}{\citeyear{Zeng2022AreTE}}} & 
    \multicolumn{2}{c}{\scalebox{0.8}{\citeyear{zhou2022fedformer}}} & 
    \multicolumn{2}{c}{\scalebox{0.8}{\citeyear{Liu2022NonstationaryTR}}} & 
    \multicolumn{2}{c}{\scalebox{0.8}{\citeyear{wu2021autoformer}}} & 
    \multicolumn{2}{c}{\scalebox{0.8}{\citeyear{liu2021pyraformer}}} &  
    \multicolumn{2}{c}{\scalebox{0.8}{\citeyear{haoyietal-informer-2021}}} \\
    
    \cline{2-21}
    
    \multicolumn{1}{c}{Metric} & 
    \scalebox{0.78}{MSE} & \scalebox{0.78}{MAE} & 
    \scalebox{0.78}{MSE} & \scalebox{0.78}{MAE} & 
    \scalebox{0.78}{MSE} & \scalebox{0.78}{MAE} & 
    \scalebox{0.78}{MSE} & \scalebox{0.78}{MAE} & 
    \scalebox{0.78}{MSE} & \scalebox{0.78}{MAE} & 
    \scalebox{0.78}{MSE} & \scalebox{0.78}{MAE} & 
    \scalebox{0.78}{MSE} & \scalebox{0.78}{MAE} & 
    \scalebox{0.78}{MSE} & \scalebox{0.78}{MAE} & 
    \scalebox{0.78}{MSE} & \scalebox{0.78}{MAE} & 
    \scalebox{0.78}{MSE} & \scalebox{0.78}{MAE} \\
    
    \hline
    \scalebox{0.78}{ETTm1} &
    {\scalebox{0.78}{0.368}} & \textbf{\scalebox{0.78}{0.369}} &
    {\scalebox{0.78}{0.352}} & {\scalebox{0.78}{0.383}} &
    {\scalebox{0.78}{0.357}} & {\scalebox{0.78}{0.378}} &
    \textbf{\scalebox{0.78}{0.351}} & {\scalebox{0.78}{0.380}} &
    {\scalebox{0.78}{0.400}} & {\scalebox{0.78}{0.406}} &
    {\scalebox{0.78}{0.448}} & {\scalebox{0.78}{0.452}} &
    {\scalebox{0.78}{0.588}} & {\scalebox{0.78}{0.517}} &
    {\scalebox{0.78}{0.481}} & {\scalebox{0.78}{0.456}} &
    {\scalebox{0.78}{0.429}} & {\scalebox{0.78}{0.425}} &
    {\scalebox{0.78}{0.961}} & {\scalebox{0.78}{0.734}} \\

    \hline
    \scalebox{0.78}{ETTm2} &
    \textbf{\scalebox{0.78}{0.247}} & \textbf{\scalebox{0.78}{0.295}} &
    {\scalebox{0.78}{0.266}} & {\scalebox{0.78}{0.326}} &
    {\scalebox{0.78}{0.267}} & {\scalebox{0.78}{0.333}} &
    {\scalebox{0.78}{0.255}} & {\scalebox{0.78}{0.315}} &
    {\scalebox{0.78}{0.291}} & {\scalebox{0.78}{0.333}} &
    {\scalebox{0.78}{0.305}} & {\scalebox{0.78}{0.349}} &
    {\scalebox{0.78}{0.327}} & {\scalebox{0.78}{0.371}} &
    {\scalebox{0.78}{0.306}} & {\scalebox{0.78}{0.347}} &
    {\scalebox{0.78}{0.293}} & {\scalebox{0.78}{0.342}} &
    {\scalebox{0.78}{1.410}} & {\scalebox{0.78}{0.810}} \\

    \hline
    \scalebox{0.78}{ETTh1} &
    \textbf{\scalebox{0.78}{0.407}} & \textbf{\scalebox{0.78}{0.418}} &
    {\scalebox{0.78}{0.427}} & {\scalebox{0.78}{0.426}} &
    {\scalebox{0.78}{0.422}} & {\scalebox{0.78}{0.437}} &
    {\scalebox{0.78}{0.413}} & {\scalebox{0.78}{0.430}} &
    {\scalebox{0.78}{0.458}} & {\scalebox{0.78}{0.450}} &
    {\scalebox{0.78}{0.440}} & {\scalebox{0.78}{0.460}} &
    {\scalebox{0.78}{0.496}} & {\scalebox{0.78}{0.487}} &
    {\scalebox{0.78}{0.570}} & {\scalebox{0.78}{0.537}} &
    {\scalebox{0.78}{0.542}} & {\scalebox{0.78}{0.510}} &
    {\scalebox{0.78}{1.040}} & {\scalebox{0.78}{0.795}} \\
    
    \hline
    \scalebox{0.78}{ETTh2} &
    {\scalebox{0.78}{0.347}} & \textbf{\scalebox{0.78}{0.373}} &
    {\scalebox{0.78}{0.354}} & {\scalebox{0.78}{0.394}} &
    {\scalebox{0.78}{0.431}} & {\scalebox{0.78}{0.446}} &
    \textbf{\scalebox{0.78}{0.330}} & {\scalebox{0.78}{0.379}} &
    {\scalebox{0.78}{0.414}} & {\scalebox{0.78}{0.427}} &
    {\scalebox{0.78}{0.437}} & {\scalebox{0.78}{0.449}} &
    {\scalebox{0.78}{0.450}} & {\scalebox{0.78}{0.459}} &
    {\scalebox{0.78}{0.526}} & {\scalebox{0.78}{0.516}} &
    {\scalebox{0.78}{0.439}} & {\scalebox{0.78}{0.452}} &
    {\scalebox{0.78}{4.431}} & {\scalebox{0.78}{1.729}} \\
    
    \hline
    \scalebox{0.78}{Electricity} &
    \textbf{\scalebox{0.78}{0.154}} & \textbf{\scalebox{0.78}{0.247}} &
    {\scalebox{0.78}{0.167}} & {\scalebox{0.78}{0.263}} &
    {\scalebox{0.78}{0.166}} & {\scalebox{0.78}{0.263}} &
    {\scalebox{0.78}{0.161}} & {\scalebox{0.78}{0.252}} &
    {\scalebox{0.78}{0.192}} & {\scalebox{0.78}{0.295}} &
    {\scalebox{0.78}{0.214}} & {\scalebox{0.78}{0.295}} &
    {\scalebox{0.78}{0.227}} & {\scalebox{0.78}{0.327}} &
    {\scalebox{0.78}{0.193}} & {\scalebox{0.78}{0.338}} &
    {\scalebox{0.78}{0.208}} & {\scalebox{0.78}{0.296}} &
    {\scalebox{0.78}{0.311}} & {\scalebox{0.78}{0.397}} \\
    
    \hline
    \scalebox{0.78}{Traffic} &
    \textbf{\scalebox{0.78}{0.379}} & {\scalebox{0.78}{0.286}} &
    {\scalebox{0.78}{0.414}} & {\scalebox{0.78}{0.294}} &
    {\scalebox{0.78}{0.433}} & {\scalebox{0.78}{0.295}} &
    {\scalebox{0.78}{0.390}} & \textbf{\scalebox{0.78}{0.263}} &
    {\scalebox{0.78}{0.620}} & {\scalebox{0.78}{0.336}} &
    {\scalebox{0.78}{0.610}} & {\scalebox{0.78}{0.376}} &
    {\scalebox{0.78}{0.628}} & {\scalebox{0.78}{0.379}} &
    {\scalebox{0.78}{0.624}} & {\scalebox{0.78}{0.340}} &
    {\scalebox{0.78}{0.621}} & {\scalebox{0.78}{0.396}} &
    {\scalebox{0.78}{0.764}} & {\scalebox{0.78}{0.416}} \\
    
    \hline
    \scalebox{0.78}{Weather} &
    \textbf{\scalebox{0.78}{0.216}} & \textbf{\scalebox{0.78}{0.246}} &
    {\scalebox{0.78}{0.237}} & {\scalebox{0.78}{0.270}} &
    {\scalebox{0.78}{0.248}} & {\scalebox{0.78}{0.300}} &
    {\scalebox{0.78}{0.225}} & {\scalebox{0.78}{0.264}} &
    {\scalebox{0.78}{0.259}} & {\scalebox{0.78}{0.287}} &
    {\scalebox{0.78}{0.309}} & {\scalebox{0.78}{0.360}} &
    {\scalebox{0.78}{0.338}} & {\scalebox{0.78}{0.382}} &
    {\scalebox{0.78}{0.288}} & {\scalebox{0.78}{0.314}} &
    {\scalebox{0.78}{0.271}} & {\scalebox{0.78}{0.334}} &
    {\scalebox{0.78}{0.634}} & {\scalebox{0.78}{0.548}} \\
    
    \hline
    \scalebox{0.78}{Sunspot} &
    \textbf{\scalebox{0.78}{0.395}} & \textbf{\scalebox{0.78}{0.442}} &
    {\scalebox{0.78}{0.445}} & {\scalebox{0.78}{0.477}} &
    {\scalebox{0.78}{0.526}} & {\scalebox{0.78}{0.554}} &
    {\scalebox{0.78}{0.446}} & {\scalebox{0.78}{0.476}} &
    {\scalebox{0.78}{0.450}} & {\scalebox{0.78}{0.478}} &
    {\scalebox{0.78}{0.477}} & {\scalebox{0.78}{0.498}} &
    {\scalebox{0.78}{0.458}} & {\scalebox{0.78}{0.488}} &
    {\scalebox{0.78}{0.462}} & {\scalebox{0.78}{0.496}} &
    {\scalebox{0.78}{0.481}} & {\scalebox{0.78}{0.533}} &
    {\scalebox{0.78}{0.559}} & {\scalebox{0.78}{0.604}} \\
    
    \hline
    \scalebox{0.78}{RiverFlow} &
    \textbf{\scalebox{0.78}{1.004}} & \textbf{\scalebox{0.78}{0.497}} &
    {\scalebox{0.78}{1.218}} & {\scalebox{0.78}{0.551}} &
    {\scalebox{0.78}{1.146}} & {\scalebox{0.78}{0.605}} &
    {\scalebox{0.78}{1.233}} & {\scalebox{0.78}{0.658}} &
    {\scalebox{0.78}{1.247}} & {\scalebox{0.78}{0.651}} &
    {\scalebox{0.78}{1.139}} & {\scalebox{0.78}{0.596}} &
    {\scalebox{0.78}{1.246}} & {\scalebox{0.78}{0.651}} &
    {\scalebox{0.78}{1.137}} & {\scalebox{0.78}{0.574}} &
    {\scalebox{0.78}{1.250}} & {\scalebox{0.78}{0.659}} &
    {\scalebox{0.78}{1.312}} & {\scalebox{0.78}{0.737}} \\
    
    \hline
    \scalebox{0.78}{SolarPower} &
    \textbf{\scalebox{0.78}{0.031}} & \textbf{\scalebox{0.78}{0.063}} &
    {\scalebox{0.78}{0.036}} & {\scalebox{0.78}{0.072}} &
    {\scalebox{0.78}{0.046}} & {\scalebox{0.78}{0.093}} &
    {\scalebox{0.78}{0.043}} & {\scalebox{0.78}{0.086}} &
    {\scalebox{0.78}{0.082}} & {\scalebox{0.78}{0.149}} &
    {\scalebox{0.78}{0.046}} & {\scalebox{0.78}{0.094}} &
    {\scalebox{0.78}{0.093}} & {\scalebox{0.78}{0.167}} &
    {\scalebox{0.78}{0.064}} & {\scalebox{0.78}{0.120}} &
    {\scalebox{0.78}{0.076}} & {\scalebox{0.78}{0.149}} &
    {\scalebox{0.78}{0.078}} & {\scalebox{0.78}{0.147}} \\
    
    
    \hline
    \bottomrule
  \end{tabular}
    \end{small}
  \end{threeparttable}
}
\end{table*}
\begin{table*}[htpb]
  \caption{Comparison of the averaged performance from diverse prediction lengths ($\{96,192,336,720\}$) on \textbf{zero-shot forecasting} task. Where $Source\rightarrow Traget$ indicates that the model is first pre-trained on the single train set of the \emph{SourceDomain}, subsequently, the model parameters are frozen and predicted on the test set of the \emph{TargetDomain}.}\label{tab:brief_forecasting_zero_cross_single}
  \centering
  \resizebox{0.9\columnwidth}{!}{
  \begin{small}
  \renewcommand{\multirowsetup}{\centering}
  \setlength{\tabcolsep}{1.5pt}
  \renewcommand\arraystretch{1}
  \begin{tabular}{ccc||cccccccccccc}
    \toprule
    \hline
    
    \multicolumn{1}{c}{\multirow{2}{*}{\scalebox{1.0}{Scenarios}}} & 
    \multicolumn{2}{c}{\rotatebox{0}{\scalebox{0.8}{\textbf{\myformer}}}} & 
    \multicolumn{2}{c}{\rotatebox{0}{\scalebox{0.8}{OneFitsAll}}} & 
    \multicolumn{2}{c}{\rotatebox{0}{\scalebox{0.8}{DLinear}}} & 
    \multicolumn{2}{c}{\rotatebox{0}{\scalebox{0.8}{PatchTST}}} & 
    \multicolumn{2}{c}{\rotatebox{0}{\scalebox{0.8}{TimesNet}}} & 
    \multicolumn{2}{c}{\rotatebox{0}{\scalebox{0.8}{FEDformer}}} & 
    \multicolumn{2}{c}{\rotatebox{0}{\scalebox{0.8}{Autoformer}}} \\

    
    \cline{2-15} &
    \scalebox{0.78}{MSE} & \scalebox{0.78}{MAE} & 
    \scalebox{0.78}{MSE} & \scalebox{0.78}{MAE} & 
    \scalebox{0.78}{MSE} & \scalebox{0.78}{MAE} & 
    \scalebox{0.78}{MSE} & \scalebox{0.78}{MAE} & 
    \scalebox{0.78}{MSE} & \scalebox{0.78}{MAE} & 
    \scalebox{0.78}{MSE} & \scalebox{0.78}{MAE} & 
    \scalebox{0.78}{MSE} & \scalebox{0.78}{MAE} \\
    \hline
    
    \multirow{1}{*}{\scalebox{0.8}{\shortstack{ETTm2$\rightarrow$ETTm1}}} &
    \textbf{\scalebox{0.78}{0.434}} & \textbf{\scalebox{0.78}{0.429}} & 
    {\scalebox{0.78}{0.790}} & {\scalebox{0.78}{0.579}} & 
    {\scalebox{0.78}{0.516}} & {\scalebox{0.78}{0.473}} & 
    {\scalebox{0.78}{0.596}} & {\scalebox{0.78}{0.508}} & 
    {\scalebox{0.78}{0.857}} & {\scalebox{0.78}{0.599}} & 
    {\scalebox{0.78}{0.718}} & {\scalebox{0.78}{0.564}} & 
    {\scalebox{0.78}{0.722}} & {\scalebox{0.78}{0.566}} \\ 

    \hline
    
    \multirow{1}{*}{\scalebox{0.8}{\shortstack{ETTm1$\rightarrow$ETTm2}}} &
    \textbf{\scalebox{0.78}{0.293}} & \textbf{\scalebox{0.78}{0.326}} & 
    {\scalebox{0.78}{0.342}} & {\scalebox{0.78}{0.369}} & 
    {\scalebox{0.78}{0.360}} & {\scalebox{0.78}{0.410}} & 
    {\scalebox{0.78}{0.325}} & {\scalebox{0.78}{0.361}} & 
    {\scalebox{0.78}{0.357}} & {\scalebox{0.78}{0.384}} & 
    {\scalebox{0.78}{0.321}} & {\scalebox{0.78}{0.360}} & 
    {\scalebox{0.78}{0.325}} & {\scalebox{0.78}{0.365}} \\ 

    \hline
    
    \multirow{1}{*}{\scalebox{0.8}{\shortstack{ETTh2$\rightarrow$ETTh1}}} &
    \textbf{\scalebox{0.78}{0.512}} & \textbf{\scalebox{0.78}{0.493}} & 
    {\scalebox{0.78}{0.780}} & {\scalebox{0.78}{0.604}} & 
    {\scalebox{0.78}{0.609}} & {\scalebox{0.78}{0.532}} & 
    {\scalebox{0.78}{0.616}} & {\scalebox{0.78}{0.537}} & 
    {\scalebox{0.78}{0.920}} & {\scalebox{0.78}{0.635}} & 
    {\scalebox{0.78}{0.746}} & {\scalebox{0.78}{0.598}} & 
    {\scalebox{0.78}{0.735}} & {\scalebox{0.78}{0.593}} \\ 

    \hline
    
    \multirow{1}{*}{\scalebox{0.8}{\shortstack{ETTh1$\rightarrow$ETTh2}}} &
    \textbf{\scalebox{0.78}{0.385}} & \textbf{\scalebox{0.78}{0.405}} & 
    {\scalebox{0.78}{0.420}} & {\scalebox{0.78}{0.430}} & 
    {\scalebox{0.78}{0.478}} & {\scalebox{0.78}{0.483}} & 
    {\scalebox{0.78}{0.416}} & {\scalebox{0.78}{0.444}} & 
    {\scalebox{0.78}{0.443}} & {\scalebox{0.78}{0.442}} & 
    {\scalebox{0.78}{0.444}} & {\scalebox{0.78}{0.463}} & 
    {\scalebox{0.78}{0.445}} & {\scalebox{0.78}{0.459}} \\ 

    \hline
    
    \multirow{1}{*}{\scalebox{0.8}{\shortstack{RiverFlow$\rightarrow$Exchange}}} &
    \textbf{\scalebox{0.78}{0.381}} & \textbf{\scalebox{0.78}{0.424}} & 
    {\scalebox{0.78}{0.464}} & {\scalebox{0.78}{0.491}} & 
    {\scalebox{0.78}{0.585}} & {\scalebox{0.78}{0.537}} & 
    {\scalebox{0.78}{0.421}} & {\scalebox{0.78}{0.458}} & 
    {\scalebox{0.78}{0.497}} & {\scalebox{0.78}{0.508}} & 
    {\scalebox{0.78}{0.942}} & {\scalebox{0.78}{0.765}} & 
    {\scalebox{0.78}{0.845}} & {\scalebox{0.78}{0.739}} \\ 

    \hline
    
    \multirow{1}{*}{\scalebox{0.8}{\shortstack{Sunspot$\rightarrow$Weather}}} &
    \textbf{\scalebox{0.78}{0.254}} & \textbf{\scalebox{0.78}{0.286}} & 
    {\scalebox{0.78}{0.264}} & {\scalebox{0.78}{0.297}} & 
    {\scalebox{0.78}{0.263}} & {\scalebox{0.78}{0.310}} & 
    {\scalebox{0.78}{0.263}} & {\scalebox{0.78}{0.297}} & 
    {\scalebox{0.78}{0.311}} & {\scalebox{0.78}{0.325}} & 
    {\scalebox{0.78}{0.705}} & {\scalebox{0.78}{0.634}} & 
    {\scalebox{0.78}{0.509}} & {\scalebox{0.78}{0.501}} \\ 

    \hline
    \bottomrule
  \end{tabular}
  \end{small}
}
\end{table*}

\begin{table*}[htpb]
  \caption{Comparison of the averaged performance from diverse prediction lengths on \textbf{zero-shot forecasting} task, where $Source\rightarrow Traget$ indicates that the model is first pre-trained uniformly on all train sets from multiple \emph{SourceDomains}, subsequently, the model parameters are frozen and predicted on the test set of the \emph{TargetDomain}.}\label{tab:brief_forecasting_zero_cross_multi}
  \centering
  \resizebox{1.0\columnwidth}{!}{
  \begin{small}
  \renewcommand{\multirowsetup}{\centering}
  \setlength{\tabcolsep}{1pt}
  \renewcommand\arraystretch{1}
  \begin{tabular}{c|cc|cc|cc|cc||cc|cc||cc|cc}
    \toprule
    \hline
    
    \multicolumn{1}{c|}{\rotatebox{0}{\scalebox{0.8}{\textbf{Scenarios}}}} & 
    \multicolumn{8}{c||}{\rotatebox{0}{\scalebox{0.8}{\textbf{Zero-shot}}}} &
    \multicolumn{4}{c||}{\rotatebox{0}{\scalebox{0.8}{\textbf{Full-data}}}} &
    \multicolumn{4}{c}{\rotatebox{0}{\scalebox{0.8}{\textbf{Few-shot}}}} \\

    \hline

    \multicolumn{1}{c|}{\rotatebox{0}{\scalebox{0.8}{\textbf{Models}}}} & 
    \multicolumn{8}{c||}{\rotatebox{0}{\scalebox{0.8}{\textbf{\myformer}}}} &
    \multicolumn{2}{c}{\rotatebox{0}{\scalebox{0.8}{\textbf{OneFitsAll}}}} & 
    \multicolumn{2}{c||}{\rotatebox{0}{\scalebox{0.8}{\textbf{PatchTST}}}} & 
    \multicolumn{2}{c}{\rotatebox{0}{\scalebox{0.8}{\textbf{OneFitsAll}}}} & 
    \multicolumn{2}{c}{\rotatebox{0}{\scalebox{0.8}{\textbf{PatchTST}}}} \\

    \hline

    \multicolumn{1}{c|}{\rotatebox{0}{\textbf{\scalebox{0.8}{Metric}}}} & 
    \scalebox{0.7}{\textbf{MSE}} & \scalebox{0.7}{\textbf{MAE}} & 
    \scalebox{0.7}{\textbf{MSE}} & \scalebox{0.7}{\textbf{MAE}} & 
    \scalebox{0.7}{\textbf{MSE}} & \scalebox{0.7}{\textbf{MAE}} & 
    \scalebox{0.7}{\textbf{MSE}} & \scalebox{0.7}{\textbf{MAE}} & 
    \scalebox{0.7}{\textbf{MSE}} & \scalebox{0.7}{\textbf{MAE}} & 
    \scalebox{0.7}{\textbf{MSE}} & \scalebox{0.7}{\textbf{MAE}} & 
    \scalebox{0.7}{\textbf{MSE}} & \scalebox{0.7}{\textbf{MAE}} & 
    \scalebox{0.7}{\textbf{MSE}} & \scalebox{0.7}{\textbf{MAE}} \\
    
    \hline
    \bottomrule
    
    \multicolumn{1}{c|}{\rotatebox{0}{\scalebox{0.8}{SourceData}}} & 
    \multicolumn{2}{c|}{\rotatebox{0}{\scalebox{0.7}{ETT\{m2,h1,h2\}}}} & 
    \multicolumn{2}{c|}{\rotatebox{0}{\scalebox{0.7}{ETTm2}}} & 
    \multicolumn{2}{c|}{\rotatebox{0}{\scalebox{0.7}{ETTh1}}} & 
    \multicolumn{2}{c||}{\rotatebox{0}{\scalebox{0.7}{ETTh2}}} & 
    \multicolumn{2}{c|}{\rotatebox{0}{\scalebox{0.7}{ETTm1}}} & 
    \multicolumn{2}{c||}{\rotatebox{0}{\scalebox{0.7}{ETTm1}}} & 
    \multicolumn{2}{c|}{\rotatebox{0}{\scalebox{0.7}{ETTm1}}} & 
    \multicolumn{2}{c}{\rotatebox{0}{\scalebox{0.7}{ETTm1}}} \\
    
    \hline

    \multirow{1}{*}{\scalebox{0.8}{\shortstack{SourceData$\rightarrow$ETTm1}}} &
    \textbf{\scalebox{0.78}{0.411}} & \textbf{\scalebox{0.78}{0.416}} & 
    {\scalebox{0.78}{0.434}} & {\scalebox{0.78}{0.429}} & 
    {\scalebox{0.78}{0.682}} & {\scalebox{0.78}{0.742}} & 
    {\scalebox{0.78}{0.742}} & {\scalebox{0.78}{0.802}} & 
    {\scalebox{0.78}{0.352}} & {\scalebox{0.78}{0.383}} & 
    \textbf{\scalebox{0.78}{0.351}} & \textbf{\scalebox{0.78}{0.380}} & 
    {\scalebox{0.78}{0.472}} & {\scalebox{0.78}{0.450}} & 
    {\scalebox{0.78}{0.526}} & {\scalebox{0.78}{0.476}} \\
    
    \hline

    \multicolumn{1}{c|}{\rotatebox{0}{\scalebox{0.8}{SourceData}}} & 
    \multicolumn{2}{c|}{\rotatebox{0}{\scalebox{0.7}{ETT\{m1,h1,h2\}}}} & 
    \multicolumn{2}{c|}{\rotatebox{0}{\scalebox{0.7}{ETTm1}}} & 
    \multicolumn{2}{c|}{\rotatebox{0}{\scalebox{0.7}{ETTh1}}} & 
    \multicolumn{2}{c||}{\rotatebox{0}{\scalebox{0.7}{ETTh2}}} & 
    \multicolumn{2}{c|}{\rotatebox{0}{\scalebox{0.7}{ETTm2}}} & 
    \multicolumn{2}{c||}{\rotatebox{0}{\scalebox{0.7}{ETTm2}}} & 
    \multicolumn{2}{c|}{\rotatebox{0}{\scalebox{0.7}{ETTm2}}} & 
    \multicolumn{2}{c}{\rotatebox{0}{\scalebox{0.7}{ETTm2}}} \\
    
    \hline

    \multirow{1}{*}{\scalebox{0.8}{\shortstack{SourceData$\rightarrow$ETTm2}}} &
    \textbf{\scalebox{0.78}{0.280}} & \textbf{\scalebox{0.78}{0.315}} & 
    {\scalebox{0.78}{0.292}} & {\scalebox{0.78}{0.326}} & 
    {\scalebox{0.78}{0.316}} & {\scalebox{0.78}{0.359}} & 
    {\scalebox{0.78}{0.316}} & {\scalebox{0.78}{0.360}} & 
    {\scalebox{0.78}{0.266}} & {\scalebox{0.78}{0.326}} & 
    \textbf{\scalebox{0.78}{0.255}} & \textbf{\scalebox{0.78}{0.315}} & 
    {\scalebox{0.78}{0.308}} & {\scalebox{0.78}{0.346}} & 
    {\scalebox{0.78}{0.314}} & {\scalebox{0.78}{0.352}} \\
    
    \hline

    \multicolumn{1}{c|}{\rotatebox{0}{\scalebox{0.8}{SourceData}}} & 
    \multicolumn{2}{c|}{\rotatebox{0}{\scalebox{0.7}{ETT\{m1,m2,h2\}}}} & 
    \multicolumn{2}{c|}{\rotatebox{0}{\scalebox{0.7}{ETTm1}}} & 
    \multicolumn{2}{c|}{\rotatebox{0}{\scalebox{0.7}{ETTm2}}} & 
    \multicolumn{2}{c||}{\rotatebox{0}{\scalebox{0.7}{ETTh2}}} & 
    \multicolumn{2}{c|}{\rotatebox{0}{\scalebox{0.7}{ETTh1}}} & 
    \multicolumn{2}{c||}{\rotatebox{0}{\scalebox{0.7}{ETTh1}}} & 
    \multicolumn{2}{c|}{\rotatebox{0}{\scalebox{0.7}{ETTh1}}} & 
    \multicolumn{2}{c}{\rotatebox{0}{\scalebox{0.7}{ETTh1}}} \\
    
    \hline

    \multirow{1}{*}{\scalebox{0.8}{\shortstack{SourceData$\rightarrow$ETTh1}}} &
    \textbf{\scalebox{0.78}{0.461}} & \textbf{\scalebox{0.78}{0.449}} & 
    {\scalebox{0.78}{0.512}} & {\scalebox{0.78}{0.493}} & 
    {\scalebox{0.78}{0.536}} & {\scalebox{0.78}{0.499}} & 
    {\scalebox{0.78}{0.578}} & {\scalebox{0.78}{0.521}} & 
    {\scalebox{0.78}{0.427}} & \textbf{\scalebox{0.78}{0.426}} & 
    \textbf{\scalebox{0.78}{0.413}} & {\scalebox{0.78}{0.430}} & 
    {\scalebox{0.78}{0.693}} & {\scalebox{0.78}{0.568}} & 
    {\scalebox{0.78}{0.712}} & {\scalebox{0.78}{0.580}} \\
    
    \hline
    
    \multicolumn{1}{c|}{\rotatebox{0}{\scalebox{0.8}{SourceData}}} & 
    \multicolumn{2}{c|}{\rotatebox{0}{\scalebox{0.7}{ETT\{m1,m2,h1\}}}} & 
    \multicolumn{2}{c|}{\rotatebox{0}{\scalebox{0.7}{ETTm1}}} & 
    \multicolumn{2}{c|}{\rotatebox{0}{\scalebox{0.7}{ETTm2}}} & 
    \multicolumn{2}{c||}{\rotatebox{0}{\scalebox{0.7}{ETTh1}}} & 
    \multicolumn{2}{c|}{\rotatebox{0}{\scalebox{0.7}{ETTh2}}} & 
    \multicolumn{2}{c||}{\rotatebox{0}{\scalebox{0.7}{ETTh2}}} & 
    \multicolumn{2}{c|}{\rotatebox{0}{\scalebox{0.7}{ETTh2}}} & 
    \multicolumn{2}{c}{\rotatebox{0}{\scalebox{0.7}{ETTh2}}} \\
    
    \hline

    \multirow{1}{*}{\scalebox{0.8}{\shortstack{SourceData$\rightarrow$ETTh2}}} &
    \textbf{\scalebox{0.78}{0.371}} & \textbf{\scalebox{0.78}{0.384}} & 
    {\scalebox{0.78}{0.430}} & {\scalebox{0.78}{0.433}} & 
    {\scalebox{0.78}{0.408}} & {\scalebox{0.78}{0.422}} & 
    {\scalebox{0.78}{0.385}} & {\scalebox{0.78}{0.405}} & 
    {\scalebox{0.78}{0.354}} & {\scalebox{0.78}{0.394}} & 
    \textbf{\scalebox{0.78}{0.330}} & \textbf{\scalebox{0.78}{0.379}} & 
    {\scalebox{0.78}{0.413}} & {\scalebox{0.78}{0.441}} & 
    {\scalebox{0.78}{0.449}} & {\scalebox{0.78}{0.456}} \\
    
    \hline

    \bottomrule
  \end{tabular}
  \end{small}
}
\end{table*}
\begin{table*}[htpb]
  \vspace{-5pt}
  \caption{Comparison of the averaged performance from diverse prediction lengths ($\{96,192,336,720\}$) on \textbf{few-shot forecasting} task, where all samples of trainset are only partially available (5\%) in the training phase.}\label{tab:brief_forecasting_few_5}
  \centering
  \resizebox{0.8\columnwidth}{!}{
  \begin{small}
  \renewcommand{\multirowsetup}{\centering}
  \setlength{\tabcolsep}{1pt}
  \renewcommand\arraystretch{1}
  \begin{tabular}{ccc||cccccccccccc}
    \toprule
    \hline
    
    \multicolumn{1}{c}{\multirow{1}{*}{\scalebox{1.0}{Models}}} & 
    \multicolumn{2}{c}{\rotatebox{0}{\scalebox{0.8}{\textbf{\myformer}}}} & 
    \multicolumn{2}{c}{\rotatebox{0}{\scalebox{0.8}{OneFitsAll}}} & 
    \multicolumn{2}{c}{\rotatebox{0}{\scalebox{0.8}{DLinear}}} & 
    \multicolumn{2}{c}{\rotatebox{0}{\scalebox{0.8}{PatchTST}}} & 
    \multicolumn{2}{c}{\rotatebox{0}{\scalebox{0.8}{TimesNet}}} & 
    \multicolumn{2}{c}{\rotatebox{0}{\scalebox{0.8}{FEDformer}}} & 
    \multicolumn{2}{c}{\rotatebox{0}{\scalebox{0.8}{Autoformer}}} \\

    
    \cline{2-15}
    \multicolumn{1}{c}{\multirow{1}{*}{\scalebox{1.0}{Metric}}} & 
    \scalebox{0.78}{MSE} & \scalebox{0.78}{MAE} & 
    \scalebox{0.78}{MSE} & \scalebox{0.78}{MAE} & 
    \scalebox{0.78}{MSE} & \scalebox{0.78}{MAE} & 
    \scalebox{0.78}{MSE} & \scalebox{0.78}{MAE} & 
    \scalebox{0.78}{MSE} & \scalebox{0.78}{MAE} & 
    \scalebox{0.78}{MSE} & \scalebox{0.78}{MAE} & 
    \scalebox{0.78}{MSE} & \scalebox{0.78}{MAE} \\
    
    \hline
    \scalebox{0.78}{ETTm1} &
    \textbf{\scalebox{0.78}{0.394}} & \textbf{\scalebox{0.78}{0.381}} & 
    {\scalebox{0.78}{0.472}} & {\scalebox{0.78}{0.450}} & 
    {\scalebox{0.78}{0.400}} & {\scalebox{0.78}{0.417}} & 
    {\scalebox{0.78}{0.526}} & {\scalebox{0.78}{0.476}} & 
    {\scalebox{0.78}{0.717}} & {\scalebox{0.78}{0.561}} & 
    {\scalebox{0.78}{0.730}} & {\scalebox{0.78}{0.592}} & 
    {\scalebox{0.78}{0.796}} & {\scalebox{0.78}{0.620}} \\

    \hline
    \scalebox{0.78}{ETTm2} &
    \textbf{\scalebox{0.78}{0.267}} & \textbf{\scalebox{0.78}{0.306}} & 
    {\scalebox{0.78}{0.308}} & {\scalebox{0.78}{0.346}} & 
    {\scalebox{0.78}{0.399}} & {\scalebox{0.78}{0.426}} & 
    {\scalebox{0.78}{0.314}} & {\scalebox{0.78}{0.352}} & 
    {\scalebox{0.78}{0.344}} & {\scalebox{0.78}{0.372}} & 
    {\scalebox{0.78}{0.381}} & {\scalebox{0.78}{0.404}} & 
    {\scalebox{0.78}{0.388}} & {\scalebox{0.78}{0.433}} \\

    \hline
    \scalebox{0.78}{ETTh1} &
    \textbf{\scalebox{0.78}{0.521}} & \textbf{\scalebox{0.78}{0.449}} & 
    {\scalebox{0.78}{0.681}} & {\scalebox{0.78}{0.560}} & 
    {\scalebox{0.78}{0.750}} & {\scalebox{0.78}{0.611}} & 
    {\scalebox{0.78}{0.694}} & {\scalebox{0.78}{0.569}} & 
    {\scalebox{0.78}{0.925}} & {\scalebox{0.78}{0.647}} & 
    {\scalebox{0.78}{0.658}} & {\scalebox{0.78}{0.562}} & 
    {\scalebox{0.78}{0.722}} & {\scalebox{0.78}{0.598}} \\

    \hline
    \scalebox{0.78}{ETTh2} &
    \textbf{\scalebox{0.78}{0.373}} & \textbf{\scalebox{0.78}{0.381}} & 
    {\scalebox{0.78}{0.400}} & {\scalebox{0.78}{0.433}} & 
    {\scalebox{0.78}{0.827}} & {\scalebox{0.78}{0.615}} & 
    {\scalebox{0.78}{0.439}} & {\scalebox{0.78}{0.448}} & 
    {\scalebox{0.78}{0.463}} & {\scalebox{0.78}{0.454}} & 
    {\scalebox{0.78}{0.441}} & {\scalebox{0.78}{0.457}} & 
    {\scalebox{0.78}{0.470}} & {\scalebox{0.78}{0.489}} \\


    \hline
    \scalebox{0.78}{Weather} &
    \textbf{\scalebox{0.78}{0.233}} & \textbf{\scalebox{0.78}{0.262}} & 
    {\scalebox{0.78}{0.263}} & {\scalebox{0.78}{0.301}} & 
    {\scalebox{0.78}{0.263}} & {\scalebox{0.78}{0.308}} & 
    {\scalebox{0.78}{0.269}} & {\scalebox{0.78}{0.303}} & 
    {\scalebox{0.78}{0.298}} & {\scalebox{0.78}{0.318}} & 
    {\scalebox{0.78}{0.309}} & {\scalebox{0.78}{0.353}} & 
    {\scalebox{0.78}{0.310}} & {\scalebox{0.78}{0.353}} \\

    \hline
    \bottomrule
  \end{tabular}
  \end{small}
}
\end{table*}

\paragraph{Experimental setups:}
First, to verify that the proposed wave quantize module and tokenization strategy can fully stimulate the learning ability of the transformer-based backbone as an effective feature program, Table \ref{tab:brief_forecasting_full} and Table \ref{tab:brief_forecasting_zero_cross_single} show the results of the long-time forecasting task under full-data and zero-shot setting. 
Subsequently, to validate that the proposed approach can learn key information from multiple time series domains and efficiently migrate it to previously unseen target domains, we designed diverse adaption approaches for zero-shot learning. 
Specifically, Table \ref{tab:brief_forecasting_zero_cross_multi} shows the performance of the models in the target domain test set directly after pre-training in the single source domain and unified pre-training in multiple source domains, respectively. 
Finally, to illustrate the superior data efficiency of the model, Table \ref{tab:brief_forecasting_few_5} shows the results of the models under the few-shot 5\% settings, respectively. 

\paragraph{Analysis of results:}
In the full-data forecasting task shown in Table \ref{tab:brief_forecasting_full}, the proposed \myformer exhibits the best performance in \textbf{85\%} of the metrics. 
In the zero-shot forecasting task shown in Table \ref{tab:brief_forecasting_zero_cross_single}, the average MSE of the proposed \myformer is reduced by \textbf{26.2\%}, \textbf{19.6\%}, \textbf{14.3\%}, and \textbf{33.4\%} compared to the existing OneFitsAll, DLinear, PatchTST, and TimesNet, respectively.
As shown in Table \ref{tab:brief_forecasting_zero_cross_multi}, the performance of general unsupervised cross-domain migration on the zero-shot forecasting task would be far superior to that of single-domain pre-training, which indicates that our proposed wave quantize strategy could alleviate the negative migration on the time series. 
Besides, the performance of the proposed \myformer on the zero-shot task is much superior over the few-shot task of the existing SOTA models (Average MSE reduced by \textbf{31.4\%}), and archives comparable performance to the full-data results of the existing SOTA models (Average MSE difference is only \textbf{8.1\%}).
In the few-shot (5\%) forecasting task shown in Table \ref{tab:brief_forecasting_few_5}, the average MSE of the proposed \myformer is reduced by \textbf{15.8\%}, \textbf{32.2\%}, \textbf{20.2\%}, and \textbf{34.9\%} compared to the existing OneFitsAll, DLinear, PatchTST, and TimesNet, respectively.

\begin{figure}[t]
\begin{center}
\centerline{\includegraphics[width=0.7\columnwidth]{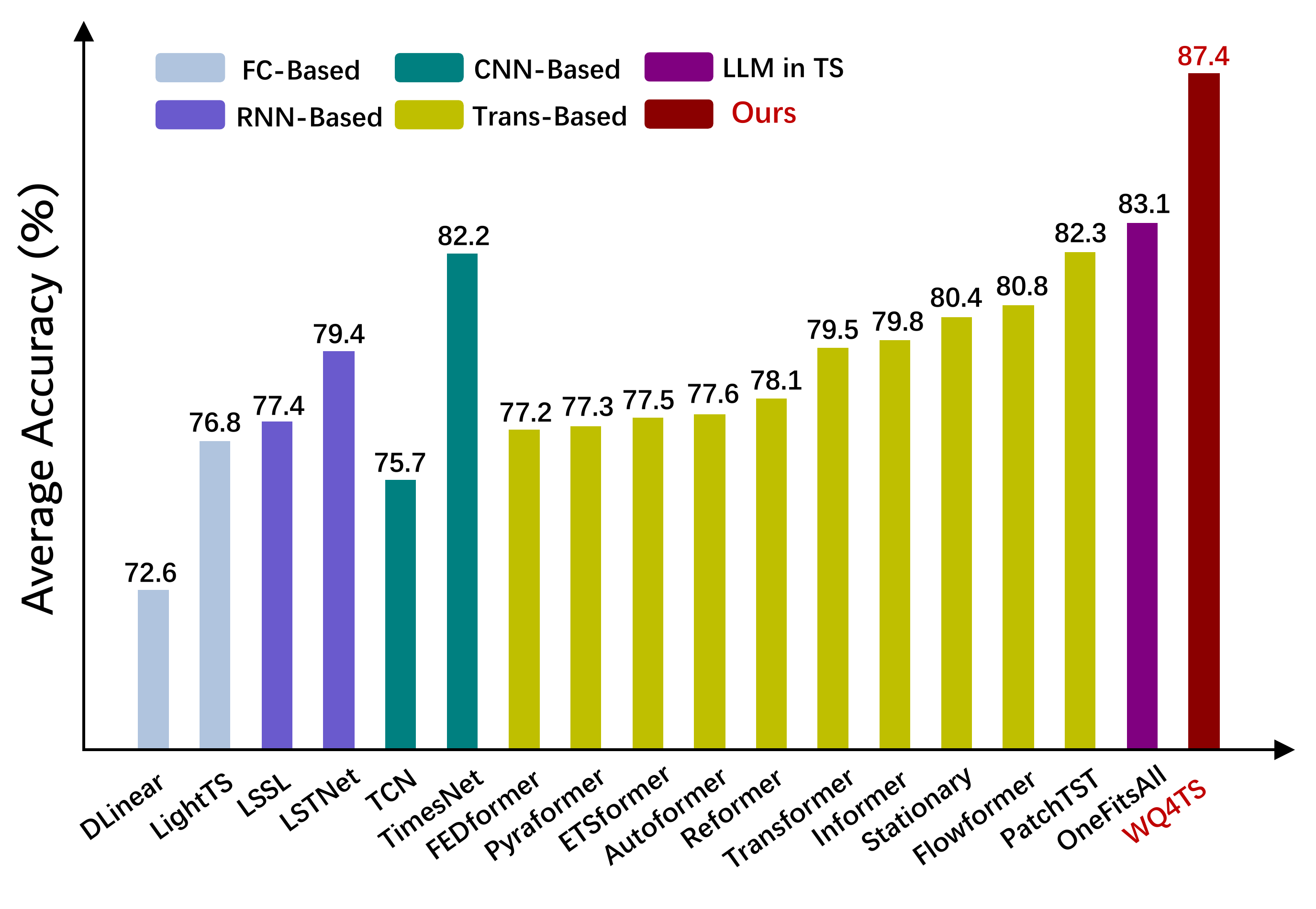}}
\caption{Model comparison in classification. The results are averaged from 35 subsets of UCR. The proposed \myformer achieves the best performance on the classification task under the full-data setting.}
\label{fig:classification_results}
\end{center}
\end{figure}

\subsection{Imputation} 
\begin{table*}[htbp]
  \caption{Comparison of the averaged performance from mask ratios ($\{12.5\%,25\%,37.5\%,50\%\}$) on \textbf{full-data imputation} task.}\label{tab:brief_imputation_full}
  \centering
  \resizebox{1.0\columnwidth}{!}{
  \begin{threeparttable}
  \begin{small}
  \renewcommand{\multirowsetup}{\centering}
  \setlength{\tabcolsep}{0.8pt}
  \begin{tabular}{c|cc|cc|cc|cc|cc|cc|cc|cc|cc|cc}
    \toprule
    \hline
    
    \multicolumn{1}{c}{\multirow{2}{*}{Models}} & 
    \multicolumn{2}{c}{\rotatebox{0}{\scalebox{0.8}{\textbf{\myformer}}}} &
    \multicolumn{2}{c}{\rotatebox{0}{\scalebox{0.8}{OneFitsAll}}} & 
    \multicolumn{2}{c}{\rotatebox{0}{\scalebox{0.8}{TimesNet}}} &
    \multicolumn{2}{c}{\rotatebox{0}{\scalebox{0.8}{PatchTST}}} & 
    \multicolumn{2}{c}{\rotatebox{0}{\scalebox{0.8}{ETSformer}}} &
    \multicolumn{2}{c}{\rotatebox{0}{\scalebox{0.8}{LightTS}}} & 
    \multicolumn{2}{c}{\rotatebox{0}{\scalebox{0.8}{DLinear}}} & 
    \multicolumn{2}{c}{\rotatebox{0}{\scalebox{0.8}{FEDformer}}} & 
    \multicolumn{2}{c}{\rotatebox{0}{\scalebox{0.8}{Stationary}}} & 
    \multicolumn{2}{c}{\rotatebox{0}{\scalebox{0.8}{Autoformer}}} \\
    
    \multicolumn{1}{c}{} & 
    \multicolumn{2}{c}{\scalebox{0.8}{(\textbf{Ours})}} & 
    \multicolumn{2}{c}{\scalebox{0.8}{\citeyearpar{Nie2023ATS}}} &  
    \multicolumn{2}{c}{\scalebox{0.8}{\citeyearpar{wu2023timesnet}}} &
    \multicolumn{2}{c}{\scalebox{0.8}{\citeyearpar{Nie2023ATS}}} &  
    \multicolumn{2}{c}{\scalebox{0.8}{\citeyearpar{wang2023micn}}} &
    \multicolumn{2}{c}{\scalebox{0.8}{\citeyearpar{Zhang2022LessIM}}} &
    \multicolumn{2}{c}{\scalebox{0.8}{\citeyearpar{Zeng2022AreTE}}} & 
    \multicolumn{2}{c}{\scalebox{0.8}{\citeyearpar{zhou2022fedformer}}} & 
    \multicolumn{2}{c}{\scalebox{0.8}{\citeyearpar{Liu2022NonstationaryTR}}} & 
    \multicolumn{2}{c}{\scalebox{0.8}{\citeyearpar{wu2021autoformer}}} \\
    
    \cline{2-21}
    
    \multicolumn{1}{c}{\scalebox{0.76}{MaskRatio}} & 
    \scalebox{0.76}{MSE} & \scalebox{0.76}{MAE} & 
    \scalebox{0.76}{MSE} & \scalebox{0.76}{MAE} & 
    \scalebox{0.76}{MSE} & \scalebox{0.76}{MAE} & 
    \scalebox{0.76}{MSE} & \scalebox{0.76}{MAE} & 
    \scalebox{0.76}{MSE} & \scalebox{0.76}{MAE} & 
    \scalebox{0.76}{MSE} & \scalebox{0.76}{MAE} & 
    \scalebox{0.76}{MSE} & \scalebox{0.76}{MAE} & 
    \scalebox{0.76}{MSE} & \scalebox{0.76}{MAE} & 
    \scalebox{0.76}{MSE} & \scalebox{0.76}{MAE} & 
    \scalebox{0.76}{MSE} & \scalebox{0.76}{MAE} \\
    
    \hline
    \scalebox{0.78}{ETTm1} &
    \textbf{\scalebox{0.78}{0.026}} &\textbf{\scalebox{0.78}{0.099}} & 
    {\scalebox{0.78}{0.028}} &{\scalebox{0.78}{0.105}} & 
    {\scalebox{0.78}{0.027}} &{\scalebox{0.78}{0.107}} & 
    {\scalebox{0.78}{0.047}} &{\scalebox{0.78}{0.140}} & 
    {\scalebox{0.78}{0.120}} &{\scalebox{0.78}{0.253}} & 
    {\scalebox{0.78}{0.104}} &{\scalebox{0.78}{0.218}} & 
    {\scalebox{0.78}{0.093}} &{\scalebox{0.78}{0.206}} & 
    {\scalebox{0.78}{0.062}} &{\scalebox{0.78}{0.177}} & 
    {\scalebox{0.78}{0.036}} &{\scalebox{0.78}{0.126}} & 
    {\scalebox{0.78}{0.051}} &{\scalebox{0.78}{0.150}} \\
    
    \hline
    \scalebox{0.78}{ETTm2} &
    \textbf{\scalebox{0.78}{0.020}} &{\scalebox{0.78}{0.085}} & 
    {\scalebox{0.78}{0.021}} &\textbf{\scalebox{0.78}{0.084}} & 
    {\scalebox{0.78}{0.022}} &{\scalebox{0.78}{0.088}} & 
    {\scalebox{0.78}{0.029}} &{\scalebox{0.78}{0.102}} & 
    {\scalebox{0.78}{0.208}} &{\scalebox{0.78}{0.327}} & 
    {\scalebox{0.78}{0.046}} &{\scalebox{0.78}{0.151}} & 
    {\scalebox{0.78}{0.096}} &{\scalebox{0.78}{0.208}} & 
    {\scalebox{0.78}{0.101}} &{\scalebox{0.78}{0.215}} & 
    {\scalebox{0.78}{0.026}} &{\scalebox{0.78}{0.099}} & 
    {\scalebox{0.78}{0.029}} &{\scalebox{0.78}{0.105}} \\
    
    \hline
    \scalebox{0.78}{ETTh1} &
    \textbf{\scalebox{0.78}{0.064}} &\textbf{\scalebox{0.78}{0.167}} & 
    {\scalebox{0.78}{0.069}} &{\scalebox{0.78}{0.173}} & 
    {\scalebox{0.78}{0.078}} &{\scalebox{0.78}{0.187}} & 
    {\scalebox{0.78}{0.115}} &{\scalebox{0.78}{0.224}} & 
    {\scalebox{0.78}{0.202}} &{\scalebox{0.78}{0.329}} & 
    {\scalebox{0.78}{0.284}} &{\scalebox{0.78}{0.373}} & 
    {\scalebox{0.78}{0.201}} &{\scalebox{0.78}{0.306}} & 
    {\scalebox{0.78}{0.117}} &{\scalebox{0.78}{0.246}} & 
    {\scalebox{0.78}{0.094}} &{\scalebox{0.78}{0.201}} & 
    {\scalebox{0.78}{0.103}} &{\scalebox{0.78}{0.214}} \\
    
    \hline
    \scalebox{0.78}{ETTh2} &
    \textbf{\scalebox{0.78}{0.047}} &\textbf{\scalebox{0.78}{0.138}} & 
    {\scalebox{0.78}{0.048}} &{\scalebox{0.78}{0.141}} & 
    {\scalebox{0.78}{0.049}} &{\scalebox{0.78}{0.146}} & 
    {\scalebox{0.78}{0.065}} &{\scalebox{0.78}{0.163}} & 
    {\scalebox{0.78}{0.367}} &{\scalebox{0.78}{0.436}} & 
    {\scalebox{0.78}{0.119}} &{\scalebox{0.78}{0.250}} & 
    {\scalebox{0.78}{0.142}} &{\scalebox{0.78}{0.259}} & 
    {\scalebox{0.78}{0.163}} &{\scalebox{0.78}{0.279}} & 
    {\scalebox{0.78}{0.053}} &{\scalebox{0.78}{0.152}} & 
    {\scalebox{0.78}{0.055}} &{\scalebox{0.78}{0.156}} \\
    
    \hline
    \scalebox{0.78}{Electricity} &
    \textbf{\scalebox{0.78}{0.053}} &\textbf{\scalebox{0.78}{0.147}} & 
    {\scalebox{0.78}{0.090}} &{\scalebox{0.78}{0.207}} & 
    {\scalebox{0.78}{0.092}} &{\scalebox{0.78}{0.210}} & 
    {\scalebox{0.78}{0.072}} &{\scalebox{0.78}{0.183}} & 
    {\scalebox{0.78}{0.214}} &{\scalebox{0.78}{0.339}} & 
    {\scalebox{0.78}{0.131}} &{\scalebox{0.78}{0.262}} & 
    {\scalebox{0.78}{0.132}} &{\scalebox{0.78}{0.260}} & 
    {\scalebox{0.78}{0.130}} &{\scalebox{0.78}{0.259}} & 
    {\scalebox{0.78}{0.100}} &{\scalebox{0.78}{0.218}} & 
    {\scalebox{0.78}{0.101}} &{\scalebox{0.78}{0.225}} \\
    
    \hline
    \scalebox{0.78}{Weather} &
    \textbf{\scalebox{0.78}{0.028}} &\textbf{\scalebox{0.78}{0.046}} & 
    {\scalebox{0.78}{0.031}} &{\scalebox{0.78}{0.056}} & 
    {\scalebox{0.78}{0.030}} &{\scalebox{0.78}{0.054}} & 
    {\scalebox{0.78}{0.060}} &{\scalebox{0.78}{0.144}} & 
    {\scalebox{0.78}{0.076}} &{\scalebox{0.78}{0.171}} & 
    {\scalebox{0.78}{0.055}} &{\scalebox{0.78}{0.117}} & 
    {\scalebox{0.78}{0.052}} &{\scalebox{0.78}{0.110}} & 
    {\scalebox{0.78}{0.099}} &{\scalebox{0.78}{0.203}} & 
    {\scalebox{0.78}{0.032}} &{\scalebox{0.78}{0.059}} & 
    {\scalebox{0.78}{0.031}} &{\scalebox{0.78}{0.057}} \\
    
    
    \hline
    \bottomrule
  \end{tabular}
    \end{small}
  \end{threeparttable}
   }
\end{table*}
\begin{table*}[htpb]
  \caption{Comparison of the averaged performance from diverse mask ratios ($\{12.5\%,25\%,37.5\%,50\%\}$) on \textbf{zero-shot imputation} task. Where $Source\rightarrow Traget$ indicates that the model is first pre-trained on the train set of the \emph{SourceDomain}, subsequently, the model parameters are frozen and predicted on the test set of the \emph{TargetDomain}.}\label{tab:brief_imputation_zero_cross_single}
  \centering
  \resizebox{0.9\columnwidth}{!}{
  \begin{small}
  \renewcommand{\multirowsetup}{\centering}
  \setlength{\tabcolsep}{1pt}
  \renewcommand\arraystretch{1}
  \begin{tabular}{ccc||cccccccccccc}
    \toprule
    \hline
    
    \multicolumn{1}{c}{\multirow{2}{*}{\scalebox{1.0}{Scenarios}}} & 
    \multicolumn{2}{c}{\rotatebox{0}{\scalebox{0.8}{\textbf{\myformer}}}} & 
    \multicolumn{2}{c}{\rotatebox{0}{\scalebox{0.8}{OneFitsAll}}} & 
    \multicolumn{2}{c}{\rotatebox{0}{\scalebox{0.8}{DLinear}}} & 
    \multicolumn{2}{c}{\rotatebox{0}{\scalebox{0.8}{PatchTST}}} & 
    \multicolumn{2}{c}{\rotatebox{0}{\scalebox{0.8}{TimesNet}}} & 
    \multicolumn{2}{c}{\rotatebox{0}{\scalebox{0.8}{FEDformer}}} & 
    \multicolumn{2}{c}{\rotatebox{0}{\scalebox{0.8}{Autoformer}}} \\

    
    \cline{2-15} &
    \scalebox{0.78}{MSE} & \scalebox{0.78}{MAE} & 
    \scalebox{0.78}{MSE} & \scalebox{0.78}{MAE} & 
    \scalebox{0.78}{MSE} & \scalebox{0.78}{MAE} & 
    \scalebox{0.78}{MSE} & \scalebox{0.78}{MAE} & 
    \scalebox{0.78}{MSE} & \scalebox{0.78}{MAE} & 
    \scalebox{0.78}{MSE} & \scalebox{0.78}{MAE} & 
    \scalebox{0.78}{MSE} & \scalebox{0.78}{MAE} \\
    \hline
    
    \multirow{1}{*}{\scalebox{0.8}{\shortstack{ETTm2$\rightarrow$ETTm1}}} &
    \textbf{\scalebox{0.78}{0.050}} & \textbf{\scalebox{0.78}{0.144}} & 
    {\scalebox{0.78}{0.767}} & {\scalebox{0.78}{0.549}} & 
    {\scalebox{0.78}{0.203}} & {\scalebox{0.78}{0.295}} & 
    {\scalebox{0.78}{0.099}} & {\scalebox{0.78}{0.191}} & 
    {\scalebox{0.78}{0.118}} & {\scalebox{0.78}{0.205}} & 
    {\scalebox{0.78}{0.762}} & {\scalebox{0.78}{0.655}} & 
    {\scalebox{0.78}{0.507}} & {\scalebox{0.78}{0.501}} \\ 

    \hline
    
    \multirow{1}{*}{\scalebox{0.8}{\shortstack{ETTm1$\rightarrow$ETTm2}}} &
    \textbf{\scalebox{0.78}{0.029}} & \textbf{\scalebox{0.78}{0.098}} & 
    {\scalebox{0.78}{0.145}} & {\scalebox{0.78}{0.256}} & 
    {\scalebox{0.78}{0.114}} & {\scalebox{0.78}{0.224}} & 
    {\scalebox{0.78}{0.058}} & {\scalebox{0.78}{0.149}} & 
    {\scalebox{0.78}{0.093}} & {\scalebox{0.78}{0.216}} & 
    {\scalebox{0.78}{2.140}} & {\scalebox{0.78}{1.113}} & 
    {\scalebox{0.78}{1.342}} & {\scalebox{0.78}{0.842}} \\ 

    \hline
    
    \multirow{1}{*}{\scalebox{0.8}{\shortstack{ETTm1$\rightarrow$ETTh1}}} &
    \textbf{\scalebox{0.78}{0.176}} & \textbf{\scalebox{0.78}{0.274}} & 
    {\scalebox{0.78}{0.854}} & {\scalebox{0.78}{0.602}} & 
    {\scalebox{0.78}{0.397}} & {\scalebox{0.78}{0.424}} & 
    {\scalebox{0.78}{0.313}} & {\scalebox{0.78}{0.366}} & 
    {\scalebox{0.78}{0.327}} & {\scalebox{0.78}{0.395}} & 
    {\scalebox{0.78}{1.074}} & {\scalebox{0.78}{0.787}} & 
    {\scalebox{0.78}{0.956}} & {\scalebox{0.78}{0.725}} \\ 

    \hline
    
    \multirow{1}{*}{\scalebox{0.8}{\shortstack{ETTm1$\rightarrow$ETTh2}}} &
    \textbf{\scalebox{0.78}{0.064}} & \textbf{\scalebox{0.78}{0.160}} & 
    {\scalebox{0.78}{0.245}} & {\scalebox{0.78}{0.333}} & 
    {\scalebox{0.78}{0.160}} & {\scalebox{0.78}{0.277}} & 
    {\scalebox{0.78}{0.079}} & {\scalebox{0.78}{0.184}} & 
    {\scalebox{0.78}{0.109}} & {\scalebox{0.78}{0.238}} & 
    {\scalebox{0.78}{2.796}} & {\scalebox{0.78}{1.266}} & 
    {\scalebox{0.78}{2.473}} & {\scalebox{0.78}{1.206}} \\ 

    \hline
    
    \multirow{1}{*}{\scalebox{0.8}{\shortstack{ETTm1$\rightarrow$Exchange}}} &
    \textbf{\scalebox{0.78}{0.003}} & \textbf{\scalebox{0.78}{0.031}} & 
    {\scalebox{0.78}{0.027}} & {\scalebox{0.78}{0.117}} & 
    {\scalebox{0.78}{0.358}} & {\scalebox{0.78}{0.437}} & 
    {\scalebox{0.78}{0.006}} & {\scalebox{0.78}{0.044}} & 
    {\scalebox{0.78}{0.045}} & {\scalebox{0.78}{0.150}} & 
    {\scalebox{0.78}{3.107}} & {\scalebox{0.78}{1.440}} & 
    {\scalebox{0.78}{2.904}} & {\scalebox{0.78}{1.382}} \\ 

    \hline
    
    \multirow{1}{*}{\scalebox{0.8}{\shortstack{ETTm1$\rightarrow$Weather}}} &
    \textbf{\scalebox{0.78}{0.030}} & \textbf{\scalebox{0.78}{0.043}} & 
    {\scalebox{0.78}{0.103}} & {\scalebox{0.78}{0.160}} & 
    {\scalebox{0.78}{0.174}} & {\scalebox{0.78}{0.283}} & 
    {\scalebox{0.78}{0.065}} & {\scalebox{0.78}{0.099}} & 
    {\scalebox{0.78}{0.132}} & {\scalebox{0.78}{0.188}} & 
    {\scalebox{0.78}{0.999}} & {\scalebox{0.78}{0.779}} & 
    {\scalebox{0.78}{1.000}} & {\scalebox{0.78}{0.788}} \\

    \hline
    \bottomrule
  \end{tabular}
  \end{small}
}
\end{table*}

\paragraph{Experimental setups:}
The imputation task, that is predicting the masked portion of the original series based on the unmasked portion. Similarly, to the forecasting task, Table \ref{tab:brief_imputation_full} and Table \ref{tab:brief_imputation_zero_cross_single} show the experimental results for the imputation task under the full-data and zero-shot settings, respectively.

\paragraph{Analysis of results:}
In the full-data imputation task shown in Table \ref{tab:brief_imputation_full}, the proposed \myformer exhibits the best performance in \textbf{91.6\%} of the metrics. 
In the zero-shot imputation task shown in Table \ref{tab:brief_imputation_zero_cross_single}, the average MSE values of the proposed \myformer are reduced by \textbf{83.6\%}, \textbf{74.9\%}, \textbf{43.2\%}, and \textbf{57.3\%} compared to the existing OneFitsAll, DLinear, PatchTST, and TimesNet, respectively.

\subsection{Classification}

\begin{table*}[htpb]
  \caption{Comparison of the accuracy and F1-score on \textbf{few-shot classification} task. Where \emph{Scenario-i: Source-i}$\rightarrow$\emph{Traget-i} ($i\in[0,..,7]$) indicates that the model is first pre-trained in the \emph{SourceDomain}, subsequently, the parameters are fine-tuned in partial (5\%/10\%) samples of the \emph{TargetDomain} and finally predicted on the \emph{TargetDomain}.}\label{tab:brief_classification_few}
  \centering
  \resizebox{0.9\columnwidth}{!}{
  \begin{small}
  \renewcommand{\multirowsetup}{\centering}
  \setlength{\tabcolsep}{1pt}
  \renewcommand\arraystretch{1.0}
  \begin{tabular}{c|c|cc|cc|cc|cc|cc|cc|cc}
    \toprule
    \hline
    
    \multicolumn{2}{c}{\multirow{1}{*}{\scalebox{0.8}{Scenarios}}} & 
    \multicolumn{2}{c}{\rotatebox{0}{\scalebox{0.8}{Scenario-1}}} & 
    \multicolumn{2}{c}{\rotatebox{0}{\scalebox{0.8}{Scenario-2}}} & 
    \multicolumn{2}{c}{\rotatebox{0}{\scalebox{0.8}{Scenario-3}}} & 
    \multicolumn{2}{c}{\rotatebox{0}{\scalebox{0.8}{Scenario-4}}} & 
    \multicolumn{2}{c}{\rotatebox{0}{\scalebox{0.8}{Scenario-5}}} & 
    \multicolumn{2}{c}{\rotatebox{0}{\scalebox{0.8}{Scenario-6}}} & 
    \multicolumn{2}{c}{\rotatebox{0}{\scalebox{0.8}{Scenario-7}}} \\

    \hline
    \scalebox{0.78}{Task} & \scalebox{0.78}{Models} &
    \scalebox{0.78}{ACC} & \scalebox{0.78}{F1} & 
    \scalebox{0.78}{ACC} & \scalebox{0.78}{F1} & 
    \scalebox{0.78}{ACC} & \scalebox{0.78}{F1} & 
    \scalebox{0.78}{ACC} & \scalebox{0.78}{F1} & 
    \scalebox{0.78}{ACC} & \scalebox{0.78}{F1} & 
    \scalebox{0.78}{ACC} & \scalebox{0.78}{F1} & 
    \scalebox{0.78}{ACC} & \scalebox{0.78}{F1} \\
    
    \hline

    \multirow{3}{*}[-0.5ex]{\rotatebox{90}{\scalebox{0.78}{\shortstack{Full Data}}}}
    & \scalebox{0.78}{TimesNet} & 
    \textbf{\scalebox{0.78}{85.22}} & \textbf{\scalebox{0.78}{82.99}} & 
    \textbf{\scalebox{0.78}{97.96}} & \textbf{\scalebox{0.78}{97.64}} & 
    {\scalebox{0.78}{76.37}} & {\scalebox{0.78}{76.61}} & 
    {\scalebox{0.78}{78.10}} & {\scalebox{0.78}{75.26}} & 
    \textbf{\scalebox{0.78}{97.18}} & \textbf{\scalebox{0.78}{97.18}} & 
    {\scalebox{0.78}{95.00}} & {\scalebox{0.78}{95.00}} &
    {\scalebox{0.78}{86.34}} & {\scalebox{0.78}{71.11}} \\
    & \scalebox{0.78}{PatchTST} & 
    {\scalebox{0.78}{78.69}} & {\scalebox{0.78}{75.39}} & 
    {\scalebox{0.78}{95.63}} & {\scalebox{0.78}{94.80}} &
    {\scalebox{0.78}{69.55}} & {\scalebox{0.78}{72.08}} & 
    {\scalebox{0.78}{83.13}} & {\scalebox{0.78}{78.10}} & 
    {\scalebox{0.78}{96.60}} & {\scalebox{0.78}{96.61}} & 
    \textbf{\scalebox{0.78}{98.89}} & \textbf{\scalebox{0.78}{98.89}} &
    \textbf{\scalebox{0.78}{89.27}} & \textbf{\scalebox{0.78}{84.07}} \\
    & \scalebox{0.78}{OneFitsAll} & 
    {\scalebox{0.78}{79.73}} & {\scalebox{0.78}{77.20}} & 
    {\scalebox{0.78}{96.79}} & {\scalebox{0.78}{96.32}} & 
    \textbf{\scalebox{0.78}{79.37}} & \textbf{\scalebox{0.78}{80.62}} & 
    {\scalebox{0.78}{83.03}} & {\scalebox{0.78}{79.50}} & 
    {\scalebox{0.78}{96.99}} & {\scalebox{0.78}{96.99}} & 
    {\scalebox{0.78}{98.33}} & {\scalebox{0.78}{98.34}} &
    {\scalebox{0.78}{88.78}} & {\scalebox{0.78}{83.12}} \\

    \hline
    \multirow{5}{*}[-1ex]{\rotatebox{90}{\scalebox{0.78}{\shortstack{Few-shot \\ (10\%)}}}}
    & \scalebox{0.78}{Random init.} & 
    {\scalebox{0.78}{75.26}} & {\scalebox{0.78}{70.60}} & 
    {\scalebox{0.78}{66.79}} & {\scalebox{0.78}{66.36}} & 
    {\scalebox{0.78}{58.51}} & {\scalebox{0.78}{46.53}} & 
    {\scalebox{0.78}{47.39}} & {\scalebox{0.78}{29.49}} & 
    {\scalebox{0.78}{78.92}} & {\scalebox{0.78}{79.07}} & 
    {\scalebox{0.78}{50.00}} & {\scalebox{0.78}{33.33}} &
    {\scalebox{0.78}{41.46}} & {\scalebox{0.78}{18.58}} \\
    \cline{2-16}
    & \scalebox{0.78}{TimesNet} & 
    {\scalebox{0.78}{77.32}} & {\scalebox{0.78}{74.44}} & 
    {\scalebox{0.78}{91.25}} & {\scalebox{0.78}{89.43}} & 
    {\scalebox{0.78}{64.23}} & {\scalebox{0.78}{66.07}} & 
    {\scalebox{0.78}{83.13}} & {\scalebox{0.78}{66.52}} & 
    {\scalebox{0.78}{86.59}} & {\scalebox{0.78}{86.69}} & 
    {\scalebox{0.78}{83.89}} & {\scalebox{0.78}{83.83}} &
    {\scalebox{0.78}{85.85}} & {\scalebox{0.78}{72.08}} \\
    & \scalebox{0.78}{PatchTST} & 
    {\scalebox{0.78}{78.69}} & {\scalebox{0.78}{74.57}} & 
    {\scalebox{0.78}{80.76}} & {\scalebox{0.78}{75.36}} & 
    {\scalebox{0.78}{63.56}} & {\scalebox{0.78}{64.40}} & 
    {\scalebox{0.78}{48.19}} & {\scalebox{0.78}{45.63}} & 
    {\scalebox{0.78}{87.37}} & {\scalebox{0.78}{87.59}} & 
    {\scalebox{0.78}{90.00}} & {\scalebox{0.78}{89.96}} &
    {\scalebox{0.78}{87.80}} & {\scalebox{0.78}{80.67}} \\
    & \scalebox{0.78}{OneFitsAll} & 
    {\scalebox{0.78}{77.66}} & {\scalebox{0.78}{74.34}} & 
    {\scalebox{0.78}{78.13}} & {\scalebox{0.78}{71.24}} & 
    {\scalebox{0.78}{64.56}} & {\scalebox{0.78}{64.03}} & 
    {\scalebox{0.78}{83.13}} & {\scalebox{0.78}{67.34}} & 
    {\scalebox{0.78}{94.66}} & {\scalebox{0.78}{94.68}} & 
    {\scalebox{0.78}{97.23}} & {\scalebox{0.78}{97.23}} &
    {\scalebox{0.78}{87.32}} & {\scalebox{0.78}{71.62}} \\
    & \scalebox{0.78}{\myformer} & 
    \textbf{\scalebox{0.78}{86.25}} & \textbf{\scalebox{0.78}{84.65}} & 
    \textbf{\scalebox{0.78}{94.17}} & \textbf{\scalebox{0.78}{93.62}} & 
    \textbf{\scalebox{0.78}{91.51}} & \textbf{\scalebox{0.78}{91.38}} & 
    \textbf{\scalebox{0.78}{93.17}} & \textbf{\scalebox{0.78}{91.03}} & 
    \textbf{\scalebox{0.78}{96.31}} & \textbf{\scalebox{0.78}{96.31}} & 
    \textbf{\scalebox{0.78}{98.89}} & \textbf{\scalebox{0.78}{98.89}} &
    \textbf{\scalebox{0.78}{89.27}} & \textbf{\scalebox{0.78}{84.61}} \\

    \hline
    \multirow{5}{*}[-1ex]{\rotatebox{90}{\scalebox{0.78}{\shortstack{Few-shot \\ (5\%)}}}}
    & \scalebox{0.78}{Random init.} & 
    {\scalebox{0.78}{68.38}} & {\scalebox{0.78}{48.30}} & 
    {\scalebox{0.78}{27.41}} & {\scalebox{0.78}{28.16}} & 
    {\scalebox{0.78}{47.09}} & {\scalebox{0.78}{37.94}} & 
    {\scalebox{0.78}{35.74}} & {\scalebox{0.78}{24.63}} & 
    {\scalebox{0.78}{70.86}} & {\scalebox{0.78}{71.21}} & 
    {\scalebox{0.78}{50.00}} & {\scalebox{0.78}{33.33}} &
    {\scalebox{0.78}{36.59}} & {\scalebox{0.78}{16.39}} \\
    \cline{2-16}
    & \scalebox{0.78}{TimesNet} & 
    {\scalebox{0.78}{78.01}} & {\scalebox{0.78}{73.61}} & 
    {\scalebox{0.78}{27.41}} & {\scalebox{0.78}{28.16}} & 
    {\scalebox{0.78}{60.23}} & {\scalebox{0.78}{57.26}} & 
    {\scalebox{0.78}{35.74}} & {\scalebox{0.78}{26.61}} & 
    {\scalebox{0.78}{73.28}} & {\scalebox{0.78}{74.01}} & 
    {\scalebox{0.78}{78.89}} & {\scalebox{0.78}{78.85}} &
    {\scalebox{0.78}{85.85}} & {\scalebox{0.78}{63.18}} \\
    & \scalebox{0.78}{PatchTST} & 
    {\scalebox{0.78}{78.35}} & {\scalebox{0.78}{74.52}} & 
    {\scalebox{0.78}{81.05}} & {\scalebox{0.78}{75.44}} & 
    {\scalebox{0.78}{57.07}} & {\scalebox{0.78}{42.98}} & 
    {\scalebox{0.78}{37.35}} & {\scalebox{0.78}{30.04}} & 
    {\scalebox{0.78}{60.93}} & {\scalebox{0.78}{61.27}} & 
    {\scalebox{0.78}{82.78}} & {\scalebox{0.78}{82.73}} &
    {\scalebox{0.78}{87.32}} & {\scalebox{0.78}{79.84}} \\
    & \scalebox{0.78}{OneFitsAll} & 
    {\scalebox{0.78}{76.98}} & {\scalebox{0.78}{72.85}} & 
    {\scalebox{0.78}{27.41}} & {\scalebox{0.78}{28.16}} & 
    {\scalebox{0.78}{57.07}} & {\scalebox{0.78}{42.98}} & 
    {\scalebox{0.78}{35.74}} & {\scalebox{0.78}{24.63}} & 
    {\scalebox{0.78}{83.19}} & {\scalebox{0.78}{83.65}} & 
    {\scalebox{0.78}{90.00}} & {\scalebox{0.78}{89.99}} &
    {\scalebox{0.78}{84.39}} & {\scalebox{0.78}{58.62}} \\
    & \scalebox{0.78}{\myformer} & 
    \textbf{\scalebox{0.78}{84.54}} & \textbf{\scalebox{0.78}{82.11}} & 
    \textbf{\scalebox{0.78}{91.55}} & \textbf{\scalebox{0.78}{89.76}} & 
    \textbf{\scalebox{0.78}{85.03}} & \textbf{\scalebox{0.78}{86.09}} & 
    \textbf{\scalebox{0.78}{83.85}} & \textbf{\scalebox{0.78}{81.93}} & 
    \textbf{\scalebox{0.78}{94.85}} & \textbf{\scalebox{0.78}{94.85}} & 
    \textbf{\scalebox{0.78}{95.00}} & \textbf{\scalebox{0.78}{95.00}} &
    \textbf{\scalebox{0.78}{87.80}} & \textbf{\scalebox{0.78}{82.48}} \\
    
    \hline
    \bottomrule
  \end{tabular}
  \end{small}
}
\end{table*}

\begin{table*}[htpb]
  \caption{Comparison of the ablation experiment, where \emph{w/o WQ} indicates the model without the proposed \emph{wave quantize} module. To guarantee fairness, \emph{w/o WQ} and \emph{\myformer} have the same model structure and parameter size.}\label{tab:ablation_brief}
  \centering
  \resizebox{0.8\columnwidth}{!}{
  \begin{small}
  \renewcommand{\multirowsetup}{\centering}
  \setlength{\tabcolsep}{1.pt}
  \renewcommand\arraystretch{1.}
  \begin{tabular}{c|cccc|cccc|cccc|cccc}
    \hline
    \hline
    \multicolumn{1}{c|}{\multirow{3}{*}{\scalebox{0.8}{Variant}}} & 
    \multicolumn{4}{c|}{\multirow{1}{*}{\rotatebox{0}{\scalebox{0.8}{full datal}}}} & 
    \multicolumn{4}{c|}{\multirow{1}{*}{\rotatebox{0}{\scalebox{0.8}{few shot (10\%)}}}} & 
    \multicolumn{4}{c|}{\multirow{1}{*}{\rotatebox{0}{\scalebox{0.8}{zero shot (single)}}}} & 
    \multicolumn{4}{c}{\multirow{1}{*}{\rotatebox{0}{\scalebox{0.8}{zero shot (multi)}}}} \\
    \cline{2-17}
    
    &
    \multicolumn{2}{c}{\multirow{1}{*}{\rotatebox{0}{\scalebox{0.8}{ETTh1}}}} & 
    \multicolumn{2}{c|}{\multirow{1}{*}{\rotatebox{0}{\scalebox{0.8}{ETTm1}}}} & 
    \multicolumn{2}{c}{\multirow{1}{*}{\rotatebox{0}{\scalebox{0.8}{ETTh1}}}} & 
    \multicolumn{2}{c|}{\multirow{1}{*}{\rotatebox{0}{\scalebox{0.8}{ETTm1}}}} & 
    \multicolumn{2}{c}{\multirow{1}{*}{\rotatebox{0}{\scalebox{0.8}{ETTh1}}}} & 
    \multicolumn{2}{c|}{\multirow{1}{*}{\rotatebox{0}{\scalebox{0.8}{ETTm1}}}} & 
    \multicolumn{2}{c}{\multirow{1}{*}{\rotatebox{0}{\scalebox{0.8}{ETTh1}}}} & 
    \multicolumn{2}{c}{\multirow{1}{*}{\rotatebox{0}{\scalebox{0.8}{ETTm1}}}} \\
    \cline{2-17}
    
    &
    \scalebox{0.78}{MSE} & \scalebox{0.78}{MAE} & 
    \scalebox{0.78}{MSE} & \scalebox{0.78}{MAE} & 
    \scalebox{0.78}{MSE} & \scalebox{0.78}{MAE} & 
    \scalebox{0.78}{MSE} & \scalebox{0.78}{MAE} & 
    \scalebox{0.78}{MSE} & \scalebox{0.78}{MAE} & 
    \scalebox{0.78}{MSE} & \scalebox{0.78}{MAE} & 
    \scalebox{0.78}{MSE} & \scalebox{0.78}{MAE} & 
    \scalebox{0.78}{MSE} & \scalebox{0.78}{MAE} \\

    \hline

    \multirow{1}{*}{\rotatebox{0}{\scalebox{0.95}{w/o TP}}} & 
    {\scalebox{0.78}{0.582}} & {\scalebox{0.78}{0.542}} &
    {\scalebox{0.78}{0.513}} & {\scalebox{0.78}{0.480}} &
    {\scalebox{0.78}{0.643}} & {\scalebox{0.78}{0.581}} &
    {\scalebox{0.78}{0.524}} & {\scalebox{0.78}{0.489}} &
    {\scalebox{0.78}{0.704}} & {\scalebox{0.78}{0.628}} &
    {\scalebox{0.78}{0.599}} & {\scalebox{0.78}{0.548}} &
    {\scalebox{0.78}{0.677}} & {\scalebox{0.78}{0.606}} &
    {\scalebox{0.78}{0.623}} & {\scalebox{0.78}{0.564}} \\

    \hline
    
    \multirow{1}{*}{\rotatebox{0}{\scalebox{0.95}{\myformer}}} & 
    {\scalebox{0.78}{0.407}} & {\scalebox{0.78}{0.418}} &
    {\scalebox{0.78}{0.368}} & {\scalebox{0.78}{0.369}} &
    {\scalebox{0.78}{0.477}} & {\scalebox{0.78}{0.448}} &
    {\scalebox{0.78}{0.383}} & {\scalebox{0.78}{0.380}} &
    {\scalebox{0.78}{0.512}} & {\scalebox{0.78}{0.493}} &
    {\scalebox{0.78}{0.434}} & {\scalebox{0.78}{0.429}} &
    {\scalebox{0.78}{0.461}} & {\scalebox{0.78}{0.449}} &
    {\scalebox{0.78}{0.411}} & {\scalebox{0.78}{0.416}} \\
    
    \hline
    
    \hline
    \multirow{1}{*}{\rotatebox{0}{\scalebox{0.8}{loss $\downarrow$ (\%)}}} &
    {\scalebox{0.78}{42.9}} & {\scalebox{0.78}{29.6}} & 
    {\scalebox{0.78}{39.4}} & {\scalebox{0.78}{30.1}} & 
    {\scalebox{0.78}{34.8}} & {\scalebox{0.78}{29.7}} & 
    {\scalebox{0.78}{36.8}} & {\scalebox{0.78}{28.7}} & 
    {\scalebox{0.78}{37.5}} & {\scalebox{0.78}{27.4}} & 
    {\scalebox{0.78}{38.0}} & {\scalebox{0.78}{27.7}} & 
    {\scalebox{0.78}{46.9}} & {\scalebox{0.78}{34.9}} & 
    {\scalebox{0.78}{51.6}} & {\scalebox{0.78}{35.6}} \\
    
    \hline
    
    \hline
    \hline
  \end{tabular}
  \end{small}
}
\end{table*}
\begin{table}[htbp]
  \caption{The detailed information of Scenarios in Table \ref{tab:brief_classification_few}. All of the sub-datasets involved in seven  scenarios are derived from the UCR dataset presented in the section \ref{sec:benchmark}.}\label{tab:seven_scenarios}
  \vskip 0.05in
  \centering
  \resizebox{0.7\columnwidth}{!}{
  \begin{threeparttable}
  \begin{small}
  \renewcommand{\multirowsetup}{\centering}
  \setlength{\tabcolsep}{3.8pt}
  \begin{tabular}{c|c|c}
  \hline
  \hline
    Cross-domain & Source domain & Target domain \\
    
    \hline
    Scenario-1 & DistalPhalanxTW & ProximalPhalanxOutlineCorrect \\
    \hline
    Scenario-2 & SonyAIBORobotSurface2 & Chinatown \\
    \hline
    Scenario-3 & SonyAIBORobotSurface2 & SonyAIBORobotSurface1 \\
    \hline
    Scenario-4 & PigArtPressure & InsectEPGRegularTrain \\
    \hline
    Scenario-5 & Earthquakes & ItalyPowerDemand \\
    \hline
    Scenario-6 & BeetleFly & PowerCons \\
    \hline
    Scenario-7 & EOGHorizontalSignal & ProximalPhalanxOutlineAgeGroup \\

    \hline
    \hline
    \end{tabular}
    \end{small}
  \end{threeparttable}
  }
\vspace{-10pt}
\end{table}

\paragraph{Experimental setups:}
First, Fig. \ref{fig:classification_results} shows the average accuracy of the existing baselines and the proposed \myformer on all 35 classification datasets, where each coloration represents multiple models established by the specific backbone. 
In addition, Table \ref{tab:brief_classification_few} demonstrates the cross-domain migration capability of the proposed \myformer, where the upper and lower parts represent the accuracy and F1 score of all existing baselines in the full-data and few-shot (5\% and 10\%) settings, respectively.
Specifically, each \emph{scenario-i} represents a specific tuple of source and target domain, where each domain contains trainset and testset. Further, in the full data setting, the model is trained and tested on the target trainset and target testset. In the few-shot setting, the model is first pre-trained and tested on the source trainset and source testset, and the all-parameters will be fine-tuned in portion data (5\% and 10\%) of the target trainset. The detailed information of the 7 scenarios shown therein is given in the Table \ref{tab:seven_scenarios}.

\paragraph{Analysis of results:}
Fig. \ref{fig:classification_results} demonstrates the average accuracy of the proposed approach and the existing models on all 35 classification datasets, with the proposed \myformer showing the best performance. 
Table \ref{tab:brief_classification_few} demonstrates the unsupervised cross-domain migration capability of the proposed method. Besides, it is noteworthy that the proposed approach archives comparable or superior performance in few-shot settings over state-of-the-art models under full-data settings, in all seven scenarios. Concretely, the average accuracy of the proposed \myformer is increased by \textbf{23.9\%}, \textbf{19.6\%}, \textbf{26.1\%}, and \textbf{30.0\%} compared to the existing OneFitsAll, PatchTST, TimesNet, and MICN respectively.

\subsection{Ablation Study}
To elaborate on the property of our proposed \myformer, we conduct detailed ablations on model architecture.
As shown in Table \ref{tab:ablation_brief}, we find that removing the \emph{wave quantize} module in \myformer will cause significant performance degradation. These results may come from that the proposed feature program will improve the generalization capability of models to learn representation from complex series and migrate it to never-before-seen domains. 
Specifically, for fairness purposes, the ablation model \emph{w/o WQ} is designed to have the same Encoder and OutputLayer structure as \myformer, using only 1D Convolution in place of the proposed \emph{wave quantize} module, which can be expressed as $\textit{Conv}:\mathbb{R}^{1\times l}\mapsto\mathbb{R}^{\lambda\times l}$.
From Table \ref{tab:ablation_brief}, we can find that the performance of \emph{w/o WQ} degenerates \textbf{35.5\%} in the full-data task, degenerates \textbf{32.5\%} in the few-shot task, degenerates \textbf{32.7\%} and \textbf{42.3\%} in the zero-shot task under single-domain and multi-domain respectively. Similar results are found in other datasets, which indicate the advantages of our design.

\begin{table*}[htpb]
\caption{Improvements of \textbf{Wave Quantize Module (WQ)} over different models with prediction lengths $F = 336$, and fixed lookback length $T = 336$.}\label{tab:boosting}
\centering
\resizebox{0.9\columnwidth}{!}{
\begin{small}
\renewcommand{\multirowsetup}{\centering}
\setlength{\tabcolsep}{1.pt}
\renewcommand\arraystretch{1.1}
\begin{tabular}{c|c|cc|cc|cc|cc|cc|cc}
\toprule
\hline

& Models & 
\multicolumn{2}{c}{\scalebox{1.0}{OneFitsAll}} & 
\multicolumn{2}{c}{\scalebox{1.0}{DLinear}} & 
\multicolumn{2}{c}{\scalebox{1.0}{PatchTST}} & 
\multicolumn{2}{c}{\scalebox{1.0}{TimesNet}} & 
\multicolumn{2}{c}{\scalebox{1.0}{FEDformer}} & 
\multicolumn{2}{c}{\scalebox{1.0}{Autoformer}} \\
\cline{3-14}

& Metric 
& MSE & MAE & MSE & MAE & MSE & MAE & MSE & MAE & MSE & MAE & MSE & MAE\\
\hline

\multirow{3}{*}{\scalebox{1.0}{\shortstack{ETTm2\\ $\downarrow$ \\ETTm1}}}
& Original & 
0.790 & 0.579 & 
0.516 & 0.473 & 
0.596 & 0.508 & 
0.857 & 0.599 & 
0.718 & 0.564 & 
0.722 & 0.566 \\
& + \textbf{WQ} & 
\textbf{0.621} & \textbf{0.497} & 
\textbf{0.425} & \textbf{0.419} & 
\textbf{0.490} & \textbf{0.461} & 
\textbf{0.583} & \textbf{0.490} & 
\textbf{0.526} & \textbf{0.470} & 
\textbf{0.518} & \textbf{0.469} \\
\cline{2-14}
& \textbf{Improve} & 
\textcolor{red}{\textbf{21.39\%}} & \textcolor{red}{\textbf{14.16\%}} & 
\textcolor{red}{\textbf{17.64\%}} & \textcolor{red}{\textbf{11.42\%}} & 
\textcolor{red}{\textbf{17.79\%}} & \textcolor{red}{\textbf{9.25\%}} & 
\textcolor{red}{\textbf{31.97\%}} & \textcolor{red}{\textbf{18.21\%}} & 
\textcolor{red}{\textbf{26.74\%}} & \textcolor{red}{\textbf{16.57\%}} & 
\textcolor{red}{\textbf{28.26\%}} & \textcolor{red}{\textbf{17.14\%}} \\
\cline{1-14}

\multirow{3}{*}{\scalebox{1.0}{\shortstack{ETTm1\\ $\downarrow$ \\ETTm2}}}
& Original & 
0.342 & 0.369 & 
0.360 & 0.410 & 
0.325 & 0.361 & 
0.357 & 0.384 & 
0.321 & 0.360 & 
0.325 & 0.365 \\
& + \textbf{WQ} & 
\textbf{0.335} & \textbf{0.352} & 
\textbf{0.316} & \textbf{0.335} & 
\textbf{0.287} & \textbf{0.319} & 
\textbf{0.321} & \textbf{0.342} & 
\textbf{0.289} & \textbf{0.315} & 
\textbf{0.293} & \textbf{0.316} \\
\cline{2-14}
& \textbf{Improve} & 
\textcolor{red}{\textbf{2.05\%}} & \textcolor{red}{\textbf{4.61\%}} & 
\textcolor{red}{\textbf{12.30\%}} & \textcolor{red}{\textbf{18.29\%}} & 
\textcolor{red}{\textbf{11.69\%}} & \textcolor{red}{\textbf{11.63\%}} & 
\textcolor{red}{\textbf{10.08\%}} & \textcolor{red}{\textbf{10.94\%}} & 
\textcolor{red}{\textbf{9.97\%}} & \textcolor{red}{\textbf{12.50\%}} & 
\textcolor{red}{\textbf{9.85\%}} & \textcolor{red}{\textbf{13.42\%}} \\
\cline{1-14}

\multirow{3}{*}{\scalebox{1.0}{\shortstack{ETTh2\\ $\downarrow$ \\ETTh1}}}
& Original & 
0.780 & 0.604 & 
0.609 & 0.532 & 
0.616 & 0.537 & 
0.920 & 0.635 & 
0.746 & 0.598 & 
0.735 & 0.593 \\
& + \textbf{WQ} & 
\textbf{0.679} & \textbf{0.541} & 
\textbf{0.483} & \textbf{0.469} & 
\textbf{0.517} & \textbf{0.487} & 
\textbf{0.649} & \textbf{0.525} & 
\textbf{0.534} & \textbf{0.520} & 
\textbf{0.515} & \textbf{0.508} \\
\cline{2-14}
& \textbf{Improve} & 
\textcolor{red}{\textbf{12.94\%}} & \textcolor{red}{\textbf{10.43\%}} & 
\textcolor{red}{\textbf{20.69\%}} & \textcolor{red}{\textbf{11.84\%}} & 
\textcolor{red}{\textbf{16.07\%}} & \textcolor{red}{\textbf{9.31\%}} & 
\textcolor{red}{\textbf{29.46\%}} & \textcolor{red}{\textbf{17.32\%}} & 
\textcolor{red}{\textbf{28.42\%}} & \textcolor{red}{\textbf{13.04\%}} & 
\textcolor{red}{\textbf{29.93\%}} & \textcolor{red}{\textbf{14.33\%}} \\
\cline{1-14}

\multirow{3}{*}{\scalebox{1.0}{\shortstack{ETTh1\\ $\downarrow$ \\ETTh2}}}
& Original & 
0.420 & 0.430 & 
0.478 & 0.483 & 
0.416 & 0.444 & 
0.443 & 0.442 & 
0.444 & 0.463 & 
0.445 & 0.459 \\
& + \textbf{WQ} & 
\textbf{0.408} & \textbf{0.412} & 
\textbf{0.419} & \textbf{0.428} & 
\textbf{0.382} & \textbf{0.408} & 
\textbf{0.413} & \textbf{0.420} & 
\textbf{0.421} & \textbf{0.437} & 
\textbf{0.425} & \textbf{0.439} \\
\cline{2-14}
& \textbf{Improve} & 
\textcolor{red}{\textbf{2.86\%}} & \textcolor{red}{\textbf{4.19\%}} & 
\textcolor{red}{\textbf{12.34\%}} & \textcolor{red}{\textbf{11.39\%}} & 
\textcolor{red}{\textbf{8.17\%}} & \textcolor{red}{\textbf{8.10\%}} & 
\textcolor{red}{\textbf{6.77\%}} & \textcolor{red}{\textbf{4.98\%}} & 
\textcolor{red}{\textbf{5.18\%}} & \textcolor{red}{\textbf{5.62\%}} & 
\textcolor{red}{\textbf{4.49\%}} & \textcolor{red}{\textbf{4.36\%}} \\
\cline{1-14}

\multirow{3}{*}{\scalebox{1.0}{\shortstack{RiverFlow\\ $\downarrow$ \\Exchange}}}
& Original & 
0.464 & 0.491 & 
0.585 & 0.537 & 
0.421 & 0.458 & 
0.497 & 0.508 & 
0.942 & 0.765 & 
0.845 & 0.739 \\
& + \textbf{WQ} & 
\textbf{0.451} & \textbf{0.476} & 
\textbf{0.431} & \textbf{0.460} & 
\textbf{0.405} & \textbf{0.427} & 
\textbf{0.477} & \textbf{0.491} & 
\textbf{0.460} & \textbf{0.481} & 
\textbf{0.496} & \textbf{0.509} \\
\cline{2-14}
& \textbf{Improve} & 
\textcolor{red}{\textbf{2.86\%}} & \textcolor{red}{\textbf{3.05\%}} & 
\textcolor{red}{\textbf{26.32\%}} & \textcolor{red}{\textbf{14.34\%}} & 
\textcolor{red}{\textbf{3.80\%}} & \textcolor{red}{\textbf{6.71\%}} & 
\textcolor{red}{\textbf{4.02\%}} & \textcolor{red}{\textbf{3.35\%}} & 
\textcolor{red}{\textbf{51.17\%}} & \textcolor{red}{\textbf{37.12\%}} & 
\textcolor{red}{\textbf{41.30\%}} & \textcolor{red}{\textbf{31.20\%}} \\
\cline{1-14}


\hline
\bottomrule
\end{tabular}
\end{small}
}
\vspace{-10pt}
\end{table*}

\subsection{Tokenization Paradigm Strategy Analysis}
In this subsection, we investigate the generalizability of the tokenization paradigm strategy of \textbf{Wave Quantize} by plugging it into other different kinds of models. \textbf{Model selection and experimental setting.} To achieve this objective, we conduct experiments across a spectrum of representative time series forecasting model structures, including (1) Transformer-based methods: PatchTST \cite{Nie2023ATS}, FEDformer \cite{zhou2022fedformer} and Autoformer \cite{wu2021autoformer}; (2) Linear-based methods: DLinear \cite{Zeng2022AreTE} (3) TCN-based methods: TimesNet \cite{wu2023timesnet}; (4) LLM-based methods: OneFitsAll \cite{Zhou2023OneFA}. 

We standardize the input length to 336, and similarly, the prediction length is uniformly set to 336. Subsequently, comparative experiments were conducted on five datasets: ETTh1, ETTh2, ETTm1, ETTm2, RiverFlow, Exchange, Sunspot, and Weather. Specifically, we sequentially replaced the wave quantize components with each model. The comparative analysis was performed to assess the predictive performance before and after the incorporation of \textbf{Wave Quantize}, comparing the original models with the augmented counterparts.

In Table~\ref{tab:boosting}, it is apparent that the incorporation of the Wave Quantize structure leads to a notably substantial enhancement in the predictive performance of various models, even with the introduction of only a single pre-standardized layer.

Specifically, DLinear demonstrates an average MSE reduction of \textbf{17.79\%} across five datasets, other models are OneFitsAll: \textbf{8.41\%}, PatchTST: \textbf{11.65\%}, TimesNet: \textbf{16.72\%}, FEDformer: \textbf{24.64\%}, and Autoformer: \textbf{22.67\%}. 
Particularly noteworthy is the performance enhancement observed in the classical FEDformer model, where the MSE experiences a remarkable decrease of \textbf{51.17\%} and \textbf{26.74\%} on \emph{RiverFlow}$\rightarrow$\emph{Exchange} and \emph{ETTm2}$\rightarrow$\emph{ETTm1}, respectively, a result that is profoundly surprising. This unequivocally substantiates the generality of the Wave Quantize structure.

\section{Conclusion and future work}\label{section:conclusion}
This paper presents the concept of the tokenization paradigm in time series for the first time and proposes the novel wavebook tokenization that advances research related to multi-domain unified pre-train and cross-domain adaptation without changing the model structure, which will lay the groundwork for the large time series model.
To verify that wavebook strategy can mitigate negative migration, the experiments showed encouraging results: unified multi-domain pre-training would be far superior to single-domain pre-training.
In addition, modeling multi-period features from time series will be an important research direction, especially the period features across channels. For example, capturing correlation representations between multiple underlying periodic patterns across channels is essential for multivariate time series tasks.

\bibliography{ref}
\newpage
\appendix

\newpage
\appendix
\onecolumn


\section{Background}\label{appendix:background}

\paragraph{Function space}
We define the space $L^2(\mathbb{R})$ to describe all functions defined on $x\in[-\infty,+\infty]$ with finite energy
\begin{equation}
    L^2(\mathbb{R})=\left\{f(t)\colon\int_{-\infty}^{+\infty}\left|f(t)\right|^2<+\infty\right\}.
\end{equation}

\paragraph{Wavelet generating function and wavelet function}
If $\psi(t)\in L^2(\mathbb{R})$ satisfies 
\begin{equation}
    C_\psi=\int_{-\infty}^{+\infty}\frac{\left|\hat{{\psi}}(\omega)\right|^2}{\left|\omega\right|}d\omega<+\infty,
\end{equation}
then $\psi(t)$ is called the wavelet generating function, where $C_{\psi}$ is called the tolerance parameter. For any $\forall a\neq0,b\in\mathbb{R}$, we define the continuous wavelet function as follows:
\begin{equation}\label{eqn:1}
    \psi_{(a,b)}(t)=\frac{1}{\sqrt{|a|}}\psi(\frac{t-b}{a}).
\end{equation}

In addition, the energy of the wavelet generating function A will only be distributed over the finite interval, the distribution of the energy decaying rapidly to converge to zero as time approaches infinity, and there exists a horizontal line such that the wavelet function integrates up and down the area of the line for this line. Thus wavelets have two characteristics, the attenuation feature and the fluctuation feature:
\begin{equation}
  \begin{split}
    \int_{-\infty}^{+\infty}\left|\psi(t)\right|^2dt<+&\infty, \int_{-\infty}^{+\infty}\left|\psi_{(a,b)}(t)\right|^2dt<+\infty, \\
    \int_{-\infty}^{+\infty}\psi(t)dt=0&\text, \int_{-\infty}^{+\infty}\psi_{(a,b)}(t)dt=0. 
  \end{split}
\end{equation}

\paragraph{Wavelet transform}
For $\forall f(t)\in L^2(\mathbb{R})$, we have
\begin{equation}\label{eqn:2}
    W_f(a,b)=\int_{-\infty}^{+\infty}f(t)\overline{\psi}_{(a,b)}(t)dt,
\end{equation}
which is considered to be the wavelet transform of the signal $f(t)$.

\paragraph{Orthogonal wavelet}
Taking $a=2^{-j},b=2^{-j}k$ in \eqref{eqn:1}, we get a set of wavelets
\begin{equation}
    \left\{\psi_{j,k}(t)=2^{j/2}\psi(2^jt-k),(j,k)\in\mathbb{Z}^2\right\},
\end{equation}
which forms the orthonormal basis (O.N.B) of $L^2(\mathbb{R})$, that is $\psi(t)$ as the orthogonal wavelet. Thus, any signal $f(t)$ defined in $L^2(\mathbb{R})$ can be described as a linear representation of this set of orthonormal basis:
\begin{equation}
    f(t)=\sum_{j=-\infty}^{+\infty}\sum_{k=-\infty}^{+\infty}\alpha_{j,k}\cdot\psi_{j,k}(t).
\end{equation}

Since $\left\{\psi_{j,k}(t),(j,k)\in\mathbb{Z}^2\right\}$ are the mutually orthonormal basis, the coefficients of the above linear representation can be calculated by
\begin{equation}
    \alpha_{j,k}=\int_{-\infty}^{+\infty}f(t)\overline{\psi}_{j,k}(t)dt=W_{f}(2^{-j},2^{-j}k),
\end{equation}
and the signal $f(t)$ has the following complete representation
\begin{equation}\label{eqn:3}
    f(t)=\sum_{j=-\infty}^{+\infty}\sum_{k=-\infty}^{+\infty}W_{f}(2^{-j},2^{-j}k)\cdot\psi_{j,k}(t).
\end{equation}
To satisfy the efficient computation of GPUs in the Pytorch environment, we would like to discretize some values to carve a continuous piece of function. Based on Shannon Sampling Theorem, substituting \eqref{eqn:3} into \eqref{eqn:2} yields
\begin{equation}\label{eqn:4}
    W_f(a,b)=\sum_{j=-\infty}^{+\infty}\sum_{k=-\infty}^{+\infty}W_f(2^{-j},2^{-j}k)\times\int_R\psi_{j,k}(t)\overline{\psi}_{(a,b)}(t)dt.
\end{equation}
Since the part within the integral is only related to $a,b,j,k$, the process of \eqref{eqn:4} can be calculated by discrete sampling. This means that when doing the wavelet transform on a continuous signal, the information has coalesced into discrete sampling points, which provides rationalization for subsequent calculations.

\paragraph{Scale function}
In the Shannon Sampling Theorem, for any signal $f(t)$ defined on $L^2(\mathbb{R})$, if the frequency domain form $\hat{f}(\omega)$ of that signal has a truncation frequency $B$, then that signal can be reconstructed by equally spaced discrete sampling. This sampling interval can be at most $\frac\pi B$. If the function $f(t)$ satisfies the following conditions
\begin{equation}
  \begin{split}
    \forall f(t)\in L^2(\mathbb{R})&, \\
    \hat{{f}}(\omega)=\int_{-\infty}^{+\infty}f(t)e^{-i\omega t}dt&=0,\left|\omega\right|>B,
  \end{split}
\end{equation}
then we have
\begin{equation}\label{eqn:5}
    f(t)=\sum_{n\in\mathbb{Z}}f(n\Delta)\frac{sin(\pi/\Delta)(t-n\Delta)}{(\pi/\Delta)(t-n\Delta)}, 0<\Delta\leq\frac\pi B.
\end{equation}

Considering first the case $B=\pi$ in \eqref{eqn:5}, we have
\begin{equation}
    f(t)=\sum_{n\in\mathbb{Z}}f(n)\frac{sin(\pi(t-n))}{\pi(t-n)}=\sum_{n=-\infty}^{+\infty}f(n)\phi(t-n),
\end{equation}
where the scale function is defined as $\phi(t)=\frac{\sin(\pi t)}{\pi t}$.

Further, define the space $V_0=\left\{f(t);\hat{{f}}(\omega)=0,|\omega|>\pi\right\}$ with truncation frequency $B=\pi$i, where $\left\{\phi_{0,n}=2^{0/2}\phi(2^0t-n)=\phi(t-n);n\in\mathbb{Z}\right\}$ constitutes a set of orthonormal basis (O.N.B) which forms $V_{0}$. Specifically, we have
\begin{equation}
  \begin{split}
    \left\langle\phi(t-n),\phi(t-m)\right\rangle&=\int_{-\infty}^{+\infty}\phi(t-n)\overline{\phi}(t-m)dt, \\
    &=\frac1{2\pi}\int_{-\infty}^{+\infty}(\hat{\phi}(\omega)e^{-in\omega})(\hat{\phi}(\omega)e^{-im\omega})d\omega, \\
    &=\frac{1}{2\pi}\int_{-\infty}^{+\infty}\left|\hat{\phi}(\omega)\right|^{2}e^{-i(n-m)\omega}d\omega, \\
    &=\frac1{2\pi}\int_{-\pi}^{+\pi}e^{-i(n-m)\omega}d\omega.
  \end{split}
\end{equation}
We introduce the function $\delta$ for refined expression, that is 
\begin{equation}
    \forall m,n\in\mathbb{Z},\left<\phi(t-n),\phi(t-m)\right>=\delta(n-m).
\end{equation}

Defining the space $V_j=\left\{f(t);\hat{{f}}(\omega)=0,|\omega|>2^j\pi\right\}$ with truncation frequency $B=2^j\pi$ and taking the sampling interval $\Delta=2^{-j}$, we have 
\begin{equation}
  \begin{split}
    f(t)&=\sum_{n=-\infty}^{+\infty}f(2^{-j}n)\frac{sin\pi(2^jt-n)}{\pi(2^jt-n)}, \\
    &=\sum_{n=-\infty}^{+\infty}2^{-j/2}f(2^{-j}n)\cdot2^{j/2}\phi(2^{j}t-n), \\
    &=\sum_{n=-\infty}^{+\infty}2^{-j/2}f(2^{-j}n)\phi_{j,n}(t),
  \end{split}
\end{equation}
where
\begin{equation}
    \phi_{j,n}(t)=2^{j/2}\phi(2^jt-n)=\frac{sin\pi(2^jt-n)}{\pi(2^jt-n)}.
\end{equation}

Besides, we have
\begin{equation}
  \begin{split}
    \left\langle\phi_{j,n}(t),\phi_{j,m}(t)\right\rangle&=\int_{-\infty}^{+\infty}\phi_{j,n}(t)\overline{\phi}_{j,m}(t)dt, \\
    &=\frac1{2\pi\cdot2^j}\int_{-\infty}^{+\infty}\left|\hat{{\phi}}(2^{-j}\omega)\right|^2e^{-i(n-m)\omega}d\omega, \\
    &=\frac1{2\pi\cdot2^j}\int_{-2^j\pi}^{2^j\pi}e^{-i(n-m)\omega}d\omega, \\
    &=\delta(n-m).
  \end{split}
\end{equation}
Therefore, $\phi_{j,n}$ constitutes a set of orthonormal basis of $V_j$, i.e., $V_j$ is a linear subspace tensored by $\phi_{j,n}$.

\paragraph{Orthogonal wavelet and close-span space} \label{section:VW}
Based on the wavelet function $\psi_{j,k}$, we define the close-span space as:
\begin{equation}
    W_j=closespan\left\{\psi_{j,k}(t)2^{j/2}\psi(2^jt-k),(j,k)\in\mathbb{Z}^2\right\},
\end{equation}
and the close-span space has the following characteristics: 1) Spatial orthogonality $W_j\perp W_{j+1}$, since $\psi_{j,k}$ and $\psi_{j+1,k}$ are mutually orthogonal to each other; 2) Spatial approximability $L^2(\mathbb{R})=\bigoplus_{j=-\infty}^{+\infty}W_j$. Since orthogonal wavelets form a standard set of orthogonal bases of space $L^2(\mathbb{R})$, and the close-span space will approximate space $L^2(\mathbb{R})$ by the direct-sum operation; 3) The transfer relation of neighboring spaces: $g(t)\in W_j\Leftrightarrow g(2t)\in W_{_{j+1}}$, which is known by the definition of close-span space. Based on the above characteristics, by calculating the basis functions of $W_{0}$, a set of orthogonal wavelets can be obtained, where each wavelet $\psi_{j,k}$ can be formed into a close-span space $W_{j}$ and the direct-sum of all the closure spaces can be approximated to $L^2(\mathbb{R})$.

The formulaic description of the above idea is that, if $\psi(t-k)$ is the set of the orthonormal basis of $W_{0}$, then for any $j$ there has $\left\{2^{j/2}\psi(2^jt-k),(j,k)\in\mathbb{Z}^2\right\}$ as the orthonormal basis of $W_{j}$. Moreover, it is easy to verify:
\begin{equation}\label{eqn:6}
    W_j\perp V_j,V_{j+1}=W_j\oplus V_j.
\end{equation}
Thus, the construction of orthogonal wavelets is equivalent to finding a set of standard orthogonal bases for $W_{0}$.

\paragraph{Scale equation and low-pass filter}
Since the scale function $\phi(x)\in V_0\subseteq V_1$, and there exists a set of orthonormal basis $\left\{\sqrt{2}\phi(2t-n);n\in\mathbb{Z}\right\}$ for $V_{1}$, there must exist a unique sequence of coefficients $\left\{h_n;n\in\mathbb{Z}\right\}\in l^2(\mathbb{Z})$ such that
\begin{equation}\label{eqn:7}
    \phi(t)=\sqrt{2}\sum_{n\in\mathbb{Z}}h_n\phi(2t-n),
\end{equation}
which is regarded as the scale equation, and since $\phi(2t-n)$ is mutually orthogonal to each other for different $n$, the coefficients are calculated as follows
\begin{equation}
    h_n=\left\langle\phi(t),\sqrt{2}\phi(2t-n)\right\rangle=\sqrt{2}\int_R\phi(t)\overline{\phi}(2t-n)dt.
\end{equation}

In addition, the scale equation are converted to frequency domain form by Fourier transforms
\begin{equation}\label{eqn:H_app}
    \hat{{\phi}}(\omega)=H(\omega/2)\hat{{\phi}}(\omega/2),H(\omega)=\frac{1}{\sqrt{2}}\sum_{n\in\mathbb{Z}}h_ne^{-in\omega},
\end{equation}
where $H(\omega)$ is referred to as the low-pass filter and hence $h_{n}$ is also referred to as the low-pass filter coefficients.

\paragraph{Wavelet equation and band-pass filter}
For the wavelet function $\psi(x)\in W_0\subseteq V_1$, there exists $\left\{g_{n};n\in\mathbb{Z}\right\}\in l^{2}$ such that
\begin{equation}\label{eqn:8}
    \psi(t)=\sqrt2\sum_{n\in\mathbb{Z}}g_n\phi(2t-n),
\end{equation}
which is regarded as the wavelet equation, and since $\phi(2t-n)$ is orthogonal for different $n$, the coefficients are calculated as follows
\begin{equation}
    g_n=\left\langle\psi(t),\sqrt{2}\phi(2t-n)\right\rangle=\sqrt{2}\int_R\psi(t)\overline{\phi}(2t-n)dt.
\end{equation}
In addition, the wavelet equations can be obtained in frequency domain form by Fourier transformation
\begin{equation}\label{eqn:G_app}
    \hat{{\psi}(\omega)}=G(\omega/2)\hat{{\phi}}(\omega/2),G(\omega)=\frac{1}{\sqrt{2}}\sum_{n\in\mathbb{Z}}g_ne^{-in\omega},
\end{equation}
where $G(\omega)$ is referred to as the bandpass filter and $g_n$ is also referred to as the impulse response coefficient.

\section{Proof of results in Section \ref{section:methods_3}}\label{appendix:proof}

\begin{proof}[\textbf{Proof of Proposition \ref{prop:1}}]
    If $\psi(t)$ is the orthogonal wavelet, we have that $\left\{\psi_{j,k}(t)=2^{j/2}\psi(2^jt-k),(j,k)\in\mathbb{Z}^2\right\}$ constitutes a standard orthonormal basis for $L^2(\mathbb{R})$. Then there must exist the unique coefficients sequence $\left\{c_{j,k};(j,k)\in\mathbb{Z}^{2}\right\}\in l^{2}\left(\mathbb{Z}\right)$. For $\forall f(t)\in L^2(\mathbb{R})$, we have
    \begin{equation}
        f(t)=\sum_{j\in\mathbb{Z}}\sum_{k\in\mathbb{Z}}c_{j,k}\cdot\psi_{j,k}(t).
    \end{equation}
    Specifically, the bijection relation is satisfied between variables $L^2(\mathbb{R})$ and $l^2(\mathbb{Z})$ in two spaces $f(t)$ and $\{h_n;n\in\mathbb{Z}\}$.

    Since $\left\{\psi_{j,k}(t),(j,k)\in\mathbb{Z}^{2}\right\}$ constitutes a set of orthogonal bases satisfying
    \begin{equation}
        \forall(j_1,k_1)\neq(j_2,k_2),\left<\psi_{j_1,k_1},\psi_{j_2,k_2}\right>=0.
    \end{equation}
    Further, we have
    \begin{equation}
        \begin{split}
        \left\langle f(t),\psi_{m,n}(t)\right\rangle&=\left\langle\sum_{j\in\mathbb{Z}}\sum_{k\in\mathbb{Z}}c_{j,k}\cdot\psi_{j,k}(t),\psi_{m,n}(t)\right\rangle, \\
        &=\sum_{j\in\mathbb{Z}}\sum_{k\in\mathbb{Z}}c_{j,k}\cdot\left\langle\psi_{j,k}(t),\psi_{m,n}(t)\right\rangle, \\
        &=c_{m,n}\cdot\left\langle\psi_{m,n}(t),\psi_{m,n}(t)\right\rangle, \\
        &=c_{m,n}\cdot\left|\psi_{m,n}(t)\right|^2.
        \end{split}
    \end{equation}
    Thus, the coefficients sequence can be uniquely determined by $c_{m,n}=\left\langle f(t),\psi_{m,n}(t)\right\rangle/\left|\psi_{m,n}(t)\right|^2$, and the bijection relation is satisfied between the variables $f(t)$ and $\{h_n;n\in\mathbb{Z}\}$, which completes the proof.
\end{proof}

\begin{lemma}\label{lemma:1}
    The sufficiently-necessary condition for orthonormal system: Defining the function $f(x)\in L^2(\mathbb{R})$, then the set
    \begin{equation}
        \left\{f_{0,n}=2^{0/2}f(2^0t-n)=f(t-n);n\in\mathbb{Z}\right\}
    \end{equation}
    forms the orthonormal system of $L^2(\mathbb{R})$, that is
    \begin{equation}
        \left\langle f(t-n),f(t-l)\right\rangle=\delta(n-l),
    \end{equation}
    is sufficiently-necessary for
    \begin{equation}
        \sum_{k\in\mathbb{Z}}\left|\hat{{f}}(\omega+2k\pi)\right|^2=1\quad a.e.\omega\in\mathbb{R}.
    \end{equation}
    In fact, Lemma \ref{lemma:1} is proved due to
    \begin{equation}
        \begin{split}
        \left\langle f(t-n),f(t-l)\right\rangle&=\frac1{2\pi}\int_{\mathbb{R}}\hat{{f}}(\omega)e^{-in\omega}\cdot\overline{\left(\hat{{f}}(\omega)e^{-il\omega}\right)}d\omega, \\
        &=\frac1{2\pi}\int_0^{2\pi}\sum_{k\in\mathbb{Z}}\left|\hat{{f}}(\omega+2k\pi)\right|^2\cdot e^{-i(n-l)\omega}d\omega.
        \end{split}
    \end{equation}
\end{lemma}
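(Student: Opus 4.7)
The plan is to prove Lemma \ref{lemma:1} by translating the orthonormality condition into the frequency domain via Plancherel's theorem, then periodizing to obtain a characterization in terms of Fourier coefficients of an auxiliary periodic function. The identity displayed at the end of the lemma statement is essentially the full computation; my job is to expand it into a clean biconditional argument and justify the measure-theoretic steps.

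First I would compute $\langle f(t-n), f(t-l)\rangle$ in the frequency domain. The Fourier transform converts translation by $n$ into multiplication by $e^{-in\omega}$, and preserves inner products up to the factor $1/(2\pi)$, so
\begin{equation}
\langle f(t-n), f(t-l)\rangle = \frac{1}{2\pi}\int_{\mathbb{R}} \bigl|\hat{f}(\omega)\bigr|^2 e^{-i(n-l)\omega}\,d\omega.
\end{equation}
Next I would fold the integral onto a single period $[0,2\pi]$ by decomposing $\mathbb{R} = \bigsqcup_{k\in\mathbb{Z}}[2k\pi,2(k+1)\pi]$ and using the $2\pi$-periodicity of $\omega \mapsto e^{-i(n-l)\omega}$. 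Defining the $2\pi$-periodic function
\begin{equation}
F(\omega) := \sum_{k\in\mathbb{Z}} \bigl|\hat{f}(\omega+2k\pi)\bigr|^2,
\end{equation}
this yields $\langle f(t-n), f(t-l)\rangle = \frac{1}{2\pi}\int_0^{2\pi} F(\omega) e^{-i(n-l)\omega}\,d\omega$, which is precisely the $(n-l)$-th Fourier coefficient of $F$.

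Finally I would invoke the uniqueness of Fourier series. The left-hand side equals $\delta(n-l)$ for all $n,l \in \mathbb{Z}$ if and only if every Fourier coefficient of $F$ agrees with the corresponding Fourier coefficient of the constant function $1$ on $[0,2\pi]$, which in $L^1(0,2\pi)$ forces $F(\omega) = 1$ almost everywhere. This establishes the ``iff'' in both directions.

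The only mildly delicate point, and the one I would take the most care with, is the interchange of sum and integral in the periodization step. This needs Fubini/Tonelli applied to the nonnegative integrand $|\hat{f}(\omega+2k\pi)|^2$; the hypothesis $f \in L^2(\mathbb{R})$ gives $\hat{f} \in L^2(\mathbb{R})$ by Plancherel, so $\int_0^{2\pi} F(\omega)\,d\omega = \int_{\mathbb{R}} |\hat{f}|^2 < \infty$ and in particular $F \in L^1(0,2\pi)$, making the Fourier-coefficient interpretation legitimate. Everything else is routine Fourier analysis.
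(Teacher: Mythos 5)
Your proposal is correct and follows essentially the same route as the paper: passing to the frequency domain via Plancherel, periodizing onto $[0,2\pi]$, and reading off the inner products as Fourier coefficients of $F(\omega)=\sum_{k}|\hat{f}(\omega+2k\pi)|^2$. You merely make explicit two steps the paper leaves implicit — the Tonelli justification of the sum--integral interchange and the appeal to uniqueness of Fourier coefficients to conclude $F\equiv 1$ a.e. — which is a welcome tightening but not a different argument.
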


\begin{proof}[\textbf{Proof of Proposition \ref{prop:2}}]
    We define the functions $H(\omega)$ and $G(\omega)$ refer as the low-pass filter and band-pass filter based on $\psi(t)\in L^2(\mathbb{R})$, which determined by \eqref{eqn:H_app} and \eqref{eqn:G_app}, respectively, and introduce the definition of matrix $M(\omega)$ as follows
    \begin{equation}
        \left.M(\omega)=\left(\begin{matrix}H(\omega)&G(\omega)\\H(\omega+\pi)&G(\omega+\pi)\end{matrix}\right.\right).
    \end{equation}
    The function group $\left\{\psi_{0,k}=2^{0/2}\psi(2^0t-k)=\psi(t-k),k\in\mathbb{Z}\right\}$ forms the orthonormal basis of $W_{0}$, that is, the sufficient-necessary condition for $\psi(x)$ as the orthogonal wavelet is that $M(\omega)$ is the Unitary Matrix: $M^H(\omega)M(\omega)=M(\omega)M^H(\omega)=I,a.e.\omega\in\mathbb{R}$, where $M^{H}(\omega)$ is defined to be the conjugate transpose matrix of $M(\omega)$.

    By the definition of the Unitary Matrix, $M(\omega)$ is the Unitary Matrix equivalent to
    \begin{equation}\label{unimatrix:1_app}
        \begin{split}
        \left|H(\omega)\right|^2+\left|H(\omega+\pi)\right|^2=1,a.e.\omega\in\mathbb{R}, \\
        \begin{vmatrix}G(\omega)\end{vmatrix}^2+\begin{vmatrix}G(\omega+\pi)\end{vmatrix}^2=1,a.e.\omega\in\mathbb{R}, \\
        H(\omega)\overline{G}(\omega)+H(\omega+\pi)\overline{G}(\omega+\pi)=0,a.e.\omega\in\mathbb{R}.
        \end{split}
    \end{equation}
    We define the spaces $V_{0}$ and $W_{0}$ as follows
    \begin{equation}
        \begin{split}
        V_0=closespan\left\{\phi(t-n),n\in\mathbb{Z}\right\}, \\
        W_0=closespan\left\{\psi(t-n),n\in\mathbb{Z}\right\}.
      \end{split}
    \end{equation}
    By the definitions of $H(\omega)$ and $G(\omega)$, Equation \ref{unimatrix:1_app} is equivalent to
    \begin{equation}
        \begin{split}
        \sum_{k\in\mathbb{Z}}\left|\hat{{\phi}}(\omega+2k\pi)\right|^2=1,a.e.\omega\in\mathbb{R}, \\
        \sum_{k\in\mathbb{Z}}\left|\hat{{\psi}}(\omega+2k\pi)\right|^2=1,a.e.\omega\in\mathbb{R}, \\
        V_0\perp W_0,a.e.\omega\in\mathbb{R}.
        \end{split}
    \end{equation}
    Because of Lemma \ref{lemma:1}, the first two conditions are equivalent to
    \begin{equation}
        \begin{split}
        \left\langle\phi(t-n),\phi(t-l)\right\rangle=\delta(n-l), \\
        \left\langle\psi(t-n),\psi(t-l)\right\rangle=\delta(n-l).
        \end{split}
    \end{equation}

    In summary, the sufficient-necessary condition for $M(\omega)$ as the Unitary Matrix is that $\phi(t-n)$ and $\psi(t-n)$ form the standard orthogonal system of $L^2(\mathbb{R})$, respectively, and the two spaces formed by $\phi(t-n)$ and $\psi(t-n)$ are orthogonal. Considering the properties of $V_{j}$ and $W_{j}$ in section \ref{section:VW}, $\left\{\psi_{j,k}(t)=2^{j/2}\psi(2^jt-k),(j,k)\in\mathbb{Z}^2\right\}$ constitutes the orthonormal basis for $L^2(\mathbb{R})$, and hence $\psi(t)$ is an orthogonal wavelet.

    According to Proposition \ref{prop:2}, by designing a specific functional relationship between $H(\omega)$ and $G(\omega)$, we can guarantee that $M(\omega)$ is the Unitary Matrix, and thus that $\psi(t)$ is the orthogonal wavelet. For instance, when $G(\omega)=e^{-i\omega}\overline{H}(\omega+\pi)$, it is easy to verify that $M(\omega)$ is the Unitary Matrix, when the frequency domain form of the wavelet function $\psi(t)$ is:
    \begin{equation}
        \hat{\psi}(\omega)=e^{-i\omega/2}\overline{H}(\pi+\omega/2)\cdot\hat{\phi}(\omega/2),
    \end{equation}
    where the corresponding impulse response coefficient is
    \begin{equation}
        g_n=(-1)^{n-1}\overline{h}_{1-n},n\in\mathbb{Z},
    \end{equation}
    Therefore, the time domain form of the wavelet function $\psi(t)$ is
    \begin{equation}
        \psi(t)=\sqrt{2}\sum_{n\in\mathbb{Z}}{(-1)^{n-1}\overline{h}_{1-n}\phi(2t-n)}.
    \end{equation}
\end{proof}

\section{Related Work of Tokenization}\label{appendix:related}
In the initial phase, the transformer-based model utlized the ``point as token'' tokenization strategy, which caused to two limitations: high computational complexity and serious information redundancy, Therefore, existing approaches have focused on reducing computational complexity by mitigating information redundancy.

For instance, Reformer \cite{kitaev2020reformer} designed the locally sensitive hashing self-attention to reduce computational complexity. Informer \cite{haoyietal-informer-2021} proposes the ProbSparse self-attention mechanism to efficiently replace the canonical self-attention. Pyraformer \cite{liu2021pyraformer} introduces the pyramidal attention module which reduces computational complexity by constraining the maximum length of the signal traversing paths. Autoformer \cite{wu2021autoformer} designs the Auto-Correlation mechanism based on the series periodicity, which conducts the dependencies discovery and representation aggregation at the sub-series level. ETSformer \cite{woo2022etsformer} proposes the novel exponential smoothing attention and frequency attention to replace the self-attention mechanism in vanilla Transformers, thus improving both accuracy and efficiency. FEDformer \cite{zhou2022fedformer} avoids the high overhead of computation on the time domain by calculating the dependencies between individual bands in the frequency domain, while Fourier Transform has been used to ensure that individual bands have a global view.

\section{More analysis of Tokenization Strategy}\label{appendix:tokenization}
\subsection{Ideal tokenization generation strategy}\label{appendix:tokenization_1}
Since TS datasets from different domains and sampling settings exhibit diverse periodic patterns, it is difficult to identify a unified sub-series span that matches all TS data domains. existing tokenization strategies are unable to establish a unified framework to adapt to the potential TS data domains, and the pre-training phase is unable to adequately learn cross-domain representational information, which limits the generalization ability and scalability of the model. Therefore, The ideal tokenization generation strategies should be insensitive to the mathematical characteristics of different data domains. However, the patch span depends on the periodic pattern of the raw series

Thinking about the approach in NLP: the word is a high-level input data abstracted and generalized by the human brain, and existing subword algorithms \cite{Kudo2018SentencePieceAS,Sennrich2015NeuralMT} can disassemble the complex and lengthy word into a set of mutually independent and complementary units, which ensures that each token input to attention contains similar-size semantic information. This ensures that semantic information, which is essential for inference, is evenly distributed across subwords. However, the distribution of pattern information in the TS data is completely uncertain (e.g., randomly occurring anomalous oscillatory waveforms contain more information), which increases the inference difficulty of attention under the \textbf{\emph{Point as token}} strategy. Therefore, the ideal tokenization strategy should play a role similar to feature programming, such that the generated tokens contain some pattern information from the original series (as semantic information in NLP). At the same time, it needs to ensure that the pattern information obtained from different TS domains is essentially isomorphic or similarly distributed, which is regarded as the key to activate the model cross-domains adaptability.

\subsection{Challenges addressed by wave tokenization strategy}\label{appendix:tokenization_2}
Establishing the unified \tp between different TS domains has the following challenges: (1) TS data with diverse domain background knowledge tend to exhibit different characteristics; (2) Datasets from different sources may exhibit variations, even in the same domain background; (3) The same data instances can also face the challenge of cross-domain adaptation due to a priori features such as sampling rate.

The proposed \textbf{\emph{Wave as Token}} strategy solves \emph{challenge-1} through the designed shared embedding space. In addition, the designed strategy ensures that arbitrary series data can be recoded to equal-length groups of tokens (number of tokens equal to series length), thus solving \emph{challenge-2}. This ensures that no hyperparameter setting tricks are utilized in the model to adapt to potential data domains, even if these domains are completely different from each other in terms of structural features (e.g., channel number, sequence length) are completely different. Finally, we solve \emph{challenge-3} by ensuring that each token contains TS pattern information within a localized window through the finite-length basis functions and bridges the difference in the distribution of pattern information from different domains within the K-dimensional space formed by basis functions, where $K$ is the size of the proposed wavebook.



\section{Implementation Details}\label{appendix:detail}
\begin{table*}[htbp]
  \caption{Description of datasets in forecasting and imputation tasks. The dataset size is organized in (Train, Validation, Test).}\label{tab:dataset_1}
  \vskip 0.05in
  \centering
  \resizebox{0.9\columnwidth}{!}{
  \begin{threeparttable}
  \begin{small}
  \renewcommand{\multirowsetup}{\centering}
  \setlength{\tabcolsep}{3.8pt}
  \begin{tabular}{c|l|c|c|c|c}
    \toprule
    Tasks & Dataset & Dim & Series Length & Dataset Size & Information (Frequency) \\
    
    \midrule
    \multirow{13}{*}{\rotatebox{90}{\scalebox{1}{Forecasting}}}
    & ETTm1, ETTm2 & 7 & \{96, 192, 336, 720\} & (34465, 11521, 11521) & Electricity (15 mins)\\
    \cmidrule{2-6}
    & ETTh1, ETTh2 & 7 & \{96, 192, 336, 720\} & (8545, 2881, 2881) & Electricity (15 mins) \\
    \cmidrule{2-6}
    & Electricity & 321 & \{96, 192, 336, 720\} & (18317, 2633, 5261) & Electricity (Hourly) \\
    \cmidrule{2-6}
    & Traffic & 862 & \{96, 192, 336, 720\} & (12185, 1757, 3509) & Transportation (Hourly) \\
    \cmidrule{2-6}
    & Weather & 21 & \{96, 192, 336, 720\} & (36792, 5271, 10540) & Weather (10 mins) \\
    \cmidrule{2-6}
    & Exchange & 8 & \{96, 192, 336, 720\} & (5120, 665, 1422) & Exchange rate (Daily) \\
    \cmidrule{2-6}
    & Sunspot & 1 & \{96, 192, 336, 720\} & (44354, 14785, 14785) & Nature (Daily) \\
    \cmidrule{2-6}
    & RiverFlow & 1 & \{96, 192, 336, 720\} & (14244, 4748, 4748) & Nature (Daily) \\
    \cmidrule{2-6}
    & SolarPower & 1 & \{96, 192, 336, 720\} & (4438333, 1479444, 1479444) & Energy (4 seconds) \\
     
    \midrule
    \multirow{6}{*}{\rotatebox{90}{\scalebox{1}{Imputation}}}
    & ETTm1, ETTm2 & 7 & 96 & (34465, 11521, 11521) & Electricity (15 mins) \\
    \cmidrule{2-6}
    & ETTh1, ETTh2 & 7 & 96 & (8545, 2881, 2881) & Electricity (15 mins)\\
    \cmidrule{2-6}
    & Electricity & 321 & 96 & (18317, 2633, 5261) & Electricity (15 mins)\\
    \cmidrule{2-6}
    & Weather & 21 & 96 & (36792, 5271, 10540) & Weather (10 mins) \\

    \bottomrule
    \end{tabular}
    \end{small}
  \end{threeparttable}
  }
\end{table*}

\begin{table*}[htbp]
  \caption{Description of datasets in the classification task. The dataset size is organized in (Train, Validation, Test).}\label{tab:dataset_2}
  \vskip 0.05in
  \centering
  \resizebox{0.8\columnwidth}{!}{
  \begin{threeparttable}
  \begin{small}
  \renewcommand{\multirowsetup}{\centering}
  \setlength{\tabcolsep}{3.8pt}
  \begin{tabular}{c|l|c|c|c|c}
    \toprule
    Tasks & Dataset & Class Number & Series Length & Dataset Size & Information (Sample Rate) \\
    
    \midrule
    \multirow{47}{*}{\rotatebox{90}{\scalebox{1}{Classification}}}
    & BeetleFly & 2 & 512 & (20, 20, 20) & Image \\
    \cmidrule{2-6}
    & BME & 3 & 128 & (30, 150, 150) & Simulated \\
    \cmidrule{2-6}
    & CBF & 3 & 128 & (30, 900, 900) & Simulated \\
    \cmidrule{2-6}
    & Chinatown & 2 & 24 & (20, 343, 343) & Traffic \\
    \cmidrule{2-6}
    & ChlorineConcentration & 3 & 166 & (467, 3840, 3840) & Sensor \\
    \cmidrule{2-6}
    & DistalPhalanxTW & 6 & 80 & (400, 139, 139) & Image \\
    \cmidrule{2-6}
    & ECG200 & 2 & 96 & (100, 100, 100) & ECG \\
    \cmidrule{2-6}
    & ECG5000 & 5 & 140 & (500, 4500, 4500) & ECG \\
    \cmidrule{2-6}
    & ElectricDevices & 7 & 96 & (8926, 7711, 7711) & Device \\
    \cmidrule{2-6}
    & FaceAll & 14 & 131 & (560, 1690, 1690) & Image \\
    \cmidrule{2-6}
    & FaceFour & 4 & 350 & (24, 88, 88) & Image \\
    \cmidrule{2-6}
    & FacesUCR & 14 & 131 & (200, 2050, 2050) & Image \\
    \cmidrule{2-6}
    & FiftyWords & 50 & 270 & (450, 455, 455) & Image \\
    \cmidrule{2-6}
    & GunPointAgeSpan & 2 & 150 & (135, 316, 316) & Motion \\
    \cmidrule{2-6}
    & GunPointMaleVersusFemale & 2 & 150 & (135, 316, 316) & Motion \\
    \cmidrule{2-6}
    & GunPointOldVersusYoung & 2 & 150 & (135, 316, 316) & Motion \\
    \cmidrule{2-6}
    & GunPoint & 2 & 150 & (50, 150, 150) & Motion \\
    \cmidrule{2-6}
    & InsectEPGSmallTrain & 3 & 601 & (17, 249, 249) & Sensor \\
    \cmidrule{2-6}
    & InsectWingbeatSound & 11 & 256 & (220, 1980, 1980) & Sensor \\
    \cmidrule{2-6}
    & ItalyPowerDemand & 2 & 24 & (67, 1029, 1029) & Sensor \\
    \cmidrule{2-6}
    & MedicalImages & 10 & 99 & (381, 760, 760) & Image \\
    \cmidrule{2-6}
    & MiddlePhalanxTW & 6 & 80 & (399, 154, 154) & Image \\
    \cmidrule{2-6}
    & MoteStrain & 2 & 84 & (20, 1252, 1252) & Sensor \\
    \cmidrule{2-6}
    & Plane & 7 & 144 & (105, 105, 105) & Sensor \\
    \cmidrule{2-6}
    & ProximalPhalanxTW & 6 & 80 & (400, 205, 205) & Image \\
    \cmidrule{2-6}
    & SonyAIBORobotSurface1 & 2 & 70 & (20, 601, 601) & Sensor \\
    \cmidrule{2-6}
    & SonyAIBORobotSurface2 & 2 & 65 & (27, 953, 953) & Sensor \\
    \cmidrule{2-6}
    & SwedishLeaf & 15 & 128 & (500, 625, 625) & Image \\
    \cmidrule{2-6}
    & SyntheticControl & 6 & 60 & (300, 300, 300) & Simulated \\
    \cmidrule{2-6}
    & ToeSegmentation2 & 2 & 343 & (36, 130, 130) & Motion \\
    \cmidrule{2-6}
    & Trace & 4 & 275 & (100, 100, 100) & Sensor \\
    \cmidrule{2-6}
    & UMD & 3 & 150 & (36, 144, 144) & Simulated \\
    \cmidrule{2-6}
    & UWaveGestureLibraryY & 8 & 315 & (896, 3582, 3582) & Motion \\
    \cmidrule{2-6}
    & Wafer & 2 & 152 & (1000, 6164, 6164) & Sensor \\
    \cmidrule{2-6}
    & WordSynonyms & 25 & 270 & (267, 638, 638) & Image \\
    
    \bottomrule
    \end{tabular}
    \end{small}
  \end{threeparttable}
  }
\end{table*}

We provide the dataset descriptions and experiment configurations in Table \ref{tab:dataset_1}, Table \ref{tab:dataset_2} and Table \ref{tab:model_config}. Besides, The specific description of the symbols involved in the method is shown in Table~\ref{tab:description}. All experiments are repeated three times, implemented in PyTorch \cite{Paszke2019PyTorchAI} and conducted on a single Tesla V100 SXM2 32GB GPU.

Our method is trained with the L2 Loss, using the ADAM optimizer with an initial learning rate of $10^{-4}$, and Batch size is set in $16 \rightarrow 64$. The training process is early stopped after three epochs (patience=3) if there is no loss degradation on the valid set. The mean square error (MSE) and mean absolute error (MAE) are used as metrics in forecasting and imputation tasks. Besides, the accuracy (Acc), precision (Pre), recall (Rec), and f1-score (F1) are used as metrics in the classification task. For a fair comparison, we fix the input length to 336 for all datasets. By default, the proposed \textbf{\myformer} contains $5 \rightarrow 10$ \textbf{EncoderLayers}. All the baselines that we reproduced are implemented based on configurations of the original paper or their official code. For a fair comparison, we design the same input embedding and final prediction layer for all base models. 

Besides, the datasets that were utilized in the forecasting, imputation, and classification tasks are described in detail below:
(1) \textbf{ETT (ETTh1, ETTh2, ETTm1, and ETTm2)}\footnote{https://github.com/zhouhaoyi/Informer2020} \cite{haoyietal-informer-2021} contains six power load features and oil temperature used for monitoring electricity transformers. ETT involves four subsets. ETTm1 and ETTm2 are recorded at 15-minute intervals, while ETTh1 and ETTh2 are recorded hourly. 
(2) \textbf{Weather}\footnote{https://www.bgc-jena.mpg.de/wetter/} contains 21 meteorological indicators, such as temperature, humidity, and precipitation, which are recorded every 10 minutes in the year 2020. 
(3) \textbf{Electricity}\footnote{https://archive.ics.uci.edu/dataset/321/electricity} comprises hourly power consumption of 321 clients from 2012 to 2014. 
(4) \textbf{Traffic}\footnote{http://pems.dot.ca.gov} reports the number of vehicles loaded on all 862 roads at each moment in time. 
(5) \textbf{Sunspot}\footnote{https://www.sidc.be/SILSO/newdataset} records observations of sunspots for long-term monitoring, consisting of $73924$ timesteps.
(6) \textbf{River Flow}\footnote{http://www.jenvstat.org/v04/i11} reports th daily river flow, consisting of $23741$ timesteps.
(7) \textbf{Solar Power}\footnote{https://zenodo.org/records/4656032} contains a single long daily time series representing the wind power production in MW recorded every 4 seconds starting from 2019.
(8) \textbf{UCR archive}\footnote{https://www.cs.ucr.edu/\url{~}eamonn/time\_series\_data\_2018} as the well-known time series classification repository, where representative \textbf{35 datasets} are selected to validate the performance of the proposed approach.

\section{Full experiment results}\label{appendix:exp}

\subsection{Forecasting task}\label{appendix:exp_1}
\begin{table*}[htbp]
  \caption{Comparison of the complete performance with diverse prediction lengths ($\{96,192,336,720\}$) on \textbf{full-data forecasting} task, where \emph{Sunspot}, \emph{RiverFlow} and \emph{SolarPower} are novel datasets that we introduced.}\label{tab:forecasting_full}
  \vskip 0.05in
  \centering
  \resizebox{0.89\columnwidth}{!}{
  \begin{threeparttable}
  \begin{small}
  \renewcommand{\multirowsetup}{\centering}
  \setlength{\tabcolsep}{1pt}

    \end{small}
  \end{threeparttable}
}
\end{table*}
\begin{table*}[htpb]
  \caption{Comparison of the complete performance with diverse prediction lengths ($\{96,192,336,720\}$) on \textbf{few-shot forecasting} task, where all samples of trainset are only partially available (5\%) in the training phase, where '-' denotes that the limited length of series cannot constitute a complete training set.}\label{tab:forecasting_few_5}
  \vspace{5pt}
  \centering
  \resizebox{0.75\columnwidth}{!}{
  \begin{small}
  \renewcommand{\multirowsetup}{\centering}
  \setlength{\tabcolsep}{1.5pt}
  \renewcommand\arraystretch{1.2}
  \begin{tabular}{cccc||cccccccccccc}
    \toprule
    \hline
    
    \multicolumn{2}{c}{\multirow{2}{*}{\scalebox{1.0}{Scenarios}}} & 
    \multicolumn{2}{c}{\rotatebox{0}{\scalebox{0.8}{\textbf{\myformer}}}} & 
    \multicolumn{2}{c}{\rotatebox{0}{\scalebox{0.8}{GPT4TS}}} & 
    \multicolumn{2}{c}{\rotatebox{0}{\scalebox{0.8}{DLinear}}} & 
    \multicolumn{2}{c}{\rotatebox{0}{\scalebox{0.8}{PatchTST}}} & 
    \multicolumn{2}{c}{\rotatebox{0}{\scalebox{0.8}{TimesNet}}} & 
    \multicolumn{2}{c}{\rotatebox{0}{\scalebox{0.8}{FEDformer}}} & 
    \multicolumn{2}{c}{\rotatebox{0}{\scalebox{0.8}{Autoformer}}} \\

    
    \cline{3-16} &  & 
    \scalebox{0.78}{MSE} & \scalebox{0.78}{MAE} & 
    \scalebox{0.78}{MSE} & \scalebox{0.78}{MAE} & 
    \scalebox{0.78}{MSE} & \scalebox{0.78}{MAE} & 
    \scalebox{0.78}{MSE} & \scalebox{0.78}{MAE} & 
    \scalebox{0.78}{MSE} & \scalebox{0.78}{MAE} & 
    \scalebox{0.78}{MSE} & \scalebox{0.78}{MAE} & 
    \scalebox{0.78}{MSE} & \scalebox{0.78}{MAE} \\
    \hline
    
    \multirow{5}{*}{\scalebox{0.9}{ETTm1}}
    
    & \scalebox{0.78}{96} & 
    {\scalebox{0.78}{0.357}} & \textbf{\scalebox{0.78}{0.350}} & 
    {\scalebox{0.78}{0.386}} & {\scalebox{0.78}{0.405}} & 
    \textbf{\scalebox{0.78}{0.332}} & {\scalebox{0.78}{0.374}} & 
    {\scalebox{0.78}{0.399}} & {\scalebox{0.78}{0.414}} & 
    {\scalebox{0.78}{0.606}} & {\scalebox{0.78}{0.518}} & 
    {\scalebox{0.78}{0.628}} & {\scalebox{0.78}{0.544}} & 
    {\scalebox{0.78}{0.726}} & {\scalebox{0.78}{0.578}} \\
    
    & \scalebox{0.78}{192} & 
    {\scalebox{0.78}{0.363}} & \textbf{\scalebox{0.78}{0.368}} & 
    {\scalebox{0.78}{0.440}} & {\scalebox{0.78}{0.438}} & 
    \textbf{\scalebox{0.78}{0.358}} & {\scalebox{0.78}{0.390}} & 
    {\scalebox{0.78}{0.441}} & {\scalebox{0.78}{0.436}} & 
    {\scalebox{0.78}{0.681}} & {\scalebox{0.78}{0.539}} & 
    {\scalebox{0.78}{0.666}} & {\scalebox{0.78}{0.566}} & 
    {\scalebox{0.78}{0.750}} & {\scalebox{0.78}{0.591}} \\
    
    & \scalebox{0.78}{336} & 
    \textbf{\scalebox{0.78}{0.401}} & \textbf{\scalebox{0.78}{0.385}} & 
    {\scalebox{0.78}{0.485}} & {\scalebox{0.78}{0.459}} & 
    {\scalebox{0.78}{0.402}} & {\scalebox{0.78}{0.416}} & 
    {\scalebox{0.78}{0.499}} & {\scalebox{0.78}{0.467}} & 
    {\scalebox{0.78}{0.786}} & {\scalebox{0.78}{0.597}} & 
    {\scalebox{0.78}{0.807}} & {\scalebox{0.78}{0.628}} & 
    {\scalebox{0.78}{0.851}} & {\scalebox{0.78}{0.659}} \\

    & \scalebox{0.78}{720} & 
    \textbf{\scalebox{0.78}{0.457}} & \textbf{\scalebox{0.78}{0.424}} & 
    {\scalebox{0.78}{0.577}} & {\scalebox{0.78}{0.499}} & 
    {\scalebox{0.78}{0.511}} & {\scalebox{0.78}{0.489}} & 
    {\scalebox{0.78}{0.767}} & {\scalebox{0.78}{0.587}} & 
    {\scalebox{0.78}{0.796}} & {\scalebox{0.78}{0.593}} & 
    {\scalebox{0.78}{0.822}} & {\scalebox{0.78}{0.633}} & 
    {\scalebox{0.78}{0.857}} & {\scalebox{0.78}{0.655}} \\

    \cline{2-16}

    & \scalebox{0.78}{avg} & 
    \textbf{\scalebox{0.78}{0.394}} & \textbf{\scalebox{0.78}{0.381}} & 
    {\scalebox{0.78}{0.472}} & {\scalebox{0.78}{0.450}} & 
    {\scalebox{0.78}{0.400}} & {\scalebox{0.78}{0.417}} & 
    {\scalebox{0.78}{0.526}} & {\scalebox{0.78}{0.476}} & 
    {\scalebox{0.78}{0.717}} & {\scalebox{0.78}{0.561}} & 
    {\scalebox{0.78}{0.730}} & {\scalebox{0.78}{0.592}} & 
    {\scalebox{0.78}{0.796}} & {\scalebox{0.78}{0.620}} \\

    \hline
    
    \multirow{5}{*}{\scalebox{0.9}{ETTm2}}
    
    & \scalebox{0.78}{96} & 
    \textbf{\scalebox{0.78}{0.175}} & \textbf{\scalebox{0.78}{0.251}} & 
    {\scalebox{0.78}{0.199}} & {\scalebox{0.78}{0.280}} & 
    {\scalebox{0.78}{0.236}} & {\scalebox{0.78}{0.326}} & 
    {\scalebox{0.78}{0.206}} & {\scalebox{0.78}{0.288}} & 
    {\scalebox{0.78}{0.220}} & {\scalebox{0.78}{0.299}} & 
    {\scalebox{0.78}{0.229}} & {\scalebox{0.78}{0.320}} & 
    {\scalebox{0.78}{0.232}} & {\scalebox{0.78}{0.322}} \\
    
    & \scalebox{0.78}{192} & 
    \textbf{\scalebox{0.78}{0.229}} & \textbf{\scalebox{0.78}{0.281}} & 
    {\scalebox{0.78}{0.256}} & {\scalebox{0.78}{0.316}} & 
    {\scalebox{0.78}{0.306}} & {\scalebox{0.78}{0.373}} & 
    {\scalebox{0.78}{0.264}} & {\scalebox{0.78}{0.324}} & 
    {\scalebox{0.78}{0.311}} & {\scalebox{0.78}{0.361}} & 
    {\scalebox{0.78}{0.394}} & {\scalebox{0.78}{0.361}} & 
    {\scalebox{0.78}{0.291}} & {\scalebox{0.78}{0.357}} \\
    
    & \scalebox{0.78}{336} & 
    \textbf{\scalebox{0.78}{0.282}} & \textbf{\scalebox{0.78}{0.314}} & 
    {\scalebox{0.78}{0.318}} & {\scalebox{0.78}{0.353}} & 
    {\scalebox{0.78}{0.380}} & {\scalebox{0.78}{0.423}} & 
    {\scalebox{0.78}{0.334}} & {\scalebox{0.78}{0.367}} & 
    {\scalebox{0.78}{0.338}} & {\scalebox{0.78}{0.366}} & 
    {\scalebox{0.78}{0.378}} & {\scalebox{0.78}{0.427}} & 
    {\scalebox{0.78}{0.478}} & {\scalebox{0.78}{0.517}} \\

    & \scalebox{0.78}{720} & 
    \textbf{\scalebox{0.78}{0.381}} & \textbf{\scalebox{0.78}{0.377}} & 
    {\scalebox{0.78}{0.460}} & {\scalebox{0.78}{0.436}} & 
    {\scalebox{0.78}{0.674}} & {\scalebox{0.78}{0.583}} & 
    {\scalebox{0.78}{0.454}} & {\scalebox{0.78}{0.432}} & 
    {\scalebox{0.78}{0.509}} & {\scalebox{0.78}{0.465}} & 
    {\scalebox{0.78}{0.523}} & {\scalebox{0.78}{0.510}} & 
    {\scalebox{0.78}{0.553}} & {\scalebox{0.78}{0.538}} \\

    \cline{2-16}

    & \scalebox{0.78}{avg} & 
    \textbf{\scalebox{0.78}{0.267}} & \textbf{\scalebox{0.78}{0.306}} & 
    {\scalebox{0.78}{0.308}} & {\scalebox{0.78}{0.346}} & 
    {\scalebox{0.78}{0.399}} & {\scalebox{0.78}{0.426}} & 
    {\scalebox{0.78}{0.314}} & {\scalebox{0.78}{0.352}} & 
    {\scalebox{0.78}{0.344}} & {\scalebox{0.78}{0.372}} & 
    {\scalebox{0.78}{0.381}} & {\scalebox{0.78}{0.404}} & 
    {\scalebox{0.78}{0.388}} & {\scalebox{0.78}{0.433}} \\

    \hline
    
    \multirow{5}{*}{\scalebox{0.9}{ETTh1}}
    
    & \scalebox{0.78}{96} & 
    \textbf{\scalebox{0.78}{0.509}} & \textbf{\scalebox{0.78}{0.440}} & 
    {\scalebox{0.78}{0.543}} & {\scalebox{0.78}{0.506}} & 
    {\scalebox{0.78}{0.547}} & {\scalebox{0.78}{0.503}} & 
    {\scalebox{0.78}{0.557}} & {\scalebox{0.78}{0.519}} & 
    {\scalebox{0.78}{0.892}} & {\scalebox{0.78}{0.625}} & 
    {\scalebox{0.78}{0.593}} & {\scalebox{0.78}{0.529}} & 
    {\scalebox{0.78}{0.681}} & {\scalebox{0.78}{0.570}} \\
    
    & \scalebox{0.78}{192} & 
    \textbf{\scalebox{0.78}{0.527}} & \textbf{\scalebox{0.78}{0.452}} & 
    {\scalebox{0.78}{0.748}} & {\scalebox{0.78}{0.580}} & 
    {\scalebox{0.78}{0.720}} & {\scalebox{0.78}{0.604}} & 
    {\scalebox{0.78}{0.711}} & {\scalebox{0.78}{0.570}} & 
    {\scalebox{0.78}{0.940}} & {\scalebox{0.78}{0.665}} & 
    {\scalebox{0.78}{0.652}} & {\scalebox{0.78}{0.563}} & 
    {\scalebox{0.78}{0.725}} & {\scalebox{0.78}{0.602}} \\
    
    & \scalebox{0.78}{336} & 
    \textbf{\scalebox{0.78}{0.528}} & \textbf{\scalebox{0.78}{0.457}} & 
    {\scalebox{0.78}{0.754}} & {\scalebox{0.78}{0.595}} & 
    {\scalebox{0.78}{0.984}} & {\scalebox{0.78}{0.727}} & 
    {\scalebox{0.78}{0.816}} & {\scalebox{0.78}{0.619}} & 
    {\scalebox{0.78}{0.945}} & {\scalebox{0.78}{0.653}} & 
    {\scalebox{0.78}{0.731}} & {\scalebox{0.78}{0.594}} & 
    {\scalebox{0.78}{0.761}} & {\scalebox{0.78}{0.624}} \\

    & \scalebox{0.78}{720} & 
    {\scalebox{0.78}{-}} & {\scalebox{0.78}{-}} & 
    {\scalebox{0.78}{-}} & {\scalebox{0.78}{-}} & 
    {\scalebox{0.78}{-}} & {\scalebox{0.78}{-}} & 
    {\scalebox{0.78}{-}} & {\scalebox{0.78}{-}} & 
    {\scalebox{0.78}{-}} & {\scalebox{0.78}{-}} & 
    {\scalebox{0.78}{-}} & {\scalebox{0.78}{-}} & 
    {\scalebox{0.78}{-}} & {\scalebox{0.78}{-}} \\

    \cline{2-16}

    & \scalebox{0.78}{avg} & 
    \textbf{\scalebox{0.78}{0.521}} & \textbf{\scalebox{0.78}{0.449}} & 
    {\scalebox{0.78}{0.681}} & {\scalebox{0.78}{0.560}} & 
    {\scalebox{0.78}{0.750}} & {\scalebox{0.78}{0.611}} & 
    {\scalebox{0.78}{0.694}} & {\scalebox{0.78}{0.569}} & 
    {\scalebox{0.78}{0.925}} & {\scalebox{0.78}{0.647}} & 
    {\scalebox{0.78}{0.658}} & {\scalebox{0.78}{0.562}} & 
    {\scalebox{0.78}{0.722}} & {\scalebox{0.78}{0.598}} \\

    \hline
    
    \multirow{5}{*}{\scalebox{0.9}{ETTh2}}
    
    & \scalebox{0.78}{96} & 
    \textbf{\scalebox{0.78}{0.337}} & \textbf{\scalebox{0.78}{0.351}} & 
    {\scalebox{0.78}{0.376}} & {\scalebox{0.78}{0.421}} & 
    {\scalebox{0.78}{0.442}} & {\scalebox{0.78}{0.456}} & 
    {\scalebox{0.78}{0.401}} & {\scalebox{0.78}{0.421}} & 
    {\scalebox{0.78}{0.409}} & {\scalebox{0.78}{0.420}} & 
    {\scalebox{0.78}{0.390}} & {\scalebox{0.78}{0.424}} & 
    {\scalebox{0.78}{0.428}} & {\scalebox{0.78}{0.468}} \\
    
    & \scalebox{0.78}{192} & 
    \textbf{\scalebox{0.78}{0.378}} & \textbf{\scalebox{0.78}{0.381}} & 
    {\scalebox{0.78}{0.418}} & {\scalebox{0.78}{0.441}} & 
    {\scalebox{0.78}{0.617}} & {\scalebox{0.78}{0.610}} & 
    {\scalebox{0.78}{0.452}} & {\scalebox{0.78}{0.455}} & 
    {\scalebox{0.78}{0.483}} & {\scalebox{0.78}{0.464}} & 
    {\scalebox{0.78}{0.457}} & {\scalebox{0.78}{0.465}} & 
    {\scalebox{0.78}{0.496}} & {\scalebox{0.78}{0.504}} \\
    
    & \scalebox{0.78}{336} & 
    \textbf{\scalebox{0.78}{0.404}} & \textbf{\scalebox{0.78}{0.410}} & 
    {\scalebox{0.78}{0.408}} & {\scalebox{0.78}{0.439}} & 
    {\scalebox{0.78}{1.424}} & {\scalebox{0.78}{0.849}} & 
    {\scalebox{0.78}{0.464}} & {\scalebox{0.78}{0.469}} & 
    {\scalebox{0.78}{0.499}} & {\scalebox{0.78}{0.479}} & 
    {\scalebox{0.78}{0.477}} & {\scalebox{0.78}{0.483}} & 
    {\scalebox{0.78}{0.486}} & {\scalebox{0.78}{0.496}} \\

    & \scalebox{0.78}{720} & 
    {\scalebox{0.78}{-}} & {\scalebox{0.78}{-}} & 
    {\scalebox{0.78}{-}} & {\scalebox{0.78}{-}} & 
    {\scalebox{0.78}{-}} & {\scalebox{0.78}{-}} & 
    {\scalebox{0.78}{-}} & {\scalebox{0.78}{-}} & 
    {\scalebox{0.78}{-}} & {\scalebox{0.78}{-}} & 
    {\scalebox{0.78}{-}} & {\scalebox{0.78}{-}} & 
    {\scalebox{0.78}{-}} & {\scalebox{0.78}{-}} \\

    \cline{2-16}

    & \scalebox{0.78}{avg} & 
    \textbf{\scalebox{0.78}{0.373}} & \textbf{\scalebox{0.78}{0.381}} & 
    {\scalebox{0.78}{0.400}} & {\scalebox{0.78}{0.433}} & 
    {\scalebox{0.78}{0.827}} & {\scalebox{0.78}{0.615}} & 
    {\scalebox{0.78}{0.439}} & {\scalebox{0.78}{0.448}} & 
    {\scalebox{0.78}{0.463}} & {\scalebox{0.78}{0.454}} & 
    {\scalebox{0.78}{0.441}} & {\scalebox{0.78}{0.457}} & 
    {\scalebox{0.78}{0.470}} & {\scalebox{0.78}{0.489}} \\

    \hline

    \multirow{5}{*}{\scalebox{0.9}{Weather}}
    
    & \scalebox{0.78}{96} & 
    \textbf{\scalebox{0.78}{0.163}} & \textbf{\scalebox{0.78}{0.209}} & 
    {\scalebox{0.78}{0.175}} & {\scalebox{0.78}{0.230}} & 
    {\scalebox{0.78}{0.184}} & {\scalebox{0.78}{0.242}} & 
    {\scalebox{0.78}{0.171}} & {\scalebox{0.78}{0.224}} & 
    {\scalebox{0.78}{0.207}} & {\scalebox{0.78}{0.253}} & 
    {\scalebox{0.78}{0.229}} & {\scalebox{0.78}{0.309}} & 
    {\scalebox{0.78}{0.227}} & {\scalebox{0.78}{0.299}} \\
    
    & \scalebox{0.78}{192} & 
    \textbf{\scalebox{0.78}{0.200}} & \textbf{\scalebox{0.78}{0.242}} & 
    {\scalebox{0.78}{0.227}} & {\scalebox{0.78}{0.276}} & 
    {\scalebox{0.78}{0.228}} & {\scalebox{0.78}{0.283}} & 
    {\scalebox{0.78}{0.230}} & {\scalebox{0.78}{0.277}} & 
    {\scalebox{0.78}{0.272}} & {\scalebox{0.78}{0.307}} & 
    {\scalebox{0.78}{0.265}} & {\scalebox{0.78}{0.317}} & 
    {\scalebox{0.78}{0.278}} & {\scalebox{0.78}{0.333}} \\
    
    & \scalebox{0.78}{336} & 
    \textbf{\scalebox{0.78}{0.252}} & \textbf{\scalebox{0.78}{0.269}} & 
    {\scalebox{0.78}{0.286}} & {\scalebox{0.78}{0.322}} & 
    {\scalebox{0.78}{0.279}} & {\scalebox{0.78}{0.322}} & 
    {\scalebox{0.78}{0.294}} & {\scalebox{0.78}{0.326}} & 
    {\scalebox{0.78}{0.313}} & {\scalebox{0.78}{0.328}} & 
    {\scalebox{0.78}{0.353}} & {\scalebox{0.78}{0.392}} & 
    {\scalebox{0.78}{0.351}} & {\scalebox{0.78}{0.393}} \\

    & \scalebox{0.78}{720} & 
    \textbf{\scalebox{0.78}{0.315}} & \textbf{\scalebox{0.78}{0.327}} & 
    {\scalebox{0.78}{0.366}} & {\scalebox{0.78}{0.379}} & 
    {\scalebox{0.78}{0.364}} & {\scalebox{0.78}{0.388}} & 
    {\scalebox{0.78}{0.384}} & {\scalebox{0.78}{0.387}} & 
    {\scalebox{0.78}{0.400}} & {\scalebox{0.78}{0.385}} & 
    {\scalebox{0.78}{0.391}} & {\scalebox{0.78}{0.394}} & 
    {\scalebox{0.78}{0.387}} & {\scalebox{0.78}{0.389}} \\

    \cline{2-16}

    & \scalebox{0.78}{avg} & 
    \textbf{\scalebox{0.78}{0.233}} & \textbf{\scalebox{0.78}{0.262}} & 
    {\scalebox{0.78}{0.263}} & {\scalebox{0.78}{0.301}} & 
    {\scalebox{0.78}{0.263}} & {\scalebox{0.78}{0.308}} & 
    {\scalebox{0.78}{0.269}} & {\scalebox{0.78}{0.303}} & 
    {\scalebox{0.78}{0.298}} & {\scalebox{0.78}{0.318}} & 
    {\scalebox{0.78}{0.309}} & {\scalebox{0.78}{0.353}} & 
    {\scalebox{0.78}{0.310}} & {\scalebox{0.78}{0.353}} \\

    \hline
    \bottomrule
  \end{tabular}
  \end{small}
}
\end{table*}
\begin{table*}[htpb]
  \caption{Comparison of the complete performance with diverse prediction lengths ($\{96,192,336,720\}$) on \textbf{few-shot forecasting} task, where all samples of trainset are only partially available (10\%) in the training phase.)}\label{tab:forecasting_few_10}
  \vspace{5pt}
  \centering
  \resizebox{0.75\columnwidth}{!}{
  \begin{small}
  \renewcommand{\multirowsetup}{\centering}
  \setlength{\tabcolsep}{1.5pt}
  \renewcommand\arraystretch{1.2}
  \begin{tabular}{cccc||cccccccccccc}
    \toprule
    \hline
    
    \multicolumn{2}{c}{\multirow{2}{*}{\scalebox{1.0}{Scenarios}}} & 
    \multicolumn{2}{c}{\rotatebox{0}{\scalebox{0.8}{\textbf{\myformer}}}} & 
    \multicolumn{2}{c}{\rotatebox{0}{\scalebox{0.8}{GPT4TS}}} & 
    \multicolumn{2}{c}{\rotatebox{0}{\scalebox{0.8}{DLinear}}} & 
    \multicolumn{2}{c}{\rotatebox{0}{\scalebox{0.8}{PatchTST}}} & 
    \multicolumn{2}{c}{\rotatebox{0}{\scalebox{0.8}{TimesNet}}} & 
    \multicolumn{2}{c}{\rotatebox{0}{\scalebox{0.8}{FEDformer}}} & 
    \multicolumn{2}{c}{\rotatebox{0}{\scalebox{0.8}{Autoformer}}} \\

    
    \cline{3-16} &  & 
    \scalebox{0.78}{MSE} & \scalebox{0.78}{MAE} & 
    \scalebox{0.78}{MSE} & \scalebox{0.78}{MAE} & 
    \scalebox{0.78}{MSE} & \scalebox{0.78}{MAE} & 
    \scalebox{0.78}{MSE} & \scalebox{0.78}{MAE} & 
    \scalebox{0.78}{MSE} & \scalebox{0.78}{MAE} & 
    \scalebox{0.78}{MSE} & \scalebox{0.78}{MAE} & 
    \scalebox{0.78}{MSE} & \scalebox{0.78}{MAE} \\
    \hline
    
    \multirow{5}{*}{\scalebox{0.9}{ETTm1}}
    
    & \scalebox{0.78}{96} & 
    \textbf{\scalebox{0.78}{0.337}} & \textbf{\scalebox{0.78}{0.350}} & 
    {\scalebox{0.78}{0.390}} & {\scalebox{0.78}{0.404}} & 
    {\scalebox{0.78}{0.352}} & {\scalebox{0.78}{0.392}} & 
    {\scalebox{0.78}{0.410}} & {\scalebox{0.78}{0.419}} & 
    {\scalebox{0.78}{0.583}} & {\scalebox{0.78}{0.501}} & 
    {\scalebox{0.78}{0.578}} & {\scalebox{0.78}{0.518}} & 
    {\scalebox{0.78}{0.774}} & {\scalebox{0.78}{0.614}} \\
    
    & \scalebox{0.78}{192} & 
    \textbf{\scalebox{0.78}{0.364}} & \textbf{\scalebox{0.78}{0.363}} & 
    {\scalebox{0.78}{0.429}} & {\scalebox{0.78}{0.423}} & 
    {\scalebox{0.78}{0.382}} & {\scalebox{0.78}{0.412}} & 
    {\scalebox{0.78}{0.437}} & {\scalebox{0.78}{0.434}} & 
    {\scalebox{0.78}{0.630}} & {\scalebox{0.78}{0.528}} & 
    {\scalebox{0.78}{0.617}} & {\scalebox{0.78}{0.546}} & 
    {\scalebox{0.78}{0.754}} & {\scalebox{0.78}{0.592}} \\
    
    & \scalebox{0.78}{336} & 
    \textbf{\scalebox{0.78}{0.389}} & \textbf{\scalebox{0.78}{0.388}} & 
    {\scalebox{0.78}{0.469}} & {\scalebox{0.78}{0.439}} & 
    {\scalebox{0.78}{0.419}} & {\scalebox{0.78}{0.434}} & 
    {\scalebox{0.78}{0.476}} & {\scalebox{0.78}{0.454}} & 
    {\scalebox{0.78}{0.725}} & {\scalebox{0.78}{0.568}} & 
    {\scalebox{0.78}{0.998}} & {\scalebox{0.78}{0.775}} & 
    {\scalebox{0.78}{0.869}} & {\scalebox{0.78}{0.677}} \\

    & \scalebox{0.78}{720} & 
    \textbf{\scalebox{0.78}{0.441}} & \textbf{\scalebox{0.78}{0.417}} & 
    {\scalebox{0.78}{0.569}} & {\scalebox{0.78}{0.498}} & 
    {\scalebox{0.78}{0.490}} & {\scalebox{0.78}{0.477}} & 
    {\scalebox{0.78}{0.681}} & {\scalebox{0.78}{0.556}} & 
    {\scalebox{0.78}{0.769}} & {\scalebox{0.78}{0.549}} & 
    {\scalebox{0.78}{0.693}} & {\scalebox{0.78}{0.579}} & 
    {\scalebox{0.78}{0.810}} & {\scalebox{0.78}{0.630}} \\

    \cline{2-16}

    & \scalebox{0.78}{avg} & 
    \textbf{\scalebox{0.78}{0.383}} & \textbf{\scalebox{0.78}{0.380}} & 
    {\scalebox{0.78}{0.464}} & {\scalebox{0.78}{0.441}} & 
    {\scalebox{0.78}{0.411}} & {\scalebox{0.78}{0.429}} & 
    {\scalebox{0.78}{0.501}} & {\scalebox{0.78}{0.466}} & 
    {\scalebox{0.78}{0.677}} & {\scalebox{0.78}{0.537}} & 
    {\scalebox{0.78}{0.722}} & {\scalebox{0.78}{0.605}} & 
    {\scalebox{0.78}{0.802}} & {\scalebox{0.78}{0.628}} \\

    \hline
    
    \multirow{5}{*}{\scalebox{0.9}{ETTm2}}
    
    & \scalebox{0.78}{96} & 
    \textbf{\scalebox{0.78}{0.170}} & \textbf{\scalebox{0.78}{0.246}} & 
    {\scalebox{0.78}{0.188}} & {\scalebox{0.78}{0.269}} & 
    {\scalebox{0.78}{0.213}} & {\scalebox{0.78}{0.303}} & 
    {\scalebox{0.78}{0.191}} & {\scalebox{0.78}{0.274}} & 
    {\scalebox{0.78}{0.212}} & {\scalebox{0.78}{0.285}} & 
    {\scalebox{0.78}{0.291}} & {\scalebox{0.78}{0.399}} & 
    {\scalebox{0.78}{0.352}} & {\scalebox{0.78}{0.454}} \\
    
    & \scalebox{0.78}{192} & 
    \textbf{\scalebox{0.78}{0.226}} & \textbf{\scalebox{0.78}{0.279}} & 
    {\scalebox{0.78}{0.251}} & {\scalebox{0.78}{0.309}} & 
    {\scalebox{0.78}{0.278}} & {\scalebox{0.78}{0.345}} & 
    {\scalebox{0.78}{0.252}} & {\scalebox{0.78}{0.317}} & 
    {\scalebox{0.78}{0.270}} & {\scalebox{0.78}{0.323}} & 
    {\scalebox{0.78}{0.307}} & {\scalebox{0.78}{0.379}} & 
    {\scalebox{0.78}{0.694}} & {\scalebox{0.78}{0.691}} \\
    
    & \scalebox{0.78}{336} & 
    \textbf{\scalebox{0.78}{0.279}} & \textbf{\scalebox{0.78}{0.312}} & 
    {\scalebox{0.78}{0.307}} & {\scalebox{0.78}{0.346}} & 
    {\scalebox{0.78}{0.338}} & {\scalebox{0.78}{0.385}} & 
    {\scalebox{0.78}{0.306}} & {\scalebox{0.78}{0.353}} & 
    {\scalebox{0.78}{0.323}} & {\scalebox{0.78}{0.353}} & 
    {\scalebox{0.78}{0.543}} & {\scalebox{0.78}{0.559}} & 
    {\scalebox{0.78}{2.408}} & {\scalebox{0.78}{1.407}} \\

    & \scalebox{0.78}{720} & 
    \textbf{\scalebox{0.78}{0.379}} & \textbf{\scalebox{0.78}{0.375}} & 
    {\scalebox{0.78}{0.426}} & {\scalebox{0.78}{0.417}} & 
    {\scalebox{0.78}{0.436}} & {\scalebox{0.78}{0.440}} & 
    {\scalebox{0.78}{0.433}} & {\scalebox{0.78}{0.427}} & 
    {\scalebox{0.78}{0.474}} & {\scalebox{0.78}{0.449}} & 
    {\scalebox{0.78}{0.712}} & {\scalebox{0.78}{0.614}} & 
    {\scalebox{0.78}{1.913}} & {\scalebox{0.78}{1.166}} \\

    \cline{2-16}

    & \scalebox{0.78}{avg} & 
    \textbf{\scalebox{0.78}{0.264}} & \textbf{\scalebox{0.78}{0.303}} & 
    {\scalebox{0.78}{0.293}} & {\scalebox{0.78}{0.335}} & 
    {\scalebox{0.78}{0.316}} & {\scalebox{0.78}{0.368}} & 
    {\scalebox{0.78}{0.296}} & {\scalebox{0.78}{0.343}} & 
    {\scalebox{0.78}{0.320}} & {\scalebox{0.78}{0.353}} & 
    {\scalebox{0.78}{0.463}} & {\scalebox{0.78}{0.488}} & 
    {\scalebox{0.78}{1.342}} & {\scalebox{0.78}{0.930}} \\

    \hline
    
    \multirow{5}{*}{\scalebox{0.9}{ETTh1}}
    
    & \scalebox{0.78}{96} & 
    \textbf{\scalebox{0.78}{0.432}} & \textbf{\scalebox{0.78}{0.409}} & 
    {\scalebox{0.78}{0.458}} & {\scalebox{0.78}{0.456}} & 
    {\scalebox{0.78}{0.492}} & {\scalebox{0.78}{0.495}} & 
    {\scalebox{0.78}{0.516}} & {\scalebox{0.78}{0.485}} & 
    {\scalebox{0.78}{0.861}} & {\scalebox{0.78}{0.628}} & 
    {\scalebox{0.78}{0.512}} & {\scalebox{0.78}{0.499}} & 
    {\scalebox{0.78}{0.613}} & {\scalebox{0.78}{0.552}} \\
    
    & \scalebox{0.78}{192} & 
    \textbf{\scalebox{0.78}{0.456}} & \textbf{\scalebox{0.78}{0.428}} & 
    {\scalebox{0.78}{0.570}} & {\scalebox{0.78}{0.516}} & 
    {\scalebox{0.78}{0.565}} & {\scalebox{0.78}{0.538}} & 
    {\scalebox{0.78}{0.598}} & {\scalebox{0.78}{0.524}} & 
    {\scalebox{0.78}{0.797}} & {\scalebox{0.78}{0.593}} & 
    {\scalebox{0.78}{0.624}} & {\scalebox{0.78}{0.555}} & 
    {\scalebox{0.78}{0.722}} & {\scalebox{0.78}{0.598}} \\
    
    & \scalebox{0.78}{336} & 
    \textbf{\scalebox{0.78}{0.489}} & \textbf{\scalebox{0.78}{0.453}} & 
    {\scalebox{0.78}{0.608}} & {\scalebox{0.78}{0.535}} & 
    {\scalebox{0.78}{0.721}} & {\scalebox{0.78}{0.622}} & 
    {\scalebox{0.78}{0.657}} & {\scalebox{0.78}{0.550}} & 
    {\scalebox{0.78}{0.941}} & {\scalebox{0.78}{0.648}} & 
    {\scalebox{0.78}{0.691}} & {\scalebox{0.78}{0.574}} & 
    {\scalebox{0.78}{0.750}} & {\scalebox{0.78}{0.619}} \\

    & \scalebox{0.78}{720} & 
    \textbf{\scalebox{0.78}{0.531}} & \textbf{\scalebox{0.78}{0.502}} & 
    {\scalebox{0.78}{0.725}} & {\scalebox{0.78}{0.591}} & 
    {\scalebox{0.78}{0.986}} & {\scalebox{0.78}{0.743}} & 
    {\scalebox{0.78}{0.762}} & {\scalebox{0.78}{0.610}} & 
    {\scalebox{0.78}{0.877}} & {\scalebox{0.78}{0.641}} & 
    {\scalebox{0.78}{0.728}} & {\scalebox{0.78}{0.614}} & 
    {\scalebox{0.78}{0.721}} & {\scalebox{0.78}{0.616}} \\

    \cline{2-16}

    & \scalebox{0.78}{avg} & 
    \textbf{\scalebox{0.78}{0.477}} & \textbf{\scalebox{0.78}{0.448}} & 
    {\scalebox{0.78}{0.590}} & {\scalebox{0.78}{0.525}} & 
    {\scalebox{0.78}{0.691}} & {\scalebox{0.78}{0.600}} & 
    {\scalebox{0.78}{0.633}} & {\scalebox{0.78}{0.542}} & 
    {\scalebox{0.78}{0.869}} & {\scalebox{0.78}{0.628}} & 
    {\scalebox{0.78}{0.639}} & {\scalebox{0.78}{0.561}} & 
    {\scalebox{0.78}{0.702}} & {\scalebox{0.78}{0.696}} \\

    \hline
    
    \multirow{5}{*}{\scalebox{0.9}{ETTh2}}
    
    & \scalebox{0.78}{96} & 
    \textbf{\scalebox{0.78}{0.308}} & \textbf{\scalebox{0.78}{0.350}} & 
    {\scalebox{0.78}{0.331}} & {\scalebox{0.78}{0.374}} & 
    {\scalebox{0.78}{0.357}} & {\scalebox{0.78}{0.411}} & 
    {\scalebox{0.78}{0.353}} & {\scalebox{0.78}{0.389}} & 
    {\scalebox{0.78}{0.378}} & {\scalebox{0.78}{0.409}} & 
    {\scalebox{0.78}{0.382}} & {\scalebox{0.78}{0.416}} & 
    {\scalebox{0.78}{0.413}} & {\scalebox{0.78}{0.451}} \\
    
    & \scalebox{0.78}{192} & 
    \textbf{\scalebox{0.78}{0.376}} & \textbf{\scalebox{0.78}{0.385}} & 
    {\scalebox{0.78}{0.402}} & {\scalebox{0.78}{0.411}} & 
    {\scalebox{0.78}{0.569}} & {\scalebox{0.78}{0.519}} & 
    {\scalebox{0.78}{0.403}} & {\scalebox{0.78}{0.414}} & 
    {\scalebox{0.78}{0.490}} & {\scalebox{0.78}{0.467}} & 
    {\scalebox{0.78}{0.478}} & {\scalebox{0.78}{0.474}} & 
    {\scalebox{0.78}{0.474}} & {\scalebox{0.78}{0.477}} \\
    
    & \scalebox{0.78}{336} & 
    {\scalebox{0.78}{0.382}} & \textbf{\scalebox{0.78}{0.406}} & 
    \textbf{\scalebox{0.78}{0.406}} & {\scalebox{0.78}{0.433}} & 
    {\scalebox{0.78}{0.671}} & {\scalebox{0.78}{0.572}} & 
    {\scalebox{0.78}{0.426}} & {\scalebox{0.78}{0.441}} & 
    {\scalebox{0.78}{0.537}} & {\scalebox{0.78}{0.494}} & 
    {\scalebox{0.78}{0.504}} & {\scalebox{0.78}{0.501}} & 
    {\scalebox{0.78}{0.547}} & {\scalebox{0.78}{0.543}} \\

    & \scalebox{0.78}{720} & 
    \textbf{\scalebox{0.78}{0.408}} & \textbf{\scalebox{0.78}{0.413}} & 
    {\scalebox{0.78}{0.449}} & {\scalebox{0.78}{0.464}} & 
    {\scalebox{0.78}{0.824}} & {\scalebox{0.78}{0.648}} & 
    {\scalebox{0.78}{0.477}} & {\scalebox{0.78}{0.480}} & 
    {\scalebox{0.78}{0.510}} & {\scalebox{0.78}{0.491}} & 
    {\scalebox{0.78}{0.499}} & {\scalebox{0.78}{0.509}} & 
    {\scalebox{0.78}{0.516}} & {\scalebox{0.78}{0.523}} \\

    \cline{2-16}

    & \scalebox{0.78}{avg} & 
    \textbf{\scalebox{0.78}{0.369}} & \textbf{\scalebox{0.78}{0.389}} & 
    {\scalebox{0.78}{0.397}} & {\scalebox{0.78}{0.421}} & 
    {\scalebox{0.78}{0.605}} & {\scalebox{0.78}{0.538}} & 
    {\scalebox{0.78}{0.415}} & {\scalebox{0.78}{0.431}} & 
    {\scalebox{0.78}{0.479}} & {\scalebox{0.78}{0.465}} & 
    {\scalebox{0.78}{0.466}} & {\scalebox{0.78}{0.475}} & 
    {\scalebox{0.78}{0.488}} & {\scalebox{0.78}{0.499}} \\

    \hline

    \multirow{5}{*}{\scalebox{0.9}{Weather}}
    
    & \scalebox{0.78}{96} & 
    \textbf{\scalebox{0.78}{0.155}} & \textbf{\scalebox{0.78}{0.205}} & 
    {\scalebox{0.78}{0.163}} & {\scalebox{0.78}{0.215}} & 
    {\scalebox{0.78}{0.171}} & {\scalebox{0.78}{0.224}} & 
    {\scalebox{0.78}{0.165}} & {\scalebox{0.78}{0.215}} & 
    {\scalebox{0.78}{0.184}} & {\scalebox{0.78}{0.230}} & 
    {\scalebox{0.78}{0.188}} & {\scalebox{0.78}{0.253}} & 
    {\scalebox{0.78}{0.221}} & {\scalebox{0.78}{0.297}} \\
    
    & \scalebox{0.78}{192} & 
    \textbf{\scalebox{0.78}{0.194}} & \textbf{\scalebox{0.78}{0.237}} & 
    {\scalebox{0.78}{0.210}} & {\scalebox{0.78}{0.254}} & 
    {\scalebox{0.78}{0.215}} & {\scalebox{0.78}{0.263}} & 
    {\scalebox{0.78}{0.210}} & {\scalebox{0.78}{0.257}} & 
    {\scalebox{0.78}{0.245}} & {\scalebox{0.78}{0.283}} & 
    {\scalebox{0.78}{0.250}} & {\scalebox{0.78}{0.304}} & 
    {\scalebox{0.78}{0.270}} & {\scalebox{0.78}{0.322}} \\
    
    & \scalebox{0.78}{336} & 
    \textbf{\scalebox{0.78}{0.249}} & \textbf{\scalebox{0.78}{0.261}} & 
    {\scalebox{0.78}{0.256}} & {\scalebox{0.78}{0.292}} & 
    {\scalebox{0.78}{0.258}} & {\scalebox{0.78}{0.299}} & 
    {\scalebox{0.78}{0.259}} & {\scalebox{0.78}{0.297}} & 
    {\scalebox{0.78}{0.305}} & {\scalebox{0.78}{0.321}} & 
    {\scalebox{0.78}{0.312}} & {\scalebox{0.78}{0.346}} & 
    {\scalebox{0.78}{0.320}} & {\scalebox{0.78}{0.351}} \\

    & \scalebox{0.78}{720} & 
    \textbf{\scalebox{0.78}{0.310}} & \textbf{\scalebox{0.78}{0.325}} & 
    {\scalebox{0.78}{0.321}} & {\scalebox{0.78}{0.339}} & 
    {\scalebox{0.78}{0.320}} & {\scalebox{0.78}{0.346}} & 
    {\scalebox{0.78}{0.332}} & {\scalebox{0.78}{0.346}} & 
    {\scalebox{0.78}{0.381}} & {\scalebox{0.78}{0.371}} & 
    {\scalebox{0.78}{0.387}} & {\scalebox{0.78}{0.393}} & 
    {\scalebox{0.78}{0.390}} & {\scalebox{0.78}{0.396}} \\

    \cline{2-16}

    & \scalebox{0.78}{avg} & 
    \textbf{\scalebox{0.78}{0.227}} & \textbf{\scalebox{0.78}{0.257}} & 
    {\scalebox{0.78}{0.238}} & {\scalebox{0.78}{0.275}} & 
    {\scalebox{0.78}{0.241}} & {\scalebox{0.78}{0.283}} & 
    {\scalebox{0.78}{0.242}} & {\scalebox{0.78}{0.279}} & 
    {\scalebox{0.78}{0.279}} & {\scalebox{0.78}{0.301}} & 
    {\scalebox{0.78}{0.284}} & {\scalebox{0.78}{0.324}} & 
    {\scalebox{0.78}{0.300}} & {\scalebox{0.78}{0.342}} \\

    \hline
    \bottomrule
  \end{tabular}
  \end{small}
}
\end{table*}

\begin{table*}[htpb]
  \caption{Comparison of the complete performance with diverse prediction lengths ($\{96,192,336,720\}$) on \textbf{zero-shot forecasting} task. Where $Source\rightarrow Traget$ indicates that the model is first pre-trained on the single train set of the \emph{SourceDomain}, subsequently, the model parameters are frozen and predicted on the test set of the \emph{TargetDomain}.}\label{tab:forecasting_zero_cross_single}
  \vspace{5pt}
  \centering
  \resizebox{0.75\columnwidth}{!}{
  \begin{small}
  \renewcommand{\multirowsetup}{\centering}
  \setlength{\tabcolsep}{1.5pt}
  \renewcommand\arraystretch{1.2}
  \begin{tabular}{cccc||cccccccccccc}
    \toprule
    \hline
    
    \multicolumn{2}{c}{\multirow{2}{*}{\scalebox{1.0}{Scenarios}}} & 
    \multicolumn{2}{c}{\rotatebox{0}{\scalebox{0.8}{\textbf{\myformer}}}} & 
    \multicolumn{2}{c}{\rotatebox{0}{\scalebox{0.8}{GPT4TS}}} & 
    \multicolumn{2}{c}{\rotatebox{0}{\scalebox{0.8}{DLinear}}} & 
    \multicolumn{2}{c}{\rotatebox{0}{\scalebox{0.8}{PatchTST}}} & 
    \multicolumn{2}{c}{\rotatebox{0}{\scalebox{0.8}{TimesNet}}} & 
    \multicolumn{2}{c}{\rotatebox{0}{\scalebox{0.8}{FEDformer}}} & 
    \multicolumn{2}{c}{\rotatebox{0}{\scalebox{0.8}{Autoformer}}} \\

    
    \cline{3-16} & &
    \scalebox{0.78}{MSE} & \scalebox{0.78}{MAE} & 
    \scalebox{0.78}{MSE} & \scalebox{0.78}{MAE} & 
    \scalebox{0.78}{MSE} & \scalebox{0.78}{MAE} & 
    \scalebox{0.78}{MSE} & \scalebox{0.78}{MAE} & 
    \scalebox{0.78}{MSE} & \scalebox{0.78}{MAE} & 
    \scalebox{0.78}{MSE} & \scalebox{0.78}{MAE} & 
    \scalebox{0.78}{MSE} & \scalebox{0.78}{MAE} \\
    \hline
    
    \multirow{5}{*}{\scalebox{0.9}{\shortstack{ETTm2\\ $\downarrow$ \\ETTm1}}}
    
    & \scalebox{0.78}{96} & 
    \textbf{\scalebox{0.78}{0.400}} & \textbf{\scalebox{0.78}{0.407}} & 
    {\scalebox{0.78}{0.706}} & {\scalebox{0.78}{0.543}} & 
    {\scalebox{0.78}{0.487}} & {\scalebox{0.78}{0.455}} & 
    {\scalebox{0.78}{0.481}} & {\scalebox{0.78}{0.441}} & 
    {\scalebox{0.78}{0.749}} & {\scalebox{0.78}{0.557}} & 
    {\scalebox{0.78}{0.691}} & {\scalebox{0.78}{0.548}} & 
    {\scalebox{0.78}{0.696}} & {\scalebox{0.78}{0.551}} \\ 
    
    & \scalebox{0.78}{192} & 
    \textbf{\scalebox{0.78}{0.415}} & \textbf{\scalebox{0.78}{0.417}} & 
    {\scalebox{0.78}{0.753}} & {\scalebox{0.78}{0.566}} & 
    {\scalebox{0.78}{0.498}} & {\scalebox{0.78}{0.461}} & 
    {\scalebox{0.78}{0.542}} & {\scalebox{0.78}{0.473}} & 
    {\scalebox{0.78}{0.908}} & {\scalebox{0.78}{0.613}} & 
    {\scalebox{0.78}{0.710}} & {\scalebox{0.78}{0.558}} & 
    {\scalebox{0.78}{0.716}} & {\scalebox{0.78}{0.562}} \\ 
    
    & \scalebox{0.78}{336} & 
    \textbf{\scalebox{0.78}{0.442}} & \textbf{\scalebox{0.78}{0.434}} & 
    {\scalebox{0.78}{0.787}} & {\scalebox{0.78}{0.581}} & 
    {\scalebox{0.78}{0.520}} & {\scalebox{0.78}{0.477}} & 
    {\scalebox{0.78}{0.599}} & {\scalebox{0.78}{0.520}} & 
    {\scalebox{0.78}{0.901}} & {\scalebox{0.78}{0.613}} & 
    {\scalebox{0.78}{0.723}} & {\scalebox{0.78}{0.567}} & 
    {\scalebox{0.78}{0.726}} & {\scalebox{0.78}{0.569}} \\ 

    & \scalebox{0.78}{720} & 
    \textbf{\scalebox{0.78}{0.481}} & \textbf{\scalebox{0.78}{0.459}} & 
    {\scalebox{0.78}{0.914}} & {\scalebox{0.78}{0.627}} & 
    {\scalebox{0.78}{0.558}} & {\scalebox{0.78}{0.501}} & 
    {\scalebox{0.78}{0.760}} & {\scalebox{0.78}{0.597}} & 
    {\scalebox{0.78}{0.871}} & {\scalebox{0.78}{0.615}} & 
    {\scalebox{0.78}{0.747}} & {\scalebox{0.78}{0.581}} & 
    {\scalebox{0.78}{0.748}} & {\scalebox{0.78}{0.582}} \\ 

    \cline{2-16}

    & \scalebox{0.78}{avg} & 
    \textbf{\scalebox{0.78}{0.434}} & \textbf{\scalebox{0.78}{0.429}} & 
    {\scalebox{0.78}{0.790}} & {\scalebox{0.78}{0.579}} & 
    {\scalebox{0.78}{0.516}} & {\scalebox{0.78}{0.473}} & 
    {\scalebox{0.78}{0.596}} & {\scalebox{0.78}{0.508}} & 
    {\scalebox{0.78}{0.857}} & {\scalebox{0.78}{0.599}} & 
    {\scalebox{0.78}{0.718}} & {\scalebox{0.78}{0.564}} & 
    {\scalebox{0.78}{0.722}} & {\scalebox{0.78}{0.566}} \\ 

    \hline
    
    \multirow{5}{*}{\scalebox{0.9}{\shortstack{ETTm1\\ $\downarrow$ \\ETTm2}}}
    
    & \scalebox{0.78}{96} & 
    \textbf{\scalebox{0.78}{0.191}} & \textbf{\scalebox{0.78}{0.263}} & 
    {\scalebox{0.78}{0.231}} & {\scalebox{0.78}{0.308}} & 
    {\scalebox{0.78}{0.260}} & {\scalebox{0.78}{0.346}} & 
    {\scalebox{0.78}{0.224}} & {\scalebox{0.78}{0.299}} & 
    {\scalebox{0.78}{0.259}} & {\scalebox{0.78}{0.327}} & 
    {\scalebox{0.78}{0.225}} & {\scalebox{0.78}{0.305}} & 
    {\scalebox{0.78}{0.233}} & {\scalebox{0.78}{0.315}} \\ 
    
    & \scalebox{0.78}{192} & 
    \textbf{\scalebox{0.78}{0.254}} & \textbf{\scalebox{0.78}{0.303}} & 
    {\scalebox{0.78}{0.307}} & {\scalebox{0.78}{0.349}} & 
    {\scalebox{0.78}{0.309}} & {\scalebox{0.78}{0.380}} & 
    {\scalebox{0.78}{0.286}} & {\scalebox{0.78}{0.338}} & 
    {\scalebox{0.78}{0.307}} & {\scalebox{0.78}{0.358}} & 
    {\scalebox{0.78}{0.285}} & {\scalebox{0.78}{0.339}} & 
    {\scalebox{0.78}{0.290}} & {\scalebox{0.78}{0.348}} \\ 
    
    & \scalebox{0.78}{336} & 
    \textbf{\scalebox{0.78}{0.312}} & \textbf{\scalebox{0.78}{0.340}} & 
    {\scalebox{0.78}{0.372}} & {\scalebox{0.78}{0.388}} & 
    {\scalebox{0.78}{0.378}} & {\scalebox{0.78}{0.426}} & 
    {\scalebox{0.78}{0.345}} & {\scalebox{0.78}{0.375}} & 
    {\scalebox{0.78}{0.372}} & {\scalebox{0.78}{0.394}} & 
    {\scalebox{0.78}{0.339}} & {\scalebox{0.78}{0.371}} & 
    {\scalebox{0.78}{0.340}} & {\scalebox{0.78}{0.374}} \\ 

    & \scalebox{0.78}{720} & 
    \textbf{\scalebox{0.78}{0.413}} & \textbf{\scalebox{0.78}{0.399}} & 
    {\scalebox{0.78}{0.457}} & {\scalebox{0.78}{0.432}} & 
    {\scalebox{0.78}{0.491}} & {\scalebox{0.78}{0.489}} & 
    {\scalebox{0.78}{0.443}} & {\scalebox{0.78}{0.430}} & 
    {\scalebox{0.78}{0.489}} & {\scalebox{0.78}{0.456}} & 
    {\scalebox{0.78}{0.437}} & {\scalebox{0.78}{0.426}} & 
    {\scalebox{0.78}{0.435}} & {\scalebox{0.78}{0.423}} \\ 

    \cline{2-16}

    & \scalebox{0.78}{avg} & 
    \textbf{\scalebox{0.78}{0.293}} & \textbf{\scalebox{0.78}{0.326}} & 
    {\scalebox{0.78}{0.342}} & {\scalebox{0.78}{0.369}} & 
    {\scalebox{0.78}{0.360}} & {\scalebox{0.78}{0.410}} & 
    {\scalebox{0.78}{0.325}} & {\scalebox{0.78}{0.361}} & 
    {\scalebox{0.78}{0.357}} & {\scalebox{0.78}{0.384}} & 
    {\scalebox{0.78}{0.321}} & {\scalebox{0.78}{0.360}} & 
    {\scalebox{0.78}{0.325}} & {\scalebox{0.78}{0.365}} \\ 

    \hline
    
    \multirow{5}{*}{\scalebox{0.9}{\shortstack{ETTh2\\ $\downarrow$ \\ETTh1}}}
    
    & \scalebox{0.78}{96} & 
    \textbf{\scalebox{0.78}{0.466}} & \textbf{\scalebox{0.78}{0.460}} & 
    {\scalebox{0.78}{0.712}} & {\scalebox{0.78}{0.562}} & 
    {\scalebox{0.78}{0.527}} & {\scalebox{0.78}{0.480}} & 
    {\scalebox{0.78}{0.545}} & {\scalebox{0.78}{0.491}} & 
    {\scalebox{0.78}{0.834}} & {\scalebox{0.78}{0.608}} & 
    {\scalebox{0.78}{0.721}} & {\scalebox{0.78}{0.573}} & 
    {\scalebox{0.78}{0.709}} & {\scalebox{0.78}{0.576}} \\ 
    
    & \scalebox{0.78}{192} & 
    \textbf{\scalebox{0.78}{0.505}} & \textbf{\scalebox{0.78}{0.483}} & 
    {\scalebox{0.78}{0.762}} & {\scalebox{0.78}{0.595}} & 
    {\scalebox{0.78}{0.587}} & {\scalebox{0.78}{0.521}} & 
    {\scalebox{0.78}{0.642}} & {\scalebox{0.78}{0.549}} & 
    {\scalebox{0.78}{0.958}} & {\scalebox{0.78}{0.637}} & 
    {\scalebox{0.78}{0.751}} & {\scalebox{0.78}{0.594}} & 
    {\scalebox{0.78}{0.749}} & {\scalebox{0.78}{0.593}} \\ 
    
    & \scalebox{0.78}{336} & 
    \textbf{\scalebox{0.78}{0.535}} & \textbf{\scalebox{0.78}{0.501}} & 
    {\scalebox{0.78}{0.815}} & {\scalebox{0.78}{0.623}} & 
    {\scalebox{0.78}{0.671}} & {\scalebox{0.78}{0.554}} & 
    {\scalebox{0.78}{0.630}} & {\scalebox{0.78}{0.541}} & 
    {\scalebox{0.78}{0.842}} & {\scalebox{0.78}{0.606}} & 
    {\scalebox{0.78}{0.765}} & {\scalebox{0.78}{0.611}} & 
    {\scalebox{0.78}{0.739}} & {\scalebox{0.78}{0.593}} \\ 

    & \scalebox{0.78}{720} & 
    \textbf{\scalebox{0.78}{0.543}} & \textbf{\scalebox{0.78}{0.529}} & 
    {\scalebox{0.78}{0.830}} & {\scalebox{0.78}{0.637}} & 
    {\scalebox{0.78}{0.652}} & {\scalebox{0.78}{0.572}} & 
    {\scalebox{0.78}{0.646}} & {\scalebox{0.78}{0.568}} & 
    {\scalebox{0.78}{1.047}} & {\scalebox{0.78}{0.689}} & 
    {\scalebox{0.78}{0.747}} & {\scalebox{0.78}{0.616}} & 
    {\scalebox{0.78}{0.742}} & {\scalebox{0.78}{0.611}} \\ 

    \cline{2-16}

    & \scalebox{0.78}{avg} & 
    \textbf{\scalebox{0.78}{0.512}} & \textbf{\scalebox{0.78}{0.493}} & 
    {\scalebox{0.78}{0.780}} & {\scalebox{0.78}{0.604}} & 
    {\scalebox{0.78}{0.609}} & {\scalebox{0.78}{0.532}} & 
    {\scalebox{0.78}{0.616}} & {\scalebox{0.78}{0.537}} & 
    {\scalebox{0.78}{0.920}} & {\scalebox{0.78}{0.635}} & 
    {\scalebox{0.78}{0.746}} & {\scalebox{0.78}{0.598}} & 
    {\scalebox{0.78}{0.735}} & {\scalebox{0.78}{0.593}} \\ 

    \hline
    
    \multirow{5}{*}{\scalebox{0.9}{\shortstack{ETTh1\\ $\downarrow$ \\ETTh2}}}
    
    & \scalebox{0.78}{96} & 
    \textbf{\scalebox{0.78}{0.297}} & \textbf{\scalebox{0.78}{0.343}} & 
    {\scalebox{0.78}{0.357}} & {\scalebox{0.78}{0.390}} & 
    {\scalebox{0.78}{0.327}} & {\scalebox{0.78}{0.387}} & 
    {\scalebox{0.78}{0.335}} & {\scalebox{0.78}{0.388}} & 
    {\scalebox{0.78}{0.388}} & {\scalebox{0.78}{0.404}} & 
    {\scalebox{0.78}{0.372}} & {\scalebox{0.78}{0.415}} & 
    {\scalebox{0.78}{0.383}} & {\scalebox{0.78}{0.419}} \\ 
    
    & \scalebox{0.78}{192} & 
    \textbf{\scalebox{0.78}{0.395}} & \textbf{\scalebox{0.78}{0.403}} & 
    {\scalebox{0.78}{0.424}} & {\scalebox{0.78}{0.424}} & 
    {\scalebox{0.78}{0.444}} & {\scalebox{0.78}{0.459}} & 
    {\scalebox{0.78}{0.419}} & {\scalebox{0.78}{0.438}} & 
    {\scalebox{0.78}{0.437}} & {\scalebox{0.78}{0.433}} & 
    {\scalebox{0.78}{0.452}} & {\scalebox{0.78}{0.462}} & 
    {\scalebox{0.78}{0.451}} & {\scalebox{0.78}{0.456}} \\ 
    
    & \scalebox{0.78}{336} & 
    \textbf{\scalebox{0.78}{0.418}} & {\scalebox{0.78}{0.429}} & 
    {\scalebox{0.78}{0.453}} & {\scalebox{0.78}{0.449}} & 
    {\scalebox{0.78}{0.513}} & {\scalebox{0.78}{0.510}} & 
    {\scalebox{0.78}{0.455}} & {\scalebox{0.78}{0.471}} & 
    {\scalebox{0.78}{0.484}} & {\scalebox{0.78}{0.465}} & 
    {\scalebox{0.78}{0.481}} & {\scalebox{0.78}{0.489}} & 
    {\scalebox{0.78}{0.486}} & {\scalebox{0.78}{0.489}} \\ 

    & \scalebox{0.78}{720} & 
    \textbf{\scalebox{0.78}{0.428}} & \textbf{\scalebox{0.78}{0.446}} & 
    {\scalebox{0.78}{0.447}} & {\scalebox{0.78}{0.458}} & 
    {\scalebox{0.78}{0.626}} & {\scalebox{0.78}{0.576}} & 
    {\scalebox{0.78}{0.456}} & {\scalebox{0.78}{0.481}} & 
    {\scalebox{0.78}{0.462}} & {\scalebox{0.78}{0.464}} & 
    {\scalebox{0.78}{0.471}} & {\scalebox{0.78}{0.488}} & 
    {\scalebox{0.78}{0.460}} & {\scalebox{0.78}{0.471}} \\ 

    \cline{2-16}

    & \scalebox{0.78}{avg} & 
    \textbf{\scalebox{0.78}{0.385}} & \textbf{\scalebox{0.78}{0.405}} & 
    {\scalebox{0.78}{0.420}} & {\scalebox{0.78}{0.430}} & 
    {\scalebox{0.78}{0.478}} & {\scalebox{0.78}{0.483}} & 
    {\scalebox{0.78}{0.416}} & {\scalebox{0.78}{0.444}} & 
    {\scalebox{0.78}{0.443}} & {\scalebox{0.78}{0.442}} & 
    {\scalebox{0.78}{0.444}} & {\scalebox{0.78}{0.463}} & 
    {\scalebox{0.78}{0.445}} & {\scalebox{0.78}{0.459}} \\ 

    \hline
    
    \multirow{5}{*}{\scalebox{0.9}{\shortstack{RiverFlow\\ $\downarrow$ \\Exchange}}}
    
    & \scalebox{0.78}{96} & 
    \textbf{\scalebox{0.78}{0.090}} & \textbf{\scalebox{0.78}{0.220}} & 
    {\scalebox{0.78}{0.131}} & {\scalebox{0.78}{0.271}} & 
    {\scalebox{0.78}{0.278}} & {\scalebox{0.78}{0.342}} & 
    {\scalebox{0.78}{0.098}} & {\scalebox{0.78}{0.232}} & 
    {\scalebox{0.78}{0.138}} & {\scalebox{0.78}{0.286}} & 
    {\scalebox{0.78}{0.520}} & {\scalebox{0.78}{0.583}} & 
    {\scalebox{0.78}{0.530}} & {\scalebox{0.78}{0.571}} \\ 
    
    & \scalebox{0.78}{192} & 
    \textbf{\scalebox{0.78}{0.198}} & \textbf{\scalebox{0.78}{0.331}} & 
    {\scalebox{0.78}{0.228}} & {\scalebox{0.78}{0.359}} & 
    {\scalebox{0.78}{0.435}} & {\scalebox{0.78}{0.438}} & 
    {\scalebox{0.78}{0.197}} & {\scalebox{0.78}{0.334}} & 
    {\scalebox{0.78}{0.245}} & {\scalebox{0.78}{0.372}} & 
    {\scalebox{0.78}{0.668}} & {\scalebox{0.78}{0.660}} & 
    {\scalebox{0.78}{0.527}} & {\scalebox{0.78}{0.587}} \\ 
    
    & \scalebox{0.78}{336} & 
    \textbf{\scalebox{0.78}{0.356}} & \textbf{\scalebox{0.78}{0.435}} & 
    {\scalebox{0.78}{0.400}} & {\scalebox{0.78}{0.477}} & 
    {\scalebox{0.78}{0.533}} & {\scalebox{0.78}{0.522}} & 
    {\scalebox{0.78}{0.357}} & {\scalebox{0.78}{0.444}} & 
    {\scalebox{0.78}{0.407}} & {\scalebox{0.78}{0.485}} & 
    {\scalebox{0.78}{0.898}} & {\scalebox{0.78}{0.760}} & 
    {\scalebox{0.78}{0.898}} & {\scalebox{0.78}{0.774}} \\ 

    & \scalebox{0.78}{720} & 
    \textbf{\scalebox{0.78}{0.879}} & \textbf{\scalebox{0.78}{0.711}} & 
    {\scalebox{0.78}{1.099}} & {\scalebox{0.78}{0.855}} & 
    {\scalebox{0.78}{1.092}} & {\scalebox{0.78}{0.846}} & 
    {\scalebox{0.78}{1.032}} & {\scalebox{0.78}{0.822}} & 
    {\scalebox{0.78}{1.195}} & {\scalebox{0.78}{0.888}} & 
    {\scalebox{0.78}{1.683}} & {\scalebox{0.78}{1.057}} & 
    {\scalebox{0.78}{1.426}} & {\scalebox{0.78}{1.022}} \\ 

    \cline{2-16}

    & \scalebox{0.78}{avg} & 
    \textbf{\scalebox{0.78}{0.381}} & \textbf{\scalebox{0.78}{0.424}} & 
    {\scalebox{0.78}{0.464}} & {\scalebox{0.78}{0.491}} & 
    {\scalebox{0.78}{0.585}} & {\scalebox{0.78}{0.537}} & 
    {\scalebox{0.78}{0.421}} & {\scalebox{0.78}{0.458}} & 
    {\scalebox{0.78}{0.497}} & {\scalebox{0.78}{0.508}} & 
    {\scalebox{0.78}{0.942}} & {\scalebox{0.78}{0.765}} & 
    {\scalebox{0.78}{0.845}} & {\scalebox{0.78}{0.739}} \\ 

    \hline
    
    \multirow{5}{*}{\scalebox{0.9}{\shortstack{Sunspot\\ $\downarrow$ \\Weather}}}
    
    & \scalebox{0.78}{96} & 
    \textbf{\scalebox{0.78}{0.181}} & \textbf{\scalebox{0.78}{0.235}} & 
    {\scalebox{0.78}{0.198}} & {\scalebox{0.78}{0.253}} & 
    {\scalebox{0.78}{0.200}} & {\scalebox{0.78}{0.268}} & 
    {\scalebox{0.78}{0.186}} & {\scalebox{0.78}{0.242}} & 
    {\scalebox{0.78}{0.252}} & {\scalebox{0.78}{0.281}} & 
    {\scalebox{0.78}{0.669}} & {\scalebox{0.78}{0.621}} & 
    {\scalebox{0.78}{0.589}} & {\scalebox{0.78}{0.581}} \\ 
    
    & \scalebox{0.78}{192} & 
    \textbf{\scalebox{0.78}{0.226}} & \textbf{\scalebox{0.78}{0.269}} & 
    {\scalebox{0.78}{0.241}} & {\scalebox{0.78}{0.283}} & 
    {\scalebox{0.78}{0.242}} & {\scalebox{0.78}{0.298}} & 
    {\scalebox{0.78}{0.235}} & {\scalebox{0.78}{0.279}} & 
    {\scalebox{0.78}{0.341}} & {\scalebox{0.78}{0.343}} & 
    {\scalebox{0.78}{0.700}} & {\scalebox{0.78}{0.632}} & 
    {\scalebox{0.78}{0.543}} & {\scalebox{0.78}{0.522}} \\ 
    
    & \scalebox{0.78}{336} & 
    \textbf{\scalebox{0.78}{0.275}} & \textbf{\scalebox{0.78}{0.301}} & 
    {\scalebox{0.78}{0.283}} & {\scalebox{0.78}{0.310}} & 
    {\scalebox{0.78}{0.282}} & {\scalebox{0.78}{0.323}} & 
    {\scalebox{0.78}{0.280}} & {\scalebox{0.78}{0.308}} & 
    {\scalebox{0.78}{0.302}} & {\scalebox{0.78}{0.321}} & 
    {\scalebox{0.78}{0.704}} & {\scalebox{0.78}{0.633}} & 
    {\scalebox{0.78}{0.467}} & {\scalebox{0.78}{0.461}} \\ 

    & \scalebox{0.78}{720} & 
    \textbf{\scalebox{0.78}{0.326}} & \textbf{\scalebox{0.78}{0.341}} & 
    {\scalebox{0.78}{0.333}} & {\scalebox{0.78}{0.343}} & 
    {\scalebox{0.78}{0.329}} & {\scalebox{0.78}{0.353}} & 
    {\scalebox{0.78}{0.352}} & {\scalebox{0.78}{0.358}} & 
    {\scalebox{0.78}{0.347}} & {\scalebox{0.78}{0.354}} & 
    {\scalebox{0.78}{0.746}} & {\scalebox{0.78}{0.651}} & 
    {\scalebox{0.78}{0.435}} & {\scalebox{0.78}{0.440}} \\ 

    \cline{2-16}

    & \scalebox{0.78}{avg} & 
    \textbf{\scalebox{0.78}{0.254}} & \textbf{\scalebox{0.78}{0.286}} & 
    {\scalebox{0.78}{0.264}} & {\scalebox{0.78}{0.297}} & 
    {\scalebox{0.78}{0.263}} & {\scalebox{0.78}{0.310}} & 
    {\scalebox{0.78}{0.263}} & {\scalebox{0.78}{0.297}} & 
    {\scalebox{0.78}{0.311}} & {\scalebox{0.78}{0.325}} & 
    {\scalebox{0.78}{0.705}} & {\scalebox{0.78}{0.634}} & 
    {\scalebox{0.78}{0.509}} & {\scalebox{0.78}{0.501}} \\ 

    \hline
    \bottomrule
  \end{tabular}
  \end{small}
}
\end{table*}
\begin{table*}[htpb]
  \caption{Comparison of the complete performance with diverse prediction lengths on \textbf{zero-shot forecasting} task, where $Source\rightarrow Traget$ indicates that the model is first pre-trained uniformly on all train sets from multiple \emph{SourceDomains}, subsequently, the model parameters are frozen and predicted on the test set of the \emph{TargetDomain}.}\label{tab:forecasting_zero_cross_multi}
  \vspace{5pt}
  \centering
  \resizebox{1\columnwidth}{!}{
  \begin{small}
  \renewcommand{\multirowsetup}{\centering}
  \setlength{\tabcolsep}{1.5pt}
  \renewcommand\arraystretch{1.2}
  \begin{tabular}{cc|cc|cccccc||cccc|cccc}
    \toprule
    \hline
    
    \multicolumn{2}{c|}{\rotatebox{0}{\scalebox{0.8}{\textbf{Scenarios}}}} & 
    \multicolumn{8}{c||}{\rotatebox{0}{\scalebox{0.8}{\textbf{Zero-shot}}}} &
    \multicolumn{4}{c|}{\rotatebox{0}{\scalebox{0.8}{\textbf{Full-data}}}} &
    \multicolumn{4}{c}{\rotatebox{0}{\scalebox{0.8}{\textbf{Few-shot}}}} \\

    \hline

    \multicolumn{2}{c|}{\rotatebox{0}{\scalebox{0.8}{\textbf{Models}}}} & 
    \multicolumn{8}{c||}{\rotatebox{0}{\scalebox{0.8}{\textbf{\myformer}}}} &
    \multicolumn{2}{c}{\rotatebox{0}{\scalebox{0.8}{\textbf{OneFitsAll}}}} & 
    \multicolumn{2}{c|}{\rotatebox{0}{\scalebox{0.8}{\textbf{PatchTST}}}} & 
    \multicolumn{2}{c}{\rotatebox{0}{\scalebox{0.8}{\textbf{OneFitsAll}}}} & 
    \multicolumn{2}{c}{\rotatebox{0}{\scalebox{0.8}{\textbf{PatchTST}}}} \\
    
    \hline
    \bottomrule
    
    \multicolumn{2}{c|}{\rotatebox{0}{\scalebox{0.8}{SourceData}}} & 
    \multicolumn{2}{c|}{\rotatebox{0}{\scalebox{0.5}{\textbf{ETT\{m2,h1,h2\}}}}} & 
    \multicolumn{2}{c}{\rotatebox{0}{\scalebox{0.7}{ETTm2}}} & 
    \multicolumn{2}{c}{\rotatebox{0}{\scalebox{0.7}{ETTh1}}} & 
    \multicolumn{2}{c||}{\rotatebox{0}{\scalebox{0.7}{ETTh2}}} & 
    \multicolumn{2}{c}{\rotatebox{0}{\scalebox{0.7}{ETTm1}}} & 
    \multicolumn{2}{c|}{\rotatebox{0}{\scalebox{0.7}{ETTm1}}} & 
    \multicolumn{2}{c}{\rotatebox{0}{\scalebox{0.7}{ETTm1}}} & 
    \multicolumn{2}{c}{\rotatebox{0}{\scalebox{0.7}{ETTm1}}} \\

    \cline{3-18} 
    
    \multicolumn{2}{c|}{\rotatebox{0}{\scalebox{0.8}{Metric}}} & 
    \scalebox{0.7}{MSE} & \scalebox{0.7}{MAE} & 
    \scalebox{0.7}{MSE} & \scalebox{0.7}{MAE} & 
    \scalebox{0.7}{MSE} & \scalebox{0.7}{MAE} & 
    \scalebox{0.7}{MSE} & \scalebox{0.7}{MAE} & 
    \scalebox{0.7}{MSE} & \scalebox{0.7}{MAE} & 
    \scalebox{0.7}{MSE} & \scalebox{0.7}{MAE} & 
    \scalebox{0.7}{MSE} & \scalebox{0.7}{MAE} & 
    \scalebox{0.7}{MSE} & \scalebox{0.7}{MAE} \\
    
    \hline

    \multirow{5}{*}{\scalebox{0.8}{\shortstack{SourceData\\ $\downarrow$ \\ETTm1}}}
    
    & \scalebox{0.78}{96} & 
    \textbf{\scalebox{0.78}{0.379}} & \textbf{\scalebox{0.78}{0.398}} & 
    {\scalebox{0.78}{0.400}} & {\scalebox{0.78}{0.407}} & 
    {\scalebox{0.78}{0.647}} & {\scalebox{0.78}{0.512}} & 
    {\scalebox{0.78}{0.733}} & {\scalebox{0.78}{0.533}} & 
    {\scalebox{0.78}{0.292}} & {\scalebox{0.78}{0.346}} & 
    \textbf{\scalebox{0.78}{0.290}} & \textbf{\scalebox{0.78}{0.342}} & 
    {\scalebox{0.78}{0.386}} & {\scalebox{0.78}{0.405}} & 
    {\scalebox{0.78}{0.399}} & {\scalebox{0.78}{0.414}} \\
    
    & \scalebox{0.78}{192} & 
    \textbf{\scalebox{0.78}{0.389}} & \textbf{\scalebox{0.78}{0.404}} & 
    {\scalebox{0.78}{0.415}} & {\scalebox{0.78}{0.417}} & 
    {\scalebox{0.78}{0.681}} & {\scalebox{0.78}{0.531}} & 
    {\scalebox{0.78}{0.740}} & {\scalebox{0.78}{0.546}} & 
    {\scalebox{0.78}{0.332}} & {\scalebox{0.78}{0.372}} & 
    \textbf{\scalebox{0.78}{0.332}} & \textbf{\scalebox{0.78}{0.369}} & 
    {\scalebox{0.78}{0.440}} & {\scalebox{0.78}{0.438}} & 
    {\scalebox{0.78}{0.441}} & {\scalebox{0.78}{0.436}} \\
    
    & \scalebox{0.78}{336} & 
    \textbf{\scalebox{0.78}{0.414}} & \textbf{\scalebox{0.78}{0.415}} & 
    {\scalebox{0.78}{0.442}} & {\scalebox{0.78}{0.434}} & 
    {\scalebox{0.78}{0.673}} & {\scalebox{0.78}{0.535}} & 
    {\scalebox{0.78}{0.742}} & {\scalebox{0.78}{0.553}} & 
    {\scalebox{0.78}{0.366}} & {\scalebox{0.78}{0.394}} & 
    \textbf{\scalebox{0.78}{0.366}} & \textbf{\scalebox{0.78}{0.392}} & 
    {\scalebox{0.78}{0.485}} & {\scalebox{0.78}{0.459}} & 
    {\scalebox{0.78}{0.499}} & {\scalebox{0.78}{0.467}} \\

    & \scalebox{0.78}{720} & 
    \textbf{\scalebox{0.78}{0.461}} & \textbf{\scalebox{0.78}{0.446}} & 
    {\scalebox{0.78}{0.481}} & {\scalebox{0.78}{0.459}} & 
    {\scalebox{0.78}{0.728}} & {\scalebox{0.78}{0.543}} & 
    {\scalebox{0.78}{0.751}} & {\scalebox{0.78}{0.570}} & 
    {\scalebox{0.78}{0.417}} & {\scalebox{0.78}{0.421}} & 
    \textbf{\scalebox{0.78}{0.416}} & \textbf{\scalebox{0.78}{0.420}} & 
    {\scalebox{0.78}{0.577}} & {\scalebox{0.78}{0.499}} & 
    {\scalebox{0.78}{0.767}} & {\scalebox{0.78}{0.587}} \\

    \cline{2-18}

    & \scalebox{0.78}{avg} & 
    \textbf{\scalebox{0.78}{0.411}} & \textbf{\scalebox{0.78}{0.416}} & 
    {\scalebox{0.78}{0.434}} & {\scalebox{0.78}{0.429}} & 
    {\scalebox{0.78}{0.682}} & {\scalebox{0.78}{0.742}} & 
    {\scalebox{0.78}{0.742}} & {\scalebox{0.78}{0.802}} & 
    {\scalebox{0.78}{0.352}} & {\scalebox{0.78}{0.383}} & 
    \textbf{\scalebox{0.78}{0.351}} & \textbf{\scalebox{0.78}{0.380}} & 
    {\scalebox{0.78}{0.472}} & {\scalebox{0.78}{0.450}} & 
    {\scalebox{0.78}{0.526}} & {\scalebox{0.78}{0.476}} \\
    
    \hline
    \bottomrule

    \multicolumn{2}{c|}{\rotatebox{0}{\scalebox{0.8}{SourceData}}} & 
    \multicolumn{2}{c|}{\rotatebox{0}{\scalebox{0.5}{\textbf{ETT\{m1,h1,h2\}}}}} & 
    \multicolumn{2}{c}{\rotatebox{0}{\scalebox{0.7}{ETTm1}}} & 
    \multicolumn{2}{c}{\rotatebox{0}{\scalebox{0.7}{ETTh1}}} & 
    \multicolumn{2}{c||}{\rotatebox{0}{\scalebox{0.7}{ETTh2}}} & 
    \multicolumn{2}{c}{\rotatebox{0}{\scalebox{0.7}{ETTm2}}} & 
    \multicolumn{2}{c|}{\rotatebox{0}{\scalebox{0.7}{ETTm2}}} & 
    \multicolumn{2}{c}{\rotatebox{0}{\scalebox{0.7}{ETTm2}}} & 
    \multicolumn{2}{c}{\rotatebox{0}{\scalebox{0.7}{ETTm2}}} \\

    \cline{3-18} 
    
    \multicolumn{2}{c|}{\rotatebox{0}{\scalebox{0.8}{Metric}}} & 
    \scalebox{0.7}{MSE} & \scalebox{0.7}{MAE} & 
    \scalebox{0.7}{MSE} & \scalebox{0.7}{MAE} & 
    \scalebox{0.7}{MSE} & \scalebox{0.7}{MAE} & 
    \scalebox{0.7}{MSE} & \scalebox{0.7}{MAE} & 
    \scalebox{0.7}{MSE} & \scalebox{0.7}{MAE} & 
    \scalebox{0.7}{MSE} & \scalebox{0.7}{MAE} & 
    \scalebox{0.7}{MSE} & \scalebox{0.7}{MAE} & 
    \scalebox{0.7}{MAE} & \scalebox{0.7}{MSE} \\
    
    \hline

    \multirow{5}{*}{\scalebox{0.8}{\shortstack{SourceData\\ $\downarrow$ \\ETTm2}}}
    
    & \scalebox{0.78}{96} & 
    \textbf{\scalebox{0.78}{0.177}} & \textbf{\scalebox{0.78}{0.258}} & 
    {\scalebox{0.78}{0.191}} & {\scalebox{0.78}{0.263}} & 
    {\scalebox{0.78}{0.214}} & {\scalebox{0.78}{0.299}} & 
    {\scalebox{0.78}{0.215}} & {\scalebox{0.78}{0.301}} & 
    {\scalebox{0.78}{0.173}} & {\scalebox{0.78}{0.262}} & 
    \textbf{\scalebox{0.78}{0.165}} & \textbf{\scalebox{0.78}{0.255}} & 
    {\scalebox{0.78}{0.199}} & {\scalebox{0.78}{0.280}} & 
    {\scalebox{0.78}{0.206}} & {\scalebox{0.78}{0.288}} \\
    
    & \scalebox{0.78}{192} & 
    \textbf{\scalebox{0.78}{0.242}} & \textbf{\scalebox{0.78}{0.296}} & 
    {\scalebox{0.78}{0.254}} & {\scalebox{0.78}{0.303}} & 
    {\scalebox{0.78}{0.278}} & {\scalebox{0.78}{0.338}} & 
    {\scalebox{0.78}{0.280}} & {\scalebox{0.78}{0.341}} & 
    {\scalebox{0.78}{0.229}} & {\scalebox{0.78}{0.301}} & 
    \textbf{\scalebox{0.78}{0.220}} & \textbf{\scalebox{0.78}{0.292}} & 
    {\scalebox{0.78}{0.256}} & {\scalebox{0.78}{0.316}} & 
    {\scalebox{0.78}{0.264}} & {\scalebox{0.78}{0.324}} \\
    
    & \scalebox{0.78}{336} & 
    \textbf{\scalebox{0.78}{0.300}} & \textbf{\scalebox{0.78}{0.332}} & 
    {\scalebox{0.78}{0.312}} & {\scalebox{0.78}{0.340}} & 
    {\scalebox{0.78}{0.331}} & {\scalebox{0.78}{0.368}} & 
    {\scalebox{0.78}{0.338}} & {\scalebox{0.78}{0.376}} & 
    {\scalebox{0.78}{0.286}} & {\scalebox{0.78}{0.341}} & 
    \textbf{\scalebox{0.78}{0.274}} & \textbf{\scalebox{0.78}{0.329}} & 
    {\scalebox{0.78}{0.318}} & {\scalebox{0.78}{0.353}} & 
    {\scalebox{0.78}{0.334}} & {\scalebox{0.78}{0.367}} \\

    & \scalebox{0.78}{720} & 
    \textbf{\scalebox{0.78}{0.400}} & \textbf{\scalebox{0.78}{0.385}} & 
    {\scalebox{0.78}{0.413}} & {\scalebox{0.78}{0.399}} & 
    {\scalebox{0.78}{0.439}} & {\scalebox{0.78}{0.430}} & 
    {\scalebox{0.78}{0.432}} & {\scalebox{0.78}{0.422}} & 
    {\scalebox{0.78}{0.378}} & {\scalebox{0.78}{0.401}} & 
    \textbf{\scalebox{0.78}{0.362}} & \textbf{\scalebox{0.78}{0.385}} & 
    {\scalebox{0.78}{0.460}} & {\scalebox{0.78}{0.436}} & 
    {\scalebox{0.78}{0.454}} & {\scalebox{0.78}{0.432}} \\

    \cline{2-18}

    & \scalebox{0.78}{avg} & 
    \textbf{\scalebox{0.78}{0.280}} & \textbf{\scalebox{0.78}{0.315}} & 
    {\scalebox{0.78}{0.292}} & {\scalebox{0.78}{0.326}} & 
    {\scalebox{0.78}{0.316}} & {\scalebox{0.78}{0.359}} & 
    {\scalebox{0.78}{0.316}} & {\scalebox{0.78}{0.360}} & 
    {\scalebox{0.78}{0.266}} & {\scalebox{0.78}{0.326}} & 
    \textbf{\scalebox{0.78}{0.255}} & \textbf{\scalebox{0.78}{0.315}} & 
    {\scalebox{0.78}{0.308}} & {\scalebox{0.78}{0.346}} & 
    {\scalebox{0.78}{0.314}} & {\scalebox{0.78}{0.352}} \\
    
    \hline
    \bottomrule

    \multicolumn{2}{c|}{\rotatebox{0}{\scalebox{0.8}{SourceData}}} & 
    \multicolumn{2}{c|}{\rotatebox{0}{\scalebox{0.5}{\textbf{ETT\{m1,m2,h2\}}}}} & 
    \multicolumn{2}{c}{\rotatebox{0}{\scalebox{0.7}{ETTm1}}} & 
    \multicolumn{2}{c}{\rotatebox{0}{\scalebox{0.7}{ETTm2}}} & 
    \multicolumn{2}{c||}{\rotatebox{0}{\scalebox{0.7}{ETTh2}}} & 
    \multicolumn{2}{c}{\rotatebox{0}{\scalebox{0.7}{ETTh1}}} & 
    \multicolumn{2}{c|}{\rotatebox{0}{\scalebox{0.7}{ETTh1}}} & 
    \multicolumn{2}{c}{\rotatebox{0}{\scalebox{0.7}{ETTh1}}} & 
    \multicolumn{2}{c}{\rotatebox{0}{\scalebox{0.7}{ETTh1}}} \\

    \cline{3-18} 
    
    \multicolumn{2}{c|}{\rotatebox{0}{\scalebox{0.8}{Metric}}} & 
    \scalebox{0.7}{MSE} & \scalebox{0.7}{MAE} & 
    \scalebox{0.7}{MSE} & \scalebox{0.7}{MAE} & 
    \scalebox{0.7}{MSE} & \scalebox{0.7}{MAE} & 
    \scalebox{0.7}{MSE} & \scalebox{0.7}{MAE} & 
    \scalebox{0.7}{MSE} & \scalebox{0.7}{MAE} & 
    \scalebox{0.7}{MSE} & \scalebox{0.7}{MAE} & 
    \scalebox{0.7}{MSE} & \scalebox{0.7}{MAE} & 
    \scalebox{0.7}{MAE} & \scalebox{0.7}{MSE} \\
    
    \hline

    \multirow{5}{*}{\scalebox{0.8}{\shortstack{SourceData\\ $\downarrow$ \\ETTh1}}}
    
    & \scalebox{0.78}{96} & 
    \textbf{\scalebox{0.78}{0.438}} & \textbf{\scalebox{0.78}{0.419}} & 
    {\scalebox{0.78}{0.466}} & {\scalebox{0.78}{0.460}} & 
    {\scalebox{0.78}{0.488}} & {\scalebox{0.78}{0.469}} & 
    {\scalebox{0.78}{0.527}} & {\scalebox{0.78}{0.480}} & 
    {\scalebox{0.78}{0.376}} & \textbf{\scalebox{0.78}{0.397}} & 
    \textbf{\scalebox{0.78}{0.370}} & {\scalebox{0.78}{0.399}} & 
    {\scalebox{0.78}{0.543}} & {\scalebox{0.78}{0.506}} & 
    {\scalebox{0.78}{0.557}} & {\scalebox{0.78}{0.519}} \\
    
    & \scalebox{0.78}{192} & 
    \textbf{\scalebox{0.78}{0.449}} & \textbf{\scalebox{0.78}{0.439}} & 
    {\scalebox{0.78}{0.505}} & {\scalebox{0.78}{0.483}} & 
    {\scalebox{0.78}{0.532}} & {\scalebox{0.78}{0.492}} & 
    {\scalebox{0.78}{0.587}} & {\scalebox{0.78}{0.521}} & 
    {\scalebox{0.78}{0.416}} & \textbf{\scalebox{0.78}{0.418}} & 
    \textbf{\scalebox{0.78}{0.413}} & {\scalebox{0.78}{0.421}} & 
    {\scalebox{0.78}{0.748}} & {\scalebox{0.78}{0.580}} & 
    {\scalebox{0.78}{0.711}} & {\scalebox{0.78}{0.570}} \\
    
    & \scalebox{0.78}{336} & 
    \textbf{\scalebox{0.78}{0.471}} & \textbf{\scalebox{0.78}{0.467}} & 
    {\scalebox{0.78}{0.535}} & {\scalebox{0.78}{0.501}} & 
    {\scalebox{0.78}{0.564}} & {\scalebox{0.78}{0.511}} & 
    {\scalebox{0.78}{0.584}} & {\scalebox{0.78}{0.527}} & 
    {\scalebox{0.78}{0.442}} & \textbf{\scalebox{0.78}{0.433}} & 
    \textbf{\scalebox{0.78}{0.422}} & {\scalebox{0.78}{0.436}} & 
    {\scalebox{0.78}{0.754}} & {\scalebox{0.78}{0.595}} & 
    {\scalebox{0.78}{0.816}} & {\scalebox{0.78}{0.619}} \\

    & \scalebox{0.78}{720} & 
    \textbf{\scalebox{0.78}{0.484}} & \textbf{\scalebox{0.78}{0.473}} & 
    {\scalebox{0.78}{0.543}} & {\scalebox{0.78}{0.529}} & 
    {\scalebox{0.78}{0.585}} & {\scalebox{0.78}{0.527}} & 
    {\scalebox{0.78}{0.612}} & {\scalebox{0.78}{0.557}} & 
    {\scalebox{0.78}{0.477}} & \textbf{\scalebox{0.78}{0.456}} & 
    \textbf{\scalebox{0.78}{0.447}} & {\scalebox{0.78}{0.466}} & 
    {\scalebox{0.78}{0.725}} & {\scalebox{0.78}{0.591}} & 
    {\scalebox{0.78}{0.762}} & {\scalebox{0.78}{0.610}} \\

    \cline{2-18}

    & \scalebox{0.78}{avg} & 
    \textbf{\scalebox{0.78}{0.461}} & \textbf{\scalebox{0.78}{0.449}} & 
    {\scalebox{0.78}{0.512}} & {\scalebox{0.78}{0.493}} & 
    {\scalebox{0.78}{0.536}} & {\scalebox{0.78}{0.499}} & 
    {\scalebox{0.78}{0.578}} & {\scalebox{0.78}{0.521}} & 
    {\scalebox{0.78}{0.427}} & \textbf{\scalebox{0.78}{0.426}} & 
    \textbf{\scalebox{0.78}{0.413}} & {\scalebox{0.78}{0.430}} & 
    {\scalebox{0.78}{0.693}} & {\scalebox{0.78}{0.568}} & 
    {\scalebox{0.78}{0.712}} & {\scalebox{0.78}{0.580}} \\
    
    \hline
    \bottomrule
    
    \multicolumn{2}{c|}{\rotatebox{0}{\scalebox{0.8}{SourceData}}} & 
    \multicolumn{2}{c|}{\rotatebox{0}{\scalebox{0.5}{\textbf{ETT\{m1,m2,h1\}}}}} & 
    \multicolumn{2}{c}{\rotatebox{0}{\scalebox{0.7}{ETTm1}}} & 
    \multicolumn{2}{c}{\rotatebox{0}{\scalebox{0.7}{ETTm2}}} & 
    \multicolumn{2}{c||}{\rotatebox{0}{\scalebox{0.7}{ETTh1}}} & 
    \multicolumn{2}{c}{\rotatebox{0}{\scalebox{0.7}{ETTh2}}} & 
    \multicolumn{2}{c|}{\rotatebox{0}{\scalebox{0.7}{ETTh2}}} & 
    \multicolumn{2}{c}{\rotatebox{0}{\scalebox{0.7}{ETTh2}}} & 
    \multicolumn{2}{c}{\rotatebox{0}{\scalebox{0.7}{ETTh2}}} \\

    \cline{3-18} 
    
    \multicolumn{2}{c|}{\rotatebox{0}{\scalebox{0.8}{Metric}}} & 
    \scalebox{0.7}{MSE} & \scalebox{0.7}{MAE} & 
    \scalebox{0.7}{MSE} & \scalebox{0.7}{MAE} & 
    \scalebox{0.7}{MSE} & \scalebox{0.7}{MAE} & 
    \scalebox{0.7}{MSE} & \scalebox{0.7}{MAE} & 
    \scalebox{0.7}{MSE} & \scalebox{0.7}{MAE} & 
    \scalebox{0.7}{MSE} & \scalebox{0.7}{MAE} & 
    \scalebox{0.7}{MSE} & \scalebox{0.7}{MAE} & 
    \scalebox{0.7}{MAE} & \scalebox{0.7}{MSE} \\
    
    \hline

    \multirow{5}{*}{\scalebox{0.8}{\shortstack{SourceData\\ $\downarrow$ \\ETTh2}}}
    
    & \scalebox{0.78}{96} & 
    \textbf{\scalebox{0.78}{0.294}} & \textbf{\scalebox{0.78}{0.357}} & 
    {\scalebox{0.78}{0.344}} & {\scalebox{0.78}{0.380}} & 
    {\scalebox{0.78}{0.326}} & {\scalebox{0.78}{0.370}} & 
    {\scalebox{0.78}{0.297}} & {\scalebox{0.78}{0.343}} & 
    {\scalebox{0.78}{0.285}} & {\scalebox{0.78}{0.342}} & 
    \textbf{\scalebox{0.78}{0.274}} & \textbf{\scalebox{0.78}{0.336}} & 
    {\scalebox{0.78}{0.376}} & {\scalebox{0.78}{0.421}} & 
    {\scalebox{0.78}{0.401}} & {\scalebox{0.78}{0.421}} \\
    
    & \scalebox{0.78}{192} & 
    \textbf{\scalebox{0.78}{0.375}} & \textbf{\scalebox{0.78}{0.397}} & 
    {\scalebox{0.78}{0.430}} & {\scalebox{0.78}{0.424}} & 
    {\scalebox{0.78}{0.411}} & {\scalebox{0.78}{0.415}} & 
    {\scalebox{0.78}{0.395}} & {\scalebox{0.78}{0.403}} & 
    {\scalebox{0.78}{0.354}} & {\scalebox{0.78}{0.389}} & 
    \textbf{\scalebox{0.78}{0.339}} & \textbf{\scalebox{0.78}{0.379}} & 
    {\scalebox{0.78}{0.418}} & {\scalebox{0.78}{0.441}} & 
    {\scalebox{0.78}{0.452}} & {\scalebox{0.78}{0.455}} \\
    
    & \scalebox{0.78}{336} & 
    \textbf{\scalebox{0.78}{0.407}} & \textbf{\scalebox{0.78}{0.407}} & 
    {\scalebox{0.78}{0.460}} & {\scalebox{0.78}{0.452}} & 
    {\scalebox{0.78}{0.450}} & {\scalebox{0.78}{0.448}} & 
    {\scalebox{0.78}{0.418}} & {\scalebox{0.78}{0.429}} & 
    {\scalebox{0.78}{0.373}} & {\scalebox{0.78}{0.407}} & 
    \textbf{\scalebox{0.78}{0.329}} & \textbf{\scalebox{0.78}{0.380}} & 
    {\scalebox{0.78}{0.408}} & {\scalebox{0.78}{0.439}} & 
    {\scalebox{0.78}{0.464}} & {\scalebox{0.78}{0.469}} \\

    & \scalebox{0.78}{720} & 
    \textbf{\scalebox{0.78}{0.408}} & \textbf{\scalebox{0.78}{0.435}} & 
    {\scalebox{0.78}{0.487}} & {\scalebox{0.78}{0.477}} & 
    {\scalebox{0.78}{0.446}} & {\scalebox{0.78}{0.455}} & 
    {\scalebox{0.78}{0.428}} & {\scalebox{0.78}{0.446}} & 
    {\scalebox{0.78}{0.406}} & {\scalebox{0.78}{0.441}} & 
    \textbf{\scalebox{0.78}{0.379}} & \textbf{\scalebox{0.78}{0.422}} & 
    {\scalebox{0.78}{0.449}} & {\scalebox{0.78}{0.464}} & 
    {\scalebox{0.78}{0.477}} & {\scalebox{0.78}{0.480}} \\
    
    \cline{2-18}
    
    & \scalebox{0.78}{avg} & 
    \textbf{\scalebox{0.78}{0.371}} & \textbf{\scalebox{0.78}{0.384}} & 
    {\scalebox{0.78}{0.430}} & {\scalebox{0.78}{0.433}} & 
    {\scalebox{0.78}{0.408}} & {\scalebox{0.78}{0.422}} & 
    {\scalebox{0.78}{0.385}} & {\scalebox{0.78}{0.405}} & 
    {\scalebox{0.78}{0.354}} & {\scalebox{0.78}{0.394}} & 
    \textbf{\scalebox{0.78}{0.330}} & \textbf{\scalebox{0.78}{0.379}} & 
    {\scalebox{0.78}{0.413}} & {\scalebox{0.78}{0.441}} & 
    {\scalebox{0.78}{0.449}} & {\scalebox{0.78}{0.456}} \\
    
    \hline
    \bottomrule

  \end{tabular}
  \end{small}
}
\end{table*}

In this section, to verify that the proposed \myformer has the potential to be a foundational model in time series, we first conduct sufficient experiments under the condition of \textbf{full-data}, as shown in Table \ref{tab:forecasting_full}.

Besides, to demonstrate that the model has excellent data efficiency and powerful cross-domain adaptability, the forecasting performance under the \textbf{few-shot} and \textbf{single-domain zero-shot} conditions are shown in Table \ref{tab:forecasting_few_5}-\ref{tab:forecasting_few_10} and Table \ref{tab:forecasting_zero_cross_single}, respectively.

Notably, to show that the proposed \textbf{\emph{wave as token}} strategy can establish underlying connections between diverse data domains and thus activate the generalization capability of the backbone network, Table \ref{tab:forecasting_zero_cross_multi} compares the performance of \textbf{single-domain adapting} and \textbf{multi-domain adapting} on the zero-shot task. The results show that the proposed strategy can alleviate the negative migration phenomenon in the time series domain.

\subsection{Imputation}\label{appendix:exp_2}
\begin{table*}[htbp]
  \caption{Comparison of the complete performance with diverse mask ratios ($\{12.5\%,25\%,37.5\%,50\%\}$) on \textbf{full-data imputation} task.}\label{tab:imputation_full}
  \vskip 0.05in
  \centering
  \resizebox{1\columnwidth}{!}{
  \begin{threeparttable}
  \begin{small}
  \renewcommand{\multirowsetup}{\centering}
  \setlength{\tabcolsep}{0.8pt}

    \end{small}
  \end{threeparttable}
   }
\end{table*}
\begin{table*}[htpb]
  \caption{Comparison of the complete performance with diverse mask ratios ($\{12.5\%,25\%,37.5\%,50\%\}$) on \textbf{zero-shot imputation} task. Where $Source\rightarrow Traget$ indicates that the model is first pre-trained on the single train set of the \emph{SourceDomain}, subsequently, the model parameters are frozen and predicted on the test set of the \emph{TargetDomain}.}\label{tab:imputation_zero_cross_single}
  \label{tab:full_data_limited}
  \vspace{5pt}
  \centering
  \resizebox{0.75\columnwidth}{!}{
  \begin{small}
  \renewcommand{\multirowsetup}{\centering}
  \setlength{\tabcolsep}{1.5pt}
  \renewcommand\arraystretch{1.2}
  \begin{tabular}{cccc||cccccccccccc}
    \toprule
    \hline
    
    \multicolumn{2}{c}{\multirow{2}{*}{\scalebox{1.0}{Scenarios}}} & 
    \multicolumn{2}{c}{\rotatebox{0}{\scalebox{0.8}{\textbf{\myformer}}}} & 
    \multicolumn{2}{c}{\rotatebox{0}{\scalebox{0.8}{GPT4TS}}} & 
    \multicolumn{2}{c}{\rotatebox{0}{\scalebox{0.8}{DLinear}}} & 
    \multicolumn{2}{c}{\rotatebox{0}{\scalebox{0.8}{PatchTST}}} & 
    \multicolumn{2}{c}{\rotatebox{0}{\scalebox{0.8}{TimesNet}}} & 
    \multicolumn{2}{c}{\rotatebox{0}{\scalebox{0.8}{FEDformer}}} & 
    \multicolumn{2}{c}{\rotatebox{0}{\scalebox{0.8}{Autoformer}}} \\

    
    \cline{3-16} &  & 
    \scalebox{0.78}{MSE} & \scalebox{0.78}{MAE} & 
    \scalebox{0.78}{MSE} & \scalebox{0.78}{MAE} & 
    \scalebox{0.78}{MSE} & \scalebox{0.78}{MAE} & 
    \scalebox{0.78}{MSE} & \scalebox{0.78}{MAE} & 
    \scalebox{0.78}{MSE} & \scalebox{0.78}{MAE} & 
    \scalebox{0.78}{MSE} & \scalebox{0.78}{MAE} & 
    \scalebox{0.78}{MSE} & \scalebox{0.78}{MAE} \\
    \hline
    
    \multirow{5}{*}{\scalebox{0.9}{\shortstack{ETTm2\\ $\downarrow$ \\ETTm1}}}
    
    & \scalebox{0.78}{96} & 
    \textbf{\scalebox{0.78}{0.043}} & \textbf{\scalebox{0.78}{0.134}} & 
    {\scalebox{0.78}{0.759}} & {\scalebox{0.78}{0.546}} & 
    {\scalebox{0.78}{0.118}} & {\scalebox{0.78}{0.231}} & 
    {\scalebox{0.78}{0.080}} & {\scalebox{0.78}{0.176}} & 
    {\scalebox{0.78}{0.092}} & {\scalebox{0.78}{0.180}} & 
    {\scalebox{0.78}{0.666}} & {\scalebox{0.78}{0.607}} & 
    {\scalebox{0.78}{0.367}} & {\scalebox{0.78}{0.419}} \\ 
    
    & \scalebox{0.78}{192} & 
    \textbf{\scalebox{0.78}{0.046}} & \textbf{\scalebox{0.78}{0.139}} & 
    {\scalebox{0.78}{0.767}} & {\scalebox{0.78}{0.549}} & 
    {\scalebox{0.78}{0.163}} & {\scalebox{0.78}{0.270}} & 
    {\scalebox{0.78}{0.094}} & {\scalebox{0.78}{0.191}} & 
    {\scalebox{0.78}{0.110}} & {\scalebox{0.78}{0.199}} & 
    {\scalebox{0.78}{0.721}} & {\scalebox{0.78}{0.638}} & 
    {\scalebox{0.78}{0.525}} & {\scalebox{0.78}{0.515}} \\ 
    
    & \scalebox{0.78}{336} & 
    \textbf{\scalebox{0.78}{0.050}} & \textbf{\scalebox{0.78}{0.145}} & 
    {\scalebox{0.78}{0.770}} & {\scalebox{0.78}{0.550}} & 
    {\scalebox{0.78}{0.220}} & {\scalebox{0.78}{0.313}} & 
    {\scalebox{0.78}{0.109}} & {\scalebox{0.78}{0.196}} & 
    {\scalebox{0.78}{0.125}} & {\scalebox{0.78}{0.212}} & 
    {\scalebox{0.78}{0.790}} & {\scalebox{0.78}{0.671}} & 
    {\scalebox{0.78}{0.528}} & {\scalebox{0.78}{0.517}} \\ 

    & \scalebox{0.78}{720} & 
    \textbf{\scalebox{0.78}{0.061}} & \textbf{\scalebox{0.78}{0.159}} & 
    {\scalebox{0.78}{0.772}} & {\scalebox{0.78}{0.551}} & 
    {\scalebox{0.78}{0.309}} & {\scalebox{0.78}{0.368}} & 
    {\scalebox{0.78}{0.116}} & {\scalebox{0.78}{0.202}} & 
    {\scalebox{0.78}{0.146}} & {\scalebox{0.78}{0.228}} & 
    {\scalebox{0.78}{0.871}} & {\scalebox{0.78}{0.706}} & 
    {\scalebox{0.78}{0.607}} & {\scalebox{0.78}{0.553}} \\ 

    \cline{2-16}

    & \scalebox{0.78}{avg} & 
    \textbf{\scalebox{0.78}{0.050}} & \textbf{\scalebox{0.78}{0.144}} & 
    {\scalebox{0.78}{0.767}} & {\scalebox{0.78}{0.549}} & 
    {\scalebox{0.78}{0.203}} & {\scalebox{0.78}{0.295}} & 
    {\scalebox{0.78}{0.099}} & {\scalebox{0.78}{0.191}} & 
    {\scalebox{0.78}{0.118}} & {\scalebox{0.78}{0.205}} & 
    {\scalebox{0.78}{0.762}} & {\scalebox{0.78}{0.655}} & 
    {\scalebox{0.78}{0.507}} & {\scalebox{0.78}{0.501}} \\ 

    \hline
    
    \multirow{5}{*}{\scalebox{0.9}{\shortstack{ETTm1\\ $\downarrow$ \\ETTm2}}}
    
    & \scalebox{0.78}{96} & 
    \textbf{\scalebox{0.78}{0.026}} & \textbf{\scalebox{0.78}{0.094}} & 
    {\scalebox{0.78}{0.121}} & {\scalebox{0.78}{0.236}} & 
    {\scalebox{0.78}{0.067}} & {\scalebox{0.78}{0.173}} & 
    {\scalebox{0.78}{0.055}} & {\scalebox{0.78}{0.142}} & 
    {\scalebox{0.78}{0.087}} & {\scalebox{0.78}{0.206}} & 
    {\scalebox{0.78}{1.676}} & {\scalebox{0.78}{0.989}} & 
    {\scalebox{0.78}{0.927}} & {\scalebox{0.78}{0.708}} \\ 
    
    & \scalebox{0.78}{192} & 
    \textbf{\scalebox{0.78}{0.028}} & \textbf{\scalebox{0.78}{0.095}} & 
    {\scalebox{0.78}{0.152}} & {\scalebox{0.78}{0.262}} & 
    {\scalebox{0.78}{0.100}} & {\scalebox{0.78}{0.211}} & 
    {\scalebox{0.78}{0.056}} & {\scalebox{0.78}{0.145}} & 
    {\scalebox{0.78}{0.091}} & {\scalebox{0.78}{0.213}} & 
    {\scalebox{0.78}{2.019}} & {\scalebox{0.78}{1.086}} & 
    {\scalebox{0.78}{1.148}} & {\scalebox{0.78}{0.782}} \\ 
    
    & \scalebox{0.78}{336} & 
    \textbf{\scalebox{0.78}{0.030}} & \textbf{\scalebox{0.78}{0.100}} & 
    {\scalebox{0.78}{0.153}} & {\scalebox{0.78}{0.262}} & 
    {\scalebox{0.78}{0.131}} & {\scalebox{0.78}{0.242}} & 
    {\scalebox{0.78}{0.059}} & {\scalebox{0.78}{0.151}} & 
    {\scalebox{0.78}{0.094}} & {\scalebox{0.78}{0.219}} & 
    {\scalebox{0.78}{2.309}} & {\scalebox{0.78}{1.159}} & 
    {\scalebox{0.78}{1.484}} & {\scalebox{0.78}{0.890}} \\ 

    & \scalebox{0.78}{720} & 
    \textbf{\scalebox{0.78}{0.033}} & \textbf{\scalebox{0.78}{0.104}} & 
    {\scalebox{0.78}{0.154}} & {\scalebox{0.78}{0.263}} & 
    {\scalebox{0.78}{0.160}} & {\scalebox{0.78}{0.270}} & 
    {\scalebox{0.78}{0.063}} & {\scalebox{0.78}{0.157}} & 
    {\scalebox{0.78}{0.098}} & {\scalebox{0.78}{0.226}} & 
    {\scalebox{0.78}{2.558}} & {\scalebox{0.78}{1.219}} & 
    {\scalebox{0.78}{1.808}} & {\scalebox{0.78}{0.986}} \\ 

    \cline{2-16}

    & \scalebox{0.78}{avg} & 
    \textbf{\scalebox{0.78}{0.029}} & \textbf{\scalebox{0.78}{0.098}} & 
    {\scalebox{0.78}{0.145}} & {\scalebox{0.78}{0.256}} & 
    {\scalebox{0.78}{0.114}} & {\scalebox{0.78}{0.224}} & 
    {\scalebox{0.78}{0.058}} & {\scalebox{0.78}{0.149}} & 
    {\scalebox{0.78}{0.093}} & {\scalebox{0.78}{0.216}} & 
    {\scalebox{0.78}{2.140}} & {\scalebox{0.78}{1.113}} & 
    {\scalebox{0.78}{1.342}} & {\scalebox{0.78}{0.842}} \\ 

    \hline
    
    \multirow{5}{*}{\scalebox{0.9}{\shortstack{ETTm1\\ $\downarrow$ \\ETTh1}}}
    
    & \scalebox{0.78}{96} & 
    \textbf{\scalebox{0.78}{0.128}} & \textbf{\scalebox{0.78}{0.241}} & 
    {\scalebox{0.78}{0.851}} & {\scalebox{0.78}{0.601}} & 
    {\scalebox{0.78}{0.324}} & {\scalebox{0.78}{0.384}} & 
    {\scalebox{0.78}{0.291}} & {\scalebox{0.78}{0.355}} & 
    {\scalebox{0.78}{0.278}} & {\scalebox{0.78}{0.361}} & 
    {\scalebox{0.78}{1.101}} & {\scalebox{0.78}{0.795}} & 
    {\scalebox{0.78}{0.881}} & {\scalebox{0.78}{0.696}} \\ 
    
    & \scalebox{0.78}{192} & 
    \textbf{\scalebox{0.78}{0.148}} & \textbf{\scalebox{0.78}{0.257}} & 
    {\scalebox{0.78}{0.852}} & {\scalebox{0.78}{0.602}} & 
    {\scalebox{0.78}{0.365}} & {\scalebox{0.78}{0.407}} & 
    {\scalebox{0.78}{0.301}} & {\scalebox{0.78}{0.360}} & 
    {\scalebox{0.78}{0.305}} & {\scalebox{0.78}{0.381}} & 
    {\scalebox{0.78}{1.066}} & {\scalebox{0.78}{0.783}} & 
    {\scalebox{0.78}{0.970}} & {\scalebox{0.78}{0.729}} \\ 
    
    & \scalebox{0.78}{336} & 
    \textbf{\scalebox{0.78}{0.183}} & \textbf{\scalebox{0.78}{0.283}} & 
    {\scalebox{0.78}{0.856}} & {\scalebox{0.78}{0.602}} & 
    {\scalebox{0.78}{0.416}} & {\scalebox{0.78}{0.435}} & 
    {\scalebox{0.78}{0.317}} & {\scalebox{0.78}{0.368}} & 
    {\scalebox{0.78}{0.338}} & {\scalebox{0.78}{0.403}} & 
    {\scalebox{0.78}{1.065}} & {\scalebox{0.78}{0.784}} & 
    {\scalebox{0.78}{0.962}} & {\scalebox{0.78}{0.727}} \\ 

    & \scalebox{0.78}{720} & 
    \textbf{\scalebox{0.78}{0.244}} & \textbf{\scalebox{0.78}{0.317}} & 
    {\scalebox{0.78}{0.856}} & {\scalebox{0.78}{0.602}} & 
    {\scalebox{0.78}{0.485}} & {\scalebox{0.78}{0.469}} & 
    {\scalebox{0.78}{0.342}} & {\scalebox{0.78}{0.381}} & 
    {\scalebox{0.78}{0.387}} & {\scalebox{0.78}{0.434}} & 
    {\scalebox{0.78}{1.065}} & {\scalebox{0.78}{0.785}} & 
    {\scalebox{0.78}{1.011}} & {\scalebox{0.78}{0.750}} \\ 

    \cline{2-16}

    & \scalebox{0.78}{avg} & 
    \textbf{\scalebox{0.78}{0.176}} & \textbf{\scalebox{0.78}{0.274}} & 
    {\scalebox{0.78}{0.854}} & {\scalebox{0.78}{0.602}} & 
    {\scalebox{0.78}{0.397}} & {\scalebox{0.78}{0.424}} & 
    {\scalebox{0.78}{0.313}} & {\scalebox{0.78}{0.366}} & 
    {\scalebox{0.78}{0.327}} & {\scalebox{0.78}{0.395}} & 
    {\scalebox{0.78}{1.074}} & {\scalebox{0.78}{0.787}} & 
    {\scalebox{0.78}{0.956}} & {\scalebox{0.78}{0.725}} \\ 

    \hline
    
    \multirow{5}{*}{\scalebox{0.9}{\shortstack{ETTm1\\ $\downarrow$ \\ETTh2}}}
    
    & \scalebox{0.78}{96} & 
    \textbf{\scalebox{0.78}{0.059}} & \textbf{\scalebox{0.78}{0.152}} & 
    {\scalebox{0.78}{0.232}} & {\scalebox{0.78}{0.325}} & 
    {\scalebox{0.78}{0.118}} & {\scalebox{0.78}{0.238}} & 
    {\scalebox{0.78}{0.073}} & {\scalebox{0.78}{0.176}} & 
    {\scalebox{0.78}{0.098}} & {\scalebox{0.78}{0.223}} & 
    {\scalebox{0.78}{2.489}} & {\scalebox{0.78}{1.194}} & 
    {\scalebox{0.78}{2.161}} & {\scalebox{0.78}{1.127}} \\ 
    
    & \scalebox{0.78}{192} & 
    \textbf{\scalebox{0.78}{0.061}} & \textbf{\scalebox{0.78}{0.156}} & 
    {\scalebox{0.78}{0.249}} & {\scalebox{0.78}{0.336}} & 
    {\scalebox{0.78}{0.145}} & {\scalebox{0.78}{0.265}} & 
    {\scalebox{0.78}{0.076}} & {\scalebox{0.78}{0.180}} & 
    {\scalebox{0.78}{0.106}} & {\scalebox{0.78}{0.234}} & 
    {\scalebox{0.78}{2.767}} & {\scalebox{0.78}{1.260}} & 
    {\scalebox{0.78}{2.750}} & {\scalebox{0.78}{1.276}} \\ 
    
    & \scalebox{0.78}{336} & 
    \textbf{\scalebox{0.78}{0.065}} & \textbf{\scalebox{0.78}{0.161}} & 
    {\scalebox{0.78}{0.249}} & {\scalebox{0.78}{0.336}} & 
    {\scalebox{0.78}{0.174}} & {\scalebox{0.78}{0.291}} & 
    {\scalebox{0.78}{0.080}} & {\scalebox{0.78}{0.186}} & 
    {\scalebox{0.78}{0.113}} & {\scalebox{0.78}{0.242}} & 
    {\scalebox{0.78}{2.916}} & {\scalebox{0.78}{1.294}} & 
    {\scalebox{0.78}{2.407}} & {\scalebox{0.78}{1.189}} \\ 

    & \scalebox{0.78}{720} & 
    \textbf{\scalebox{0.78}{0.072}} & \textbf{\scalebox{0.78}{0.172}} & 
    {\scalebox{0.78}{0.249}} & {\scalebox{0.78}{0.336}} & 
    {\scalebox{0.78}{0.204}} & {\scalebox{0.78}{0.316}} & 
    {\scalebox{0.78}{0.087}} & {\scalebox{0.78}{0.193}} & 
    {\scalebox{0.78}{0.120}} & {\scalebox{0.78}{0.252}} & 
    {\scalebox{0.78}{3.014}} & {\scalebox{0.78}{1.316}} & 
    {\scalebox{0.78}{2.576}} & {\scalebox{0.78}{1.230}} \\ 

    \cline{2-16}

    & \scalebox{0.78}{avg} & 
    \textbf{\scalebox{0.78}{0.064}} & \textbf{\scalebox{0.78}{0.160}} & 
    {\scalebox{0.78}{0.245}} & {\scalebox{0.78}{0.333}} & 
    {\scalebox{0.78}{0.160}} & {\scalebox{0.78}{0.277}} & 
    {\scalebox{0.78}{0.079}} & {\scalebox{0.78}{0.184}} & 
    {\scalebox{0.78}{0.109}} & {\scalebox{0.78}{0.238}} & 
    {\scalebox{0.78}{2.796}} & {\scalebox{0.78}{1.266}} & 
    {\scalebox{0.78}{2.473}} & {\scalebox{0.78}{1.206}} \\ 

    \hline
    
    \multirow{5}{*}{\scalebox{0.9}{\shortstack{ETTm1\\ $\downarrow$ \\Exchange}}}
    
    & \scalebox{0.78}{96} & 
    \textbf{\scalebox{0.78}{0.003}} & \textbf{\scalebox{0.78}{0.029}} & 
    {\scalebox{0.78}{0.027}} & {\scalebox{0.78}{0.117}} & 
    {\scalebox{0.78}{0.086}} & {\scalebox{0.78}{0.223}} & 
    {\scalebox{0.78}{0.005}} & {\scalebox{0.78}{0.038}} & 
    {\scalebox{0.78}{0.045}} & {\scalebox{0.78}{0.149}} & 
    {\scalebox{0.78}{3.126}} & {\scalebox{0.78}{1.443}} & 
    {\scalebox{0.78}{2.672}} & {\scalebox{0.78}{1.336}} \\ 
    
    & \scalebox{0.78}{192} & 
    \textbf{\scalebox{0.78}{0.003}} & \textbf{\scalebox{0.78}{0.029}} & 
    {\scalebox{0.78}{0.027}} & {\scalebox{0.78}{0.117}} & 
    {\scalebox{0.78}{0.217}} & {\scalebox{0.78}{0.361}} & 
    {\scalebox{0.78}{0.005}} & {\scalebox{0.78}{0.042}} & 
    {\scalebox{0.78}{0.047}} & {\scalebox{0.78}{0.153}} & 
    {\scalebox{0.78}{3.114}} & {\scalebox{0.78}{1.437}} & 
    {\scalebox{0.78}{3.005}} & {\scalebox{0.78}{1.426}} \\ 
    
    & \scalebox{0.78}{336} & 
    \textbf{\scalebox{0.78}{0.003}} & \textbf{\scalebox{0.78}{0.032}} & 
    {\scalebox{0.78}{0.027}} & {\scalebox{0.78}{0.117}} & 
    {\scalebox{0.78}{0.425}} & {\scalebox{0.78}{0.508}} & 
    {\scalebox{0.78}{0.006}} & {\scalebox{0.78}{0.046}} & 
    {\scalebox{0.78}{0.046}} & {\scalebox{0.78}{0.151}} & 
    {\scalebox{0.78}{3.096}} & {\scalebox{0.78}{1.438}} & 
    {\scalebox{0.78}{2.968}} & {\scalebox{0.78}{1.376}} \\ 

    & \scalebox{0.78}{720} & 
    \textbf{\scalebox{0.78}{0.004}} & \textbf{\scalebox{0.78}{0.033}} & 
    {\scalebox{0.78}{0.027}} & {\scalebox{0.78}{0.117}} & 
    {\scalebox{0.78}{0.705}} & {\scalebox{0.78}{0.656}} & 
    {\scalebox{0.78}{0.007}} & {\scalebox{0.78}{0.051}} & 
    {\scalebox{0.78}{0.044}} & {\scalebox{0.78}{0.149}} & 
    {\scalebox{0.78}{3.092}} & {\scalebox{0.78}{1.441}} & 
    {\scalebox{0.78}{2.973}} & {\scalebox{0.78}{1.391}} \\ 

    \cline{2-16}

    & \scalebox{0.78}{avg} & 
    \textbf{\scalebox{0.78}{0.003}} & \textbf{\scalebox{0.78}{0.031}} & 
    {\scalebox{0.78}{0.027}} & {\scalebox{0.78}{0.117}} & 
    {\scalebox{0.78}{0.358}} & {\scalebox{0.78}{0.437}} & 
    {\scalebox{0.78}{0.006}} & {\scalebox{0.78}{0.044}} & 
    {\scalebox{0.78}{0.045}} & {\scalebox{0.78}{0.150}} & 
    {\scalebox{0.78}{3.107}} & {\scalebox{0.78}{1.440}} & 
    {\scalebox{0.78}{2.904}} & {\scalebox{0.78}{1.382}} \\ 

    \hline
    
    \multirow{5}{*}{\scalebox{0.9}{\shortstack{ETTm1\\ $\downarrow$ \\Weather}}}
    
    & \scalebox{0.78}{96} & 
    \textbf{\scalebox{0.78}{0.027}} & \textbf{\scalebox{0.78}{0.041}} & 
    {\scalebox{0.78}{0.102}} & {\scalebox{0.78}{0.159}} & 
    {\scalebox{0.78}{0.119}} & {\scalebox{0.78}{0.207}} & 
    {\scalebox{0.78}{0.060}} & {\scalebox{0.78}{0.089}} & 
    {\scalebox{0.78}{0.132}} & {\scalebox{0.78}{0.188}} & 
    {\scalebox{0.78}{0.991}} & {\scalebox{0.78}{0.777}} & 
    {\scalebox{0.78}{1.002}} & {\scalebox{0.78}{0.792}} \\ 
    
    & \scalebox{0.78}{192} & 
    \textbf{\scalebox{0.78}{0.029}} & \textbf{\scalebox{0.78}{0.041}} & 
    {\scalebox{0.78}{0.102}} & {\scalebox{0.78}{0.160}} & 
    {\scalebox{0.78}{0.145}} & {\scalebox{0.78}{0.252}} & 
    {\scalebox{0.78}{0.064}} & {\scalebox{0.78}{0.097}} & 
    {\scalebox{0.78}{0.132}} & {\scalebox{0.78}{0.188}} & 
    {\scalebox{0.78}{0.992}} & {\scalebox{0.78}{0.781}} & 
    {\scalebox{0.78}{0.950}} & {\scalebox{0.78}{0.756}} \\ 
    
    & \scalebox{0.78}{336} & 
    \textbf{\scalebox{0.78}{0.031}} & \textbf{\scalebox{0.78}{0.044}} & 
    {\scalebox{0.78}{0.103}} & {\scalebox{0.78}{0.160}} & 
    {\scalebox{0.78}{0.187}} & {\scalebox{0.78}{0.306}} & 
    {\scalebox{0.78}{0.067}} & {\scalebox{0.78}{0.103}} & 
    {\scalebox{0.78}{0.132}} & {\scalebox{0.78}{0.187}} & 
    {\scalebox{0.78}{1.008}} & {\scalebox{0.78}{0.779}} & 
    {\scalebox{0.78}{1.039}} & {\scalebox{0.78}{0.804}} \\ 

    & \scalebox{0.78}{720} & 
    \textbf{\scalebox{0.78}{0.034}} & \textbf{\scalebox{0.78}{0.046}} & 
    {\scalebox{0.78}{0.104}} & {\scalebox{0.78}{0.160}} & 
    {\scalebox{0.78}{0.244}} & {\scalebox{0.78}{0.365}} & 
    {\scalebox{0.78}{0.071}} & {\scalebox{0.78}{0.109}} & 
    {\scalebox{0.78}{0.134}} & {\scalebox{0.78}{0.187}} & 
    {\scalebox{0.78}{1.006}} & {\scalebox{0.78}{0.778}} & 
    {\scalebox{0.78}{1.010}} & {\scalebox{0.78}{0.799}} \\ 

    \cline{2-16}

    & \scalebox{0.78}{avg} & 
    \textbf{\scalebox{0.78}{0.030}} & \textbf{\scalebox{0.78}{0.043}} & 
    {\scalebox{0.78}{0.103}} & {\scalebox{0.78}{0.160}} & 
    {\scalebox{0.78}{0.174}} & {\scalebox{0.78}{0.283}} & 
    {\scalebox{0.78}{0.065}} & {\scalebox{0.78}{0.099}} & 
    {\scalebox{0.78}{0.132}} & {\scalebox{0.78}{0.188}} & 
    {\scalebox{0.78}{0.999}} & {\scalebox{0.78}{0.779}} & 
    {\scalebox{0.78}{1.000}} & {\scalebox{0.78}{0.788}} \\

    \hline
    \bottomrule
  \end{tabular}
  \end{small}
}
\end{table*}

Same as the forecasting task, the performance of the imputation task under \textbf{full-data} and \textbf{zero-shot} conditions are shown in Table \ref{tab:imputation_full} and Table \ref{tab:imputation_zero_cross_single}, respectively.

\subsection{Classification}\label{appendix:exp_3}
\begin{table*}[htbp]
  \caption{Model comparison in classification. The experimental results contain the performance of the three most representative approaches (OneFitsAll, PatchTST, and TimesNet) and proposed \myformer on all 35 classification datasets, which includes four metrics: accuracy (Acc), precision (Pre), recall (Rec), and F1 score (F1).}\label{tab:classification_full}
  \vskip 0.05in
  \centering
  \resizebox{1.0\columnwidth}{!}{
  \begin{threeparttable}
  \begin{small}
  \renewcommand{\multirowsetup}{\centering}
  \setlength{\tabcolsep}{1pt}

    \end{small}
  \end{threeparttable}
  }
\end{table*}
\begin{table*}[htbp]
\caption{Comparison (\textbf{Part-1}) of the complete performance on \textbf{few-shot classification} task. Where \emph{SourceDomain}$\rightarrow$\emph{TragetDomain} indicates that the model is first pre-trained in the \emph{SourceDomain} train set, subsequently, the parameters are fine-tuned in partial (5\%/10\%) samples of the \emph{TargetDomain} train set and finally predicted on the \emph{TargetDomain} test set.}\label{tab:classification_few_1}
\vskip 0.1in
\centering
\resizebox{0.9\columnwidth}{!}{
\begin{threeparttable}
\begin{small}
\renewcommand{\multirowsetup}{\centering}
\setlength{\tabcolsep}{7pt}
\begin{tabular}{c|c|c|cccc|c}
\toprule
\hline

\multicolumn{2}{c}{Scenarios} & Models & Accuracy (\%) & Precision (\%) & Recall (\%) & F1 (\%) & Avg (\%) \\

\midrule
\multirow{16}{*}[-3ex]{\rotatebox{90}{\shortstack{DistalPhalanxTW \\ $\downarrow$ \\ ProximalPhalanxOutlineCorrect}}}

& \multirow{4}{*}[0ex]{\shortstack{Full Data}}
& MICN & 84.88 & 85.82 & \textbf{78.42} & \textbf{80.73} & 82.46 \\
& & TimesNet & \textbf{85.22} & \textbf{88.80} & 77.51 & 80.43 & \textbf{82.99} \\
& & PatchTST & 78.69 & 75.80 & 73.03 & 74.06 & 75.39 \\
& & OneFitsAll & 79.73 & 76.00 & 76.12 & 76.35 & 77.20 \\
\cmidrule(lr){2-8}

& \multirow{6}{*}[-1ex]{\shortstack{Few-shot \\ (10\%)}}
& Random init. & 75.26 & 71.93 & 67.01 & 68.21 & 70.60 \\
\cmidrule(lr){3-8}
& & MICN & 80.38 & 79.46 & 78.00 & 78.67 & 79.13 \\
& & TimesNet & 77.32 & 73.77 & 73.19 & 73.46 & 74.44 \\
& & PatchTST & 78.69 & 81.61 & 68.06 & 69.95 & 74.57 \\
& & OneFitsAll & 77.66 & 74.35 & 72.27 & 73.09 & 74.34 \\
& & \myformer & \textbf{86.25} & \textbf{84.11} & \textbf{84.11} & \textbf{84.11} & \textbf{84.65} \\
\cmidrule(lr){2-8}

& \multirow{6}{*}[-1ex]{\shortstack{Few-shot \\ (5\%)}}
& Random init. & 68.38 & 34.19 & 50.00 & 40.61 & 48.30 \\
\cmidrule(lr){3-8}
& & MICN & 81.79 & 80.32 & 75.87 & 77.42 & 78.85 \\
& & TimesNet & 78.01 & 78.99 & 67.85 & 69.60 & 73.61 \\
& & PatchTST & 78.35 & 76.20 & 71.02 & 72.52 & 74.52 \\
& & OneFitsAll & 76.98 & 74.22 & 69.43 & 70.78 & 72.85 \\
& & \myformer & \textbf{84.54} & \textbf{87.68} & \textbf{76.71} & \textbf{79.52} & \textbf{82.11} \\





\midrule
\multirow{16}{*}[-3ex]{\rotatebox{90}{\shortstack{SonyAIBORobotSurface2 \\ $\downarrow$ \\ Chinatown}}}

& \multirow{4}{*}[0ex]{\shortstack{Full Data}}
& MICN & 63.27 & 55.17 & 55.49 & 55.27 & 57.30 \\
& & TimesNet & \textbf{97.96} & \textbf{96.53} & \textbf{98.59} & \textbf{97.49} & \textbf{97.64} \\
& & PatchTST & 95.63 & 94.38 & 94.67 & 94.52 & 94.80 \\
& & OneFitsAll & 96.79 & 94.94 & 97.46 & 96.08 & 96.32 \\
\cmidrule(lr){2-8}

& \multirow{6}{*}[-1ex]{\shortstack{Few-shot \\ (10\%)}}
& Random init. & 66.79 & 64.76 & 67.79 & 66.11 & 66.36 \\
\cmidrule(lr){3-8}
& & MICN & 27.41 & 13.70 & 50.00 & 21.51 & 28.15 \\
& & TimesNet & 91.25 & 89.65 & 88.02 & 88.78 & 89.43 \\
& & PatchTST & 80.76 & 87.95 & 65.22 & 67.52 & 75.36 \\
& & OneFitsAll & 78.13 & 74.21 & 65.40 & 67.21 & 71.24 \\
& & \myformer & \textbf{94.17} & \textbf{91.23} & \textbf{95.98} & \textbf{93.10} & \textbf{93.62} \\
\cmidrule(lr){2-8}

& \multirow{6}{*}[-1ex]{\shortstack{Few-shot \\ (5\%)}}
& Random init. & 27.41 & 13.70 & 50.00 & 21.51 & 28.16 \\
\cmidrule(lr){3-8}
& & MICN & 27.41 & 13.70 & 50.00 & 21.51 & 28.15 \\
& & TimesNet & 27.41 & 13.70 & 50.00 & 21.51 & 28.15 \\
& & PatchTST & 81.05 & 84.63 & 66.75 & 69.33 & 75.44 \\
& & OneFitsAll & 27.41 & 13.70 & 50.00 & 21.51 & 28.15 \\
& & \myformer & \textbf{91.55} & \textbf{90.16} & \textbf{88.22} & \textbf{89.12} & \textbf{89.76} \\

\midrule
\multirow{16}{*}[-3ex]{\rotatebox{90}{\shortstack{Trace \\ $\downarrow$ \\ DistalPhalanxOutlineCorrect}}}

& \multirow{4}{*}[0ex]{\shortstack{Full Data}}
& MICN & \textbf{80.07} & \textbf{81.39} & \textbf{77.70} & \textbf{78.48} & \textbf{79.41} \\
& & TimesNet & 76.45 & 75.78 & 75.96 & 75.86 & 76.01 \\
& & PatchTST & 71.74 & 72.44 & 68.57 & 68.85 & 70.40 \\
& & OneFitsAll & 74.28 & 77.38 & 70.50 & 70.82 & 73.25 \\
\cmidrule(lr){2-8}

& \multirow{6}{*}[-1ex]{\shortstack{Few-shot \\ (10\%)}}
& Random init. & 58.33 & 29.17 & 50.00 & 36.84 & 43.59 \\
\cmidrule(lr){3-8}
& & MICN & 75.00 & 75.48 & 72.48 & 73.01 & 73.99 \\
& & TimesNet & 71.74 & 72.24 & 68.70 & 69.00 & 70.42 \\
& & PatchTST & 68.48 & 71.72 & 63.66 & 62.65 & 66.62 \\
& & OneFitsAll & 71.74 & 70.97 & 70.19 & 70.43 & 70.83 \\
& & \myformer & \textbf{75.36} & \textbf{76.68} & \textbf{74.42} & \textbf{74.98} & \textbf{75.36} \\
\cmidrule(lr){2-8}

& \multirow{6}{*}[-1ex]{\shortstack{Few-shot \\ (5\%)}}
& Random init. & 58.33 & 29.17 & 50.00 & 36.84 & 43.59 \\
\cmidrule(lr){3-8}
& & MICN & 64.49 & 64.14 & 60.25 & 59.37 & 62.06 \\
& & TimesNet & 67.75 & 77.69 & 61.68 & 58.74 & 66.47 \\
& & PatchTST & 64.13 & 69.63 & 57.83 & 53.70 & 61.32 \\
& & OneFitsAll & 63.04 & 69.96 & 56.27 & 50.72 & 59.99 \\
& & \myformer & \textbf{71.01} & \textbf{70.90} & \textbf{68.32} & \textbf{68.64} & \textbf{69.72} \\

\hline
\bottomrule
\end{tabular}
\end{small}
\end{threeparttable}
}
\end{table*}
\begin{table*}[htbp]
\caption{Comparison (\textbf{Part-2}) of the complete performance on \textbf{few-shot classification} task. Where \emph{SourceDomain}$\rightarrow$\emph{TragetDomain} indicates that the model is first pre-trained in the \emph{SourceDomain} train set, subsequently, the parameters are fine-tuned in partial (5\%/10\%) samples of the \emph{TargetDomain} train set and finally predicted on the \emph{TargetDomain} test set.}\label{tab:classification_few_2}
\vskip 0.1in
\centering
\resizebox{0.9\columnwidth}{!}{
\begin{threeparttable}
\begin{small}
\renewcommand{\multirowsetup}{\centering}
\setlength{\tabcolsep}{7pt}
\begin{tabular}{c|c|c|cccc|c}
\toprule
\hline

\multicolumn{2}{c}{Scenarios} & Models & Accuracy (\%) & Precision (\%) & Recall (\%) & F1 (\%) & Avg (\%) \\

\midrule
\multirow{16}{*}[-3ex]{\rotatebox{90}{\shortstack{SonyAIBORobotSurface2 \\ $\downarrow$ \\ SonyAIBORobotSurface1}}}

& \multirow{4}{*}[0ex]{\shortstack{Full Data}}
& MICN & 74.38 & 80.85 & 77.45 & 74.07 & 76.69 \\
& & TimesNet & 76.37 & 76.62 & 77.14 & 76.30 & 76.61 \\
& & PatchTST & 69.55 & 76.90 & 72.89 & 69.00 & 72.08 \\
& & OneFitsAll & \textbf{79.37} & \textbf{82.30} & \textbf{81.49} & \textbf{79.34} & \textbf{80.62} \\
\cmidrule(lr){2-8}

& \multirow{6}{*}[-1ex]{\shortstack{Few-shot \\ (10\%)}}
& Random init. & 58.51 & 54.26 & 52.51 & 40.88 & 46.53 \\
\cmidrule(lr){3-8}
& & MICN & 62.40 & 61.80 & 61.92 & 61.83 & 61.99 \\
& & TimesNet & 64.23 & 69.14 & 67.07 & 63.82 & 66.07 \\
& & PatchTST & 63.56 & 65.35 & 65.15 & 63.55 & 64.40 \\
& & OneFitsAll & 64.56 & 63.85 & 63.86 & 63.85 & 64.03 \\
& & \myformer & \textbf{91.51} & \textbf{91.36} & \textbf{91.32} & \textbf{91.34} & \textbf{91.38} \\
\cmidrule(lr){2-8}

& \multirow{6}{*}[-1ex]{\shortstack{Few-shot \\ (5\%)}}
& Random init. & 47.09 & 28.55 & 42.45 & 33.61 & 37.94 \\
\cmidrule(lr){3-8}
& & MICN & 56.91 & 52.55 & 50.38 & 39.85 & 49.92 \\
& & TimesNet & 60.23 & 68.97 & 54.02 & 45.82 & 57.26 \\
& & PatchTST & 57.07 & 28.54 & 50.00 & 36.33 & 42.98 \\
& & OneFitsAll & 57.07 & 28.54 & 50.00 & 36.33 & 42.98 \\
& & \myformer & \textbf{85.03} & \textbf{86.45} & \textbf{87.90} & \textbf{85.01} & \textbf{86.09} \\





\midrule
\multirow{16}{*}[-3ex]{\rotatebox{90}{\shortstack{HouseTwenty\\ $\downarrow$ \\ GunPointMaleVersusFemale}}}

& \multirow{4}{*}[0ex]{\shortstack{Full Data}}
& MICN & 98.73 & 98.77 & 98.70 & 98.73 & 98.73 \\
& & TimesNet & \textbf{99.05} & \textbf{99.07} & \textbf{99.03} & \textbf{99.05} & \textbf{99.05} \\
& & PatchTST & 97.47 & 97.44 & 97.49 & 97.46 & 97.47 \\
& & OneFitsAll & 94.62 & 95.00 & 94.37 & 93.24 & 94.31 \\
\cmidrule(lr){2-8}

& \multirow{6}{*}[-1ex]{\shortstack{Few-shot \\ (10\%)}}
& Random init. & 87.66 & 89.45 & 88.22 & 87.60 & 88.23 \\
\cmidrule(lr){3-8}
& & MICN & 75.63 & 78.43 & 76.39 & 75.33 & 76.44 \\
& & TimesNet & 81.96 & 82.73 & 81.55 & 81.68 & 81.98 \\
& & PatchTST & 96.84 & 96.80 & 96.89 & 96.83 & 96.84 \\
& & OneFitsAll & 76.27 & 78.42 & 76.93 & 76.06 & 76.92 \\
& & \myformer & \textbf{98.10} & \textbf{98.26} & \textbf{98.00} & \textbf{98.09} & \textbf{98.11} \\
\cmidrule(lr){2-8}

& \multirow{6}{*}[-1ex]{\shortstack{Few-shot \\ (5\%)}}
& Random init. & 66.77 & 69.86 & 67.67 & 66.08 & 67.59 \\
\cmidrule(lr){3-8}
& & MICN & 69.30 & 72.46 & 70.17 & 68.72 & 70.16 \\
& & TimesNet & 52.53 & 51.64 & 51.16 & 47.92 & 50.81 \\
& & PatchTST & 84.18 & 86.17 & 84.78 & 84.09 & 84.81 \\
& & OneFitsAll & 68.99 & 73.31 & 70.00 & 68.12 & 70.11 \\
& & \myformer & \textbf{85.76} & \textbf{85.79} & \textbf{85.64} & \textbf{85.70} & \textbf{85.72} \\

\midrule
\multirow{16}{*}[-3ex]{\rotatebox{90}{\shortstack{ProximalPhalanxOutlineCorrect \\ $\downarrow$ \\ HouseTwenty}}}

& \multirow{4}{*}[0ex]{\shortstack{Full Data}}
& MICN & 75.63 & 75.00 & 75.13 & 75.06 & 75.21 \\
& & TimesNet & 63.87 & 64.97 & 58.93 & 56.84 & 61.15 \\
& & PatchTST & \textbf{85.71} & \textbf{85.29} & \textbf{85.48} & \textbf{85.38} & \textbf{85.47} \\
& & OneFitsAll & 64.71 & 63.82 & 61.58 & 61.36 & 62.78 \\
\cmidrule(lr){2-8}

& \multirow{6}{*}[-1ex]{\shortstack{Few-shot \\ (10\%)}}
& Random init. & 61.34 & 67.68 & 54.55 & 47.57 & 57.78 \\
\cmidrule(lr){3-8}
& & MICN & 59.66 & 57.71 & 56.68 & 56.30 & 57.59 \\
& & TimesNet & 66.39 & 65.42 & 65.23 & 65.31 & 65.59 \\
& & PatchTST & 63.03 & 61.82 & 61.51 & 61.59 & 61.98 \\
& & OneFitsAll & 60.50 & 69.82 & 53.28 & 44.43 & 57.01 \\
& & \myformer & \textbf{76.47} & \textbf{82.18} & \textbf{72.55} & \textbf{72.94} & \textbf{76.03} \\
\cmidrule(lr){2-8}

& \multirow{6}{*}[-1ex]{\shortstack{Few-shot \\ (5\%)}}
& Random init. & 54.62 & 43.44 & 47.93 & 39.83 & 46.45 \\
\cmidrule(lr){3-8}
& & MICN & 47.06 & 55.12 & 52.70 & 42.93 & 49.45 \\
& & TimesNet & 42.02 & 21.01 & 50.00 & 29.59 & 35.65 \\
& & PatchTST & 42.02 & 21.01 & 50.00 & 29.59 & 35.65 \\
& & OneFitsAll & 42.02 & 21.01 & 50.00 & 29.59 & 35.65 \\
& & \myformer & \textbf{73.95} & \textbf{77.92} & \textbf{70.10} & \textbf{70.24} & \textbf{73.05} \\

\hline
\bottomrule
\end{tabular}
\end{small}
\end{threeparttable}
}
\end{table*}
\begin{table*}[htbp]
\caption{Comparison (\textbf{Part-3}) of the complete performance on \textbf{few-shot classification} task. Where \emph{SourceDomain}$\rightarrow$\emph{TragetDomain} indicates that the model is first pre-trained in the \emph{SourceDomain} train set, subsequently, the parameters are fine-tuned in partial (5\%/10\%) samples of the \emph{TargetDomain} train set and finally predicted on the \emph{TargetDomain} test set.}\label{tab:classification_few_3}
\vskip 0.1in
\centering
\resizebox{0.9\columnwidth}{!}{
\begin{threeparttable}
\begin{small}
\renewcommand{\multirowsetup}{\centering}
\setlength{\tabcolsep}{7pt}
\begin{tabular}{c|c|c|cccc|c}
\toprule
\hline

\multicolumn{2}{c}{Scenarios} & Models & Accuracy (\%) & Precision (\%) & Recall (\%) & F1 (\%) & Avg (\%) \\

\midrule
\multirow{16}{*}[-3ex]{\rotatebox{90}{\shortstack{PigArtPressure\\ $\downarrow$ \\ InsectEPGRegularTrain}}}

& \multirow{4}{*}[0ex]{\shortstack{Full Data}}
& MICN & \textbf{86.78} & 74.94 & \textbf{83.13} & \textbf{79.83} & \textbf{81.17} \\
& & TimesNet & 78.10 & 76.27 & 74.82 & 71.85 & 75.26 \\
& & PatchTST & 83.13 & \textbf{84.74} & 71.74 & 72.78 & 78.10 \\
& & OneFitsAll & 83.03 & 84.46 & 76.61 & 73.89 & 79.50 \\
\cmidrule(lr){2-8}

& \multirow{6}{*}[-1ex]{\shortstack{Few-shot \\ (10\%)}}
& Random init. & 47.39 & 15.80 & 33.33 & 21.44 & 29.49 \\
\cmidrule(lr){3-8}
& & MICN & 35.74 & 11.91 & 33.33 & 17.55 & 24.63 \\
& & TimesNet & 83.13 & 55.98 & 66.67 & 60.30 & 66.52 \\
& & PatchTST & 48.19 & 48.67 & 45.81 & 39.85 & 45.63 \\
& & OneFitsAll & 83.13 & 57.92 & 66.67 & 61.63 & 67.34 \\
& & \myformer & \textbf{93.17} & \textbf{93.74} & \textbf{87.53} & \textbf{89.68} & \textbf{91.03} \\
\cmidrule(lr){2-8}

& \multirow{6}{*}[-1ex]{\shortstack{Few-shot \\ (5\%)}}
& Random init. & 35.74 & 11.91 & 33.33 & 17.55 & 24.63 \\
\cmidrule(lr){3-8}
& & MICN & 35.74 & 11.91 & 33.33 & 17.55 & 24.63 \\
& & TimesNet & 35.74 & 15.86 & 33.33 & 21.50 & 26.61 \\
& & PatchTST & 37.35 & 20.95 & 37.35 & 24.50 & 30.04 \\
& & OneFitsAll & 35.74 & 11.91 & 33.33 & 17.55 & 24.63 \\
& & \myformer & \textbf{83.85} & \textbf{84.37} & \textbf{78.78} & \textbf{80.71} & \textbf{81.93} \\

\midrule
\multirow{16}{*}[-3ex]{\rotatebox{90}{\shortstack{SonyAIBORobotSurface1 \\ $\downarrow$ \\ InsectEPGSmallTrain}}}

& \multirow{4}{*}[0ex]{\shortstack{Full Data}}
& MICN & 75.17 & 75.57 & 70.97 & 72.70 & 73.60 \\
& & TimesNet & 83.03 & 84.46 & 76.61 & 73.89 & 79.50 \\
& & PatchTST & \textbf{83.53} & \textbf{83.97} & \textbf{78.86} & \textbf{80.78} & \textbf{81.78} \\
& & OneFitsAll & 75.17 & 75.57 & 70.97 & 72.70 & 73.60 \\
\cmidrule(lr){2-8}

& \multirow{6}{*}[-1ex]{\shortstack{Few-shot \\ (10\%)}}
& Random init. & 47.39 & 15.80 & 33.33 & 21.44 & 29.49 \\
\cmidrule(lr){3-8}
& & MICN & 35.74 & 11.91 & 33.33 & 17.55 & 24.63 \\
& & TimesNet & 38.55 & 55.98 & 35.31 & 30.70 & 40.14 \\
& & PatchTST & 39.76 & 46.58 & 39.93 & 31.79 & 39.51 \\
& & OneFitsAll & 35.74 & 11.91 & 33.33 & 17.55 & 24.63 \\
& & \myformer & \textbf{83.13} & \textbf{57.92} & \textbf{66.67} & \textbf{61.63} & \textbf{67.34} \\
\cmidrule(lr){2-8}

& \multirow{6}{*}[-1ex]{\shortstack{Few-shot \\ (5\%)}}
& Random init. & 47.39 & 15.80 & 33.33 & 21.44 & 29.49 \\
\cmidrule(lr){3-8}
& & MICN & 35.74 & 11.91 & 33.33 & 17.55 & 24.63 \\
& & TimesNet & 26.98 & 39.18 & 24.72 & 21.49 & 28.09 \\
& & PatchTST & 27.71 & 45.16 & 36.68 & 26.44 & 33.99 \\
& & OneFitsAll & 35.74 & 11.91 & 33.33 & 17.55 & 24.63 \\
& & \myformer & \textbf{83.13} & \textbf{55.16} & \textbf{66.67} & \textbf{60.32} & \textbf{66.32} \\

\midrule
\multirow{16}{*}[-3ex]{\rotatebox{90}{\shortstack{Earthquakes \\ $\downarrow$ \\ ItalyPowerDemand}}}

& \multirow{4}{*}[0ex]{\shortstack{Full Data}}
& MICN & 95.63 & 95.63 & 95.63 & 95.63 & 95.63 \\
& & TimesNet & \textbf{97.18} & \textbf{97.18} & \textbf{97.18} & \textbf{97.18} & \textbf{97.18} \\
& & PatchTST & 96.60 & 96.62 & 96.60 & 96.60 & 96.61 \\
& & OneFitsAll & 96.99 & 96.99 & 96.99 & 96.99 & 96.99 \\
\cmidrule(lr){2-8}

& \multirow{6}{*}[-1ex]{\shortstack{Few-shot \\ (10\%)}}
& Random init. & 78.92 & 79.58 & 78.90 & 78.87 & 79.07 \\
\cmidrule(lr){3-8}
& & MICN & 90.28 & 91.24 & 90.26 & 90.22 & 90.50 \\
& & TimesNet & 86.59 & 87.02 & 86.60 & 86.55 & 86.69 \\
& & PatchTST & 87.37 & 88.40 & 87.34 & 87.28 & 87.59 \\
& & OneFitsAll & 94.66 & 94.75 & 94.65 & 94.65 & 94.68 \\
& & \myformer & \textbf{96.31} & \textbf{96.34} & \textbf{96.30} & \textbf{96.31} & \textbf{96.31} \\
\cmidrule(lr){2-8}

& \multirow{6}{*}[-1ex]{\shortstack{Few-shot \\ (5\%)}}
& Random init. & 70.86 & 72.47 & 70.89 & 70.63 & 71.21 \\
\cmidrule(lr){3-8}
& & MICN & 70.94 & 77.75 & 71.01 & 69.09 & 72.19 \\
& & TimesNet & 73.28 & 77.08 & 73.33 & 72.33 & 74.01 \\
& & PatchTST & 60.93 & 64.96 & 61.01 & 58.20 & 61.27 \\
& & OneFitsAll & 83.19 & 85.25 & 83.22 & 82.95 & 83.65 \\
& & \myformer & \textbf{94.85} & \textbf{94.85} & \textbf{94.85} & \textbf{94.85} & \textbf{94.85} \\

\hline
\bottomrule
\end{tabular}
\end{small}
\end{threeparttable}
}
\end{table*}
\begin{table*}[htbp]
\caption{Comparison (\textbf{Part-4}) of the complete performance on \textbf{few-shot classification} task. Where \emph{SourceDomain}$\rightarrow$\emph{TragetDomain} indicates that the model is first pre-trained in the \emph{SourceDomain} train set, subsequently, the parameters are fine-tuned in partial (5\%/10\%) samples of the \emph{TargetDomain} train set and finally predicted on the \emph{TargetDomain} test set.}\label{tab:classification_few_4}
\vskip 0.1in
\centering
\resizebox{0.9\columnwidth}{!}{
\begin{threeparttable}
\begin{small}
\renewcommand{\multirowsetup}{\centering}
\setlength{\tabcolsep}{7pt}
\begin{tabular}{c|c|c|cccc|c}
\toprule
\hline

\multicolumn{2}{c}{Scenarios} & Models & Accuracy (\%) & Precision (\%) & Recall (\%) & F1 (\%) & Avg (\%) \\

\midrule
\multirow{16}{*}[-3ex]{\rotatebox{90}{\shortstack{DistalPhalanxOutlineAgeGroup \\ $\downarrow$ \\ MiddlePhalanxOutlineAgeGroup}}}

& \multirow{4}{*}[0ex]{\shortstack{Full Data}}
& MICN & 64.94 & 66.29 & 48.22 & 49.50 & 57.24 \\
& & TimesNet & \textbf{66.23} & \textbf{75.40} & 48.98 & 51.02 & \textbf{60.41} \\
& & PatchTST & 65.58 & 72.43 & \textbf{49.12} & \textbf{51.37} & 59.62 \\
& & OneFitsAll & 64.29 & 72.26 & 47.59 & 49.46 & 58.40 \\
\cmidrule(lr){2-8}

& \multirow{6}{*}[-1ex]{\shortstack{Few-shot \\ (10\%)}}
& Random init. & 19.22 & 7.21 & 30.13 & 11.62 & 17.61 \\
\cmidrule(lr){3-8}
& & MICN & 57.14 & 51.64 & \textbf{49.92} & \textbf{50.35} & 52.26 \\
& & TimesNet & 48.70 & 48.76 & 49.42 & 46.54 & 48.36 \\
& & PatchTST & 45.45 & 46.03 & 42.77 & 40.81 & 43.76 \\
& & OneFitsAll & 47.40 & 28.88 & 39.98 & 33.06 & 37.33 \\
& & \myformer & \textbf{60.39} & \textbf{59.20} & 45.05 & 46.23 & \textbf{52.72} \\
\cmidrule(lr){2-8}

& \multirow{6}{*}[-1ex]{\shortstack{Few-shot \\ (5\%)}}
& Random init. & 18.83 & 6.28 & 33.33 & 10.56 & 17.25 \\
\cmidrule(lr){3-8}
& & MICN & 46.10 & 42.96 & 45.04 & 42.89 & 44.25 \\
& & TimesNet & 47.40 & 44.67 & 47.32 & 45.21 & 46.15 \\
& & PatchTST & 40.26 & 43.72 & 47.03 & 40.26 & 42.82 \\
& & OneFitsAll & 35.71 & 34.12 & 37.32 & 32.00 & 34.79 \\
& & \myformer & \textbf{51.95} & \textbf{49.36} & \textbf{48.38} & \textbf{46.92} & \textbf{49.15} \\

\midrule
\multirow{16}{*}[-3ex]{\rotatebox{90}{\shortstack{BeetleFly\\ $\downarrow$ \\ PowerCons}}}

& \multirow{4}{*}[0ex]{\shortstack{Full Data}}
& MICN & 95.56 & 95.65 & 95.56 & 95.55 & 95.58 \\
& & TimesNet & \textbf{100.00} & \textbf{100.00} & \textbf{100.00} & \textbf{100.00} & \textbf{100.00} \\
& & PatchTST & 98.89 & 98.91 & 98.89 & 98.89 & 98.89 \\
& & OneFitsAll & 98.33 & 98.39 & 98.33 & 98.33 & 98.34 \\
\cmidrule(lr){2-8}

& \multirow{6}{*}[-1ex]{\shortstack{Few-shot \\ (10\%)}}
& Random init. & 50.00 & 25.00 & 50.00 & 33.33 & 39.58 \\
\cmidrule(lr){3-8}
& & MICN & 83.89 & 84.10 & 83.89 & 83.86 & 83.94 \\
& & TimesNet & 83.89 & 84.40 & 83.89 & 83.83 & 84.00 \\
& & PatchTST & 90.00 & 90.72 & 90.00 & 89.96 & 90.17 \\
& & OneFitsAll & 97.23 & 97.23 & 97.23 & 97.23 & 97.23 \\
& & \myformer & \textbf{98.89} & \textbf{98.89} & \textbf{98.89} & \textbf{98.89} & \textbf{98.89} \\
\cmidrule(lr){2-8}

& \multirow{6}{*}[-1ex]{\shortstack{Few-shot \\ (5\%)}}
& Random init. & 50.00 & 25.00 & 50.00 & 33.33 & 39.58 \\
\cmidrule(lr){3-8}
& & MICN & 62.78 & 66.56 & 62.78 & 60.53 & 63.16 \\
& & TimesNet & 78.89 & 79.12 & 78.89 & 78.85 & 78.94 \\
& & PatchTST & 82.78 & 83.11 & 82.78 & 82.73 & 82.84 \\
& & OneFitsAll & 90.00 & 90.18 & 90.00 & 89.99 & 90.04 \\
& & \myformer & \textbf{95.00} & \textbf{95.01} & \textbf{95.00} & \textbf{95.00} & \textbf{95.00} \\

\midrule
\multirow{16}{*}[-3ex]{\rotatebox{90}{\shortstack{EOGHorizontalSignal \\ $\downarrow$ \\ ProximalPhalanxOutlineAgeGroup}}}

& \multirow{4}{*}[0ex]{\shortstack{Full Data}}
& MICN & 86.34 & 77.94 & 80.22 & 78.94 & 80.86 \\
& & TimesNet & 86.34 & 80.16 & 69.15 & 71.11 & 76.69 \\
& & PatchTST & \textbf{89.27} & \textbf{86.13} & \textbf{82.49} & \textbf{84.07} & \textbf{85.49} \\
& & OneFitsAll & 88.78 & 84.33 & 82.11 & 83.12 & 84.59 \\
\cmidrule(lr){2-8}

& \multirow{6}{*}[-1ex]{\shortstack{Few-shot \\ (10\%)}}
& Random init. & 41.46 & 13.82 & 28.33 & 18.58 & 25.55 \\
\cmidrule(lr){3-8}
& & MICN & 85.85 & 77.67 & 67.19 & 68.46 & 74.79 \\
& & TimesNet & 85.85 & 76.64 & 70.35 & 72.08 & 76.23 \\
& & PatchTST & 87.80 & 81.74 & 79.77 & 80.67 & 82.50 \\
& & OneFitsAll & 87.32 & 85.20 & 69.90 & 71.62 & 78.51 \\
& & \myformer & \textbf{89.27} & \textbf{91.40} & \textbf{84.07} & \textbf{84.61} & \textbf{87.34} \\
\cmidrule(lr){2-8}

& \multirow{6}{*}[-1ex]{\shortstack{Few-shot \\ (5\%)}}
& Random init. & 36.59 & 12.19 & 24.96 & 16.39 & 22.54 \\
\cmidrule(lr){3-8}
& & MICN & 77.07 & 68.90 & 76.19 & 68.56 & 72.68 \\
& & TimesNet & 85.85 & \textbf{90.68} & 64.02 & 63.18 & 75.93 \\
& & PatchTST & 87.32 & 80.31 & 79.40 & 79.84 & 81.72 \\
& & OneFitsAll & 84.39 & 56.18 & 61.30 & 58.62 & 65.12 \\
& & \myformer & \textbf{87.80} & 80.97 & \textbf{84.52} & \textbf{82.48} & \textbf{83.94} \\

\hline
\bottomrule
\end{tabular}
\end{small}
\end{threeparttable}
}
\end{table*}

To validate the performance of the \textbf{wavebook} strategy on the classification task, complete experimental results contain the three most representative approaches (OneFitsAll, PatchTST, and TimesNet) and proposed \myformer on all 35 classification datasets of UCR archive \cite{UCRArchive2018}, as shown in Table \ref{tab:classification_full}.

In the main body of the experiments on \textbf{few-shot classification} task, \emph{Scenario-i: Source-i}$\rightarrow$\emph{Traget-i} ($i\in[0,..,7]$) are utilized to indicates the cross-domain adaptation scenarios between different TS domains, and the details of scenarios are summarized in Table \ref{tab:seven_scenarios}. Besides, the completed \textbf{few-shot classification} on cross-domain adaptation is demonstrated in Table \ref{tab:classification_few_1}, \ref{tab:classification_few_2}, \ref{tab:classification_few_3}, and \ref{tab:classification_few_4}, which includes four metrics: accuracy (Acc), precision (Pre), recall (Rec), and F1 score (F1).

\subsection{Ablation}\label{appendix:exp_4}
As shown in table \ref{tab:ablation}, when the \emph{wave as token} strategy is removed from the Transformer-based model (\textbf{w/o TP}), there demonstrates a significant performance degradation. Similar results are found for both ETTh1 and ETTm1 datasets, which proves the effectiveness of our approach.

\begin{table*}[htpb]
  \caption{Comparison of the ablation experiment, where \emph{w/o TP} indicates the model without the proposed \textbf{\emph{wave as token}} strategy. To guarantee fairness and effectiveness, \emph{w/o TP} and \emph{\myformer} have the same model structure and parameter size.}\label{tab:ablation}
  \vspace{5pt}
  \centering
  \resizebox{0.8\columnwidth}{!}{
  \begin{small}
  \renewcommand{\multirowsetup}{\centering}
  \setlength{\tabcolsep}{1.pt}
  \renewcommand\arraystretch{1.}
  \begin{tabular}{cc|cccc|cccc|cccc|cccc}
    \hline
    \hline
    \multicolumn{2}{c|}{\multirow{3}{*}{\scalebox{0.8}{Variant}}} & 
    \multicolumn{4}{c|}{\multirow{1}{*}{\rotatebox{0}{\scalebox{0.8}{full datal}}}} & 
    \multicolumn{4}{c|}{\multirow{1}{*}{\rotatebox{0}{\scalebox{0.8}{few shot}}}} & 
    \multicolumn{4}{c|}{\multirow{1}{*}{\rotatebox{0}{\scalebox{0.8}{zero shot (single)}}}} & 
    \multicolumn{4}{c}{\multirow{1}{*}{\rotatebox{0}{\scalebox{0.8}{zero shot (multi)}}}} \\
    \cline{3-18}
    
    & &
    \multicolumn{2}{c}{\multirow{1}{*}{\rotatebox{0}{\scalebox{0.8}{ETTh1}}}} & 
    \multicolumn{2}{c|}{\multirow{1}{*}{\rotatebox{0}{\scalebox{0.8}{ETTm1}}}} & 
    \multicolumn{2}{c}{\multirow{1}{*}{\rotatebox{0}{\scalebox{0.8}{ETTh1}}}} & 
    \multicolumn{2}{c|}{\multirow{1}{*}{\rotatebox{0}{\scalebox{0.8}{ETTm1}}}} & 
    \multicolumn{2}{c}{\multirow{1}{*}{\rotatebox{0}{\scalebox{0.8}{ETTh1}}}} & 
    \multicolumn{2}{c|}{\multirow{1}{*}{\rotatebox{0}{\scalebox{0.8}{ETTm1}}}} & 
    \multicolumn{2}{c}{\multirow{1}{*}{\rotatebox{0}{\scalebox{0.8}{ETTh1}}}} & 
    \multicolumn{2}{c}{\multirow{1}{*}{\rotatebox{0}{\scalebox{0.8}{ETTm1}}}} \\
    \cline{3-18}
    
    & &
    \scalebox{0.78}{MSE} & \scalebox{0.78}{MAE} & 
    \scalebox{0.78}{MSE} & \scalebox{0.78}{MAE} & 
    \scalebox{0.78}{MSE} & \scalebox{0.78}{MAE} & 
    \scalebox{0.78}{MSE} & \scalebox{0.78}{MAE} & 
    \scalebox{0.78}{MSE} & \scalebox{0.78}{MAE} & 
    \scalebox{0.78}{MSE} & \scalebox{0.78}{MAE} & 
    \scalebox{0.78}{MSE} & \scalebox{0.78}{MAE} & 
    \scalebox{0.78}{MSE} & \scalebox{0.78}{MAE} \\

    \hline

    \multirow{5}{*}{\rotatebox{90}{\scalebox{0.95}{w/o TP}}} & 
    \scalebox{0.78}{96} &
    {\scalebox{0.78}{0.543}} & {\scalebox{0.78}{0.515}} &
    {\scalebox{0.78}{0.455}} & {\scalebox{0.78}{0.446}} &
    {\scalebox{0.78}{0.613}} & {\scalebox{0.78}{0.560}} &
    {\scalebox{0.78}{0.465}} & {\scalebox{0.78}{0.455}} &
    {\scalebox{0.78}{0.691}} & {\scalebox{0.78}{0.615}} &
    {\scalebox{0.78}{0.555}} & {\scalebox{0.78}{0.522}} &
    {\scalebox{0.78}{0.665}} & {\scalebox{0.78}{0.593}} &
    {\scalebox{0.78}{0.582}} & {\scalebox{0.78}{0.536}} \\
    
    & \scalebox{0.78}{192} &
    {\scalebox{0.78}{0.577}} & {\scalebox{0.78}{0.534}} &
    {\scalebox{0.78}{0.491}} & {\scalebox{0.78}{0.468}} &
    {\scalebox{0.78}{0.639}} & {\scalebox{0.78}{0.573}} &
    {\scalebox{0.78}{0.501}} & {\scalebox{0.78}{0.477}} &
    {\scalebox{0.78}{0.703}} & {\scalebox{0.78}{0.622}} &
    {\scalebox{0.78}{0.588}} & {\scalebox{0.78}{0.541}} &
    {\scalebox{0.78}{0.677}} & {\scalebox{0.78}{0.602}} &
    {\scalebox{0.78}{0.608}} & {\scalebox{0.78}{0.553}} \\
    
    & \scalebox{0.78}{336} &
    {\scalebox{0.78}{0.597}} & {\scalebox{0.78}{0.546}} &
    {\scalebox{0.78}{0.525}} & {\scalebox{0.78}{0.487}} &
    {\scalebox{0.78}{0.660}} & {\scalebox{0.78}{0.587}} &
    {\scalebox{0.78}{0.540}} & {\scalebox{0.78}{0.499}} &
    {\scalebox{0.78}{0.711}} & {\scalebox{0.78}{0.629}} &
    {\scalebox{0.78}{0.608}} & {\scalebox{0.78}{0.555}} &
    {\scalebox{0.78}{0.682}} & {\scalebox{0.78}{0.608}} &
    {\scalebox{0.78}{0.631}} & {\scalebox{0.78}{0.569}} \\
    
    & \scalebox{0.78}{720} &
    {\scalebox{0.78}{0.609}} & {\scalebox{0.78}{0.571}} &
    {\scalebox{0.78}{0.581}} & {\scalebox{0.78}{0.520}} &
    {\scalebox{0.78}{0.660}} & {\scalebox{0.78}{0.604}} &
    {\scalebox{0.78}{0.588}} & {\scalebox{0.78}{0.524}} &
    {\scalebox{0.78}{0.710}} & {\scalebox{0.78}{0.644}} &
    {\scalebox{0.78}{0.646}} & {\scalebox{0.78}{0.577}} &
    {\scalebox{0.78}{0.683}} & {\scalebox{0.78}{0.619}} &
    {\scalebox{0.78}{0.674}} & {\scalebox{0.78}{0.596}} \\
    
    \cline{2-18}
    
    & \scalebox{0.78}{Avg} &
    {\scalebox{0.78}{0.582}} & {\scalebox{0.78}{0.542}} &
    {\scalebox{0.78}{0.513}} & {\scalebox{0.78}{0.480}} &
    {\scalebox{0.78}{0.643}} & {\scalebox{0.78}{0.581}} &
    {\scalebox{0.78}{0.524}} & {\scalebox{0.78}{0.489}} &
    {\scalebox{0.78}{0.704}} & {\scalebox{0.78}{0.628}} &
    {\scalebox{0.78}{0.599}} & {\scalebox{0.78}{0.548}} &
    {\scalebox{0.78}{0.677}} & {\scalebox{0.78}{0.606}} &
    {\scalebox{0.78}{0.623}} & {\scalebox{0.78}{0.564}} \\

    \hline
    
    \multirow{5}{*}{\rotatebox{90}{\scalebox{0.95}{\myformer}}} & 
    \scalebox{0.78}{96} &
    {\scalebox{0.78}{0.363}} & {\scalebox{0.78}{0.388}} &
    {\scalebox{0.78}{0.324}} & {\scalebox{0.78}{0.342}} &
    {\scalebox{0.78}{0.432}} & {\scalebox{0.78}{0.409}} &
    {\scalebox{0.78}{0.337}} & {\scalebox{0.78}{0.350}} &
    {\scalebox{0.78}{0.466}} & {\scalebox{0.78}{0.460}} &
    {\scalebox{0.78}{0.400}} & {\scalebox{0.78}{0.407}} &
    {\scalebox{0.78}{0.438}} & {\scalebox{0.78}{0.419}} &
    {\scalebox{0.78}{0.379}} & {\scalebox{0.78}{0.398}} \\
    
    & \scalebox{0.78}{192} &
    {\scalebox{0.78}{0.386}} & {\scalebox{0.78}{0.409}} &
    {\scalebox{0.78}{0.349}} & {\scalebox{0.78}{0.357}} &
    {\scalebox{0.78}{0.456}} & {\scalebox{0.78}{0.428}} &
    {\scalebox{0.78}{0.364}} & {\scalebox{0.78}{0.363}} &
    {\scalebox{0.78}{0.505}} & {\scalebox{0.78}{0.483}} &
    {\scalebox{0.78}{0.415}} & {\scalebox{0.78}{0.417}} &
    {\scalebox{0.78}{0.449}} & {\scalebox{0.78}{0.439}} &
    {\scalebox{0.78}{0.389}} & {\scalebox{0.78}{0.404}} \\
    
    & \scalebox{0.78}{336} &
    {\scalebox{0.78}{0.414}} & {\scalebox{0.78}{0.413}} &
    {\scalebox{0.78}{0.385}} & {\scalebox{0.78}{0.375}} &
    {\scalebox{0.78}{0.489}} & {\scalebox{0.78}{0.453}} &
    {\scalebox{0.78}{0.389}} & {\scalebox{0.78}{0.388}} &
    {\scalebox{0.78}{0.535}} & {\scalebox{0.78}{0.501}} &
    {\scalebox{0.78}{0.442}} & {\scalebox{0.78}{0.434}} &
    {\scalebox{0.78}{0.471}} & {\scalebox{0.78}{0.467}} &
    {\scalebox{0.78}{0.414}} & {\scalebox{0.78}{0.415}} \\
    
    & \scalebox{0.78}{720} &
    {\scalebox{0.78}{0.467}} & {\scalebox{0.78}{0.460}} &
    {\scalebox{0.78}{0.412}} & {\scalebox{0.78}{0.402}} &
    {\scalebox{0.78}{0.531}} & {\scalebox{0.78}{0.502}} &
    {\scalebox{0.78}{0.441}} & {\scalebox{0.78}{0.417}} &
    {\scalebox{0.78}{0.543}} & {\scalebox{0.78}{0.529}} &
    {\scalebox{0.78}{0.481}} & {\scalebox{0.78}{0.459}} &
    {\scalebox{0.78}{0.484}} & {\scalebox{0.78}{0.473}} &
    {\scalebox{0.78}{0.461}} & {\scalebox{0.78}{0.446}} \\
    
    \cline{2-18}
    
    & \scalebox{0.78}{Avg} &
    {\scalebox{0.78}{0.407}} & {\scalebox{0.78}{0.418}} &
    {\scalebox{0.78}{0.368}} & {\scalebox{0.78}{0.369}} &
    {\scalebox{0.78}{0.477}} & {\scalebox{0.78}{0.448}} &
    {\scalebox{0.78}{0.383}} & {\scalebox{0.78}{0.380}} &
    {\scalebox{0.78}{0.512}} & {\scalebox{0.78}{0.493}} &
    {\scalebox{0.78}{0.434}} & {\scalebox{0.78}{0.429}} &
    {\scalebox{0.78}{0.461}} & {\scalebox{0.78}{0.449}} &
    {\scalebox{0.78}{0.411}} & {\scalebox{0.78}{0.416}} \\

    
    
    
    \hline
    \hline
  \end{tabular}
  \end{small}
}
\end{table*}

\end{document}